\def\BibTeX{{\rm B\kern-.05em{\sc i\kern-.025em b}\kern-.08em
    T\kern-.1667em\lower.7ex\hbox{E}\kern-.125emX}}
\DeclareMathOperator*{\argmax}{argmax}
\begin{document}
\title{Integrated Decision Making and Trajectory Planning for Autonomous Driving Under Multimodal Uncertainties: A Bayesian Game Approach}
\author{Zhenmin Huang, Tong Li, Shaojie Shen, and Jun Ma
\thanks{Zhenmin Huang, Tong Li, Shaojie Shen, and Jun Ma are with the Department of Electronic and Computer Engineering, The Hong Kong University of Science and Technology, Hong Kong SAR, China (e-mail: zhuangdf@connect.ust.hk; tlibm@connect.ust.hk; eeshaojie@ust.hk; jun.ma@ust.hk).

This work has been submitted to the IEEE for possible publication.
Copyright may be transferred without notice, after which this version may
no longer be accessible.
}}

% \markboth{IEEE Transactions on Intelligent Transportation Systems}{}
\maketitle

\begin{abstract} 
Modeling the interaction between traffic agents is a key issue in designing safe and non-conservative maneuvers in autonomous driving. This problem can be challenging when multi-modality and behavioral uncertainties are engaged. Existing methods either fail to plan interactively or consider unimodal behaviors that could lead to catastrophic results. In this paper, we introduce an integrated decision-making and trajectory planning framework based on Bayesian game (i.e., game of incomplete information). Human decisions inherently exhibit discrete characteristics and therefore are modeled as types of players in the game. A general solver based on no-regret learning is introduced to obtain a corresponding Bayesian Coarse Correlated Equilibrium, which captures the interaction between traffic agents in the multimodal context. With the attained equilibrium, decision-making and trajectory planning are performed simultaneously, and the resulting interactive strategy is shown to be optimal over the expectation of rivals' driving intentions. Closed-loop simulations on different traffic scenarios are performed to illustrate the generalizability and the effectiveness of the proposed framework.
\end{abstract}

\begin{IEEEkeywords}
Autonomous driving, game theory, multi-agent systems, decision-making, trajectory planning, game of incomplete information, Bayesian coarse correlated equilibrium, no-regret learning.
\end{IEEEkeywords}

\theoremstyle{definition}
\newtheorem{definition}{Definition}
\theoremstyle{definition}
\newtheorem{remark}{Remark}
\theoremstyle{definition}
\newtheorem{assumption}{Assumption}
\theoremstyle{definition}
\newtheorem{lemma}{Lemma}
\theoremstyle{definition}
\newtheorem{theorem}{Theorem}
\graphicspath{ {pictures/}}

\section{Introduction}

Recent years have witnessed great advancements in the technology of autonomous vehicles, which aims to provide a new transportation style that improves traffic efficiency and safety. On the other hand, many technical difficulties remain unsolved in decision-making and trajectory planning for autonomous vehicles across highly complex traffic scenarios. Within this topic, one of the key issues is to effectively model the interaction between autonomous vehicles and other traffic participants, such as human drivers. This problem is particularly challenging, as the rationality of humans is often limited and the preferences of human drivers vary, leading to large uncertainties in decisions and actions. Many of the approaches in the literature adopt an architecture such that prediction and trajectory planning are essentially decoupled. In particular, a prediction module is utilized to forecast the future trajectories of human-driving vehicles. Afterward, the human-driving vehicles are fed into the planning module as immutable objects, and the planning process is grounded on this immutable hypothesis. These approaches are limited as they largely fail to capture the interactive nature of driving behaviors by ignoring the influence of the planned trajectories on other traffic participants. Also, they can lead to the ``frozen robot'' problem~\cite{trautman2010unfreezing} and overly conservative behaviors when all possible trajectories are potentially unsafe according to the predictions.

In light of this, a branch of methods based on game theory is receiving increasing attention owing to its strength in modeling the interactions between agents. The game theory models the self-interested nature of all participants that drives them to a particular equilibrium, which is optimal in the sense that none of the participants can gain more profits by unilaterally altering its own strategy. By calculating the equilibrium of the game, the prediction and the planning are essentially coupled, resulting in an optimal trajectory with interaction properly considered. Efforts have been paid in this direction to model various traffic scenarios such as intersections~\cite{jia2023interactive,hang2022decision}, ramp-merging~\cite{cleac2019algames,hang2021cooperative}, lane-changing~\cite{fabiani2019multi,lopez2022game}, and roundabouts~\cite{hang2021decision}. Among game-based methods, various types of games are intensively studied such as potential games~\cite{liu2023potential,kavuncu2021potential}, congestion game~\cite{zanardi2023bad}, auction game~\cite{chandra2022gameplan}, and Stackelberg games~\cite{fisac2019hierarchical,yu2018human}. Nevertheless, the majority of game-based methods focus on achieving a single equilibrium and largely neglect the multimodal nature of the solution. In reality, the decision space of human drivers is often discrete, and multiple distinct decisions can be reasonable with respect to a specific traffic scenario, while a single equilibrium can only reveal one of them. Catastrophic results are prone to occur when there is a discrepancy between the mode of the calculated equilibrium and the actual decisions made by other traffic participants. A few game-based methods aim to solve this problem by considering the multimodality in behaviors of the rival traffic participants~\cite{deng2022lane,zhao2023non,zhang2022human}, but the game settings are simple and scenario-specific, which prevent these methods from generalizing to complex game settings and a variety of traffic scenarios. Also, in these methods, the decision-making and the planning are separated, and therefore the results are prone to be suboptimal.

To address the aforementioned challenges, we present an innovative integrated decision-making and trajectory planning framework for autonomous vehicles, which properly considers the multimodal nature of driving behaviors utilizing game theory. In particular, interactions of traffic participants are modeled as a general Bayesian game (i.e., game of incomplete information) that can potentially be applied to various traffic scenarios. By computing the corresponding Bayesian Coarse Correlated Equilibrium (Bayes-CCE), the decision and the corresponding trajectory are obtained simultaneously, which are shown to be optimal over the expectations of the underlying intentions of other participants. An updating strategy based on Bayesian filtering is also introduced to update the estimation of the driving intentions. The main contribution of this paper is listed as follows:

\begin{itemize}
\item[$\bullet$]
The interactions between traffic participants are effectively modeled from the perspective of a Bayesian game, such that the multimodalities in driving behaviors are properly addressed. Unlike most of the existing methods, the game formulation is generic with multiple game stages and the information structures are potentially adaptive. Therefore, it exhibits the generalizability to various traffic scenarios.
\end{itemize}

\begin{itemize}
\item[$\bullet$]
A parallelizable Monte Carlo search algorithm is proposed to solve the introduced game with arbitrarily complex information structures. Meanwhile, rigorous proof is established to demonstrate the almost-sure convergence of the introduced algorithm to a Bayes-CCE.
\end{itemize}

\begin{itemize}
\item[$\bullet$]
With the attained Bayes-CCE, an integrated decision-making and trajectory planning strategy is presented to obtain the decision and the corresponding trajectory simultaneously, which bridges the gap between decision-making and trajectory planning to avoid suboptimality. Moreover, theoretic analysis is performed to demonstrate the optimality of the resulting strategy in terms of the expectation, given the up-to-date beliefs over the underlying driving intentions of other vehicles.
\end{itemize}

\begin{itemize}
\item[$\bullet$]
With a belief updating strategy based on Bayesian filtering to maintain the estimation of the participants' driving intentions, a closed-loop autonomous driving framework is given. A series of simulations are performed to illustrate the superiority of the proposed method over traditional game methods.
\end{itemize}

\section{Related Works}
\subsection{Non-Game-Theoretic Decision-Making and Trajectory Planning}
In the past years, efforts have been made to tackle the decision-making and trajectory-planning problems for autonomous driving. State-machine-based methods~\cite{ziegler2014making,urmson2008autonomous,montemerlo2008junior}, which utilize hand-crafted rules to generate decisions and trajectories based on the current state of the autonomous vehicle, are popular at the early stage due to their simplicity and interpretability. However, the innumerable nature of traffic scenarios prevents the wide application of these methods, as the difficulty in maintaining such a state machine increases dramatically with increasing scenario complexities. The research focus soon shifts to a more generic formulation of the problem. In light of this, various methods are proposed by actively employing the concept of searching~\cite{wei2014behavioral,zhan2017spatially,ajanovicsearch}, optimization~\cite{kessler2019cooperative,Ma2022Alternating,huang2023decentralized}, sampling~\cite{mustafa2024racp,werling2012optimal}, and POMDP~\cite{liu2015situation}, etc. Recently, a branch of multi-policy planning methods, including MPDM~\cite{cunningham2015mpdm}, EUDM~\cite{zhang2020efficient}, EPSILON~\cite{9526613}, and MARC~\cite{li2023marc}, are gaining attention due to their ability in efficient behavior planning. These methods rely on dedicated forward simulators to evaluate the performance of each policy and perform the best one selectively. Nonetheless, the aforementioned approaches generally suffer from difficulties in modeling the interactions, particularly in the effect of the trajectory of the autonomous vehicle on the behaviors of the other vehicles. Meanwhile, a series of recent works address the decision-making and trajectory planning problem using a unified neural network trained through end-to-end imitation learning ~\cite{hu2023planning,huang2023differentiable,huang2023gameformer}. However, the lack of interpretability brings safety concerns, which prevents the wide application of these methods in real-world situations.

\subsection{Game-Theoretic Decision-Making and Trajectory Planning}
To tackle the problem mentioned in the previous subsection, game-based methods are extensively researched. In this area, many of the existing methods calculate a single equilibrium and assume that all participants will follow the computed equilibrium~\cite{fridovich2020efficient,cleac2019algames,fisac2019hierarchical,li2022efficient,liu2023potential}. These methods generally assume no uncertainty in agents' behaviors. In~\cite{fridovich2020efficient,cleac2019algames}, efficient game-solving strategies are proposed to resolve differential games and reach typical Nash equilibriums. In~\cite{fisac2019hierarchical}, the interaction between traffic agents is formulated as a Stackelberg game, and a solving scheme based on dynamic programming is proposed to solve the game in real time. In~\cite{liu2023potential}, a general decision-making framework
is introduced, which shows that typical traffic scenarios can be modeled as a potential game. In potential games, a pure-strategy Nash equilibrium is proven to exist and reachable, and therefore effective decision-making strategies can be derived. In~\cite{li2022efficient}, an efficient game-theoretic trajectory planning algorithm is introduced based on Monte Carlo Tree Search, together with an online estimation algorithm to identify the driving preferences of road participants. Although different types of games and various solving schemes are intensively studied in these researches, the uncertainties in human driving behaviors are not effectively revealed in the attained equilibria. Meanwhile, a few studies try to actively model different kinds of uncertainties involved in a game process. In~\cite{schwarting2021stochastic}, uncertainties in the observation model are actively addressed by formulating the interaction between agents as a stochastic dynamic game in belief space. However, the proposed method only considers observation uncertainties with Gaussian noise. In~\cite{mehr2023maximum}, the maximum entropy model is utilized to model the uncertainties in the equilibrium strategies for all agents. Nevertheless, the objectives and intentions of all agents are assumed to be known and fixed, which is unrealistic in complicated urban traffic scenarios. 

To address the uncertainties residing in objectives and obtain the optimal strategy against the resulting multi-modal behaviors of rival agents, some of the researches study the game of incomplete information and the corresponding Bayesian equilibrium, where the equilibrium strategy is optimal over the distribution of rivals' possible objectives and their equilibrium strategies. Representative works include~\cite{deng2022lane,zhang2022human,shao2020discretionary,yao2022modelling,zhao2023non,li2024cooperative}. Nonetheless, these methods adopt overly simplified game settings of tiny scale and use hand-crafted solvers for obtaining the corresponding Bayesian equilibrium, which confines the application of these methods to simple scenarios like lane-switching. Besides, the decision-making and trajectory-planning processes are usually separated, and this discrepancy can lead to suboptimal decisions. A recent research~\cite{peters2024contingency} takes a game-theoretic perspective on contingency planning, and the resulting contingency game enables interaction with other agents by generating strategic motion plans conditioned on multiple possible intents for other actors. However, it is a motion planning framework lacking in the capacity to perform high-level decision-making. Also, the dedicated and unsymmetrical game settings make it hard to generalize to different types of games with multiple players, multiple policies, and multiple stages.

\subsection{No-Regret Learning}
Another branch of methods closely related to this research is no-regret learning, which aims at providing general solutions to various extensive-form games (games with multiple stages). Despite their effectiveness in modeling sequential decision-making processes, solving for large extensive games has been a long-standing challenge. In~\cite{zinkevich2007regret}, a counter-factual regret minimization (CFR) algorithm is introduced to iteratively solve this form of games, which is extended by MCCFR~\cite{lanctot2009monte} with a Monte Carlo sampling process to avoid the difficulties traversing the entire game tree in each iteration. A follow-up work is proposed in~\cite{lisy2015online}, which further extends the MCCFR method to enable dynamic construction of the game tree. The convergence of these methods to a corresponding Nash equilibrium is only guaranteed on two-player zero-sum games. To extend these methods to multi-player general-sum games, CFR-S is introduced in~\cite{celli2019learning}, which performs sampling at each iteration and illustrates that the frequency distribution of the sampled strategies converges to a Coarse Correlated Equilibrium (CCE). For games of incomplete information with Bayes settings, the condition of convergence to a Bayes-CCE is provided in~\cite{hartline2015no} without detailed solution schemes.

\section{Problem Statement}

Consider a set of $N$ vehicles $\mathcal{N}=\{1,2,...,N\}$ in a traffic scenario. For each vehicle $i\in\mathcal{N}$, we denote the set of its potential intentions as $T_i$. For each driving intention $t_i\in T_i$, the associated action space and utility function are denoted as $A_i(t_i)$ and $u_i(t_i)$, respectively. Further, we use $T=\times_{i\in\mathcal{N}}T_i$ to denote the product space of $\{T_i\}_{i\in\mathcal{N}}$, such that each $t\in T$ is a vector of intentions of all vehicles, $t=[t_i]_{i\in\mathcal{N}}$. It should be noted that the intention of vehicle $i$, $t_i$, is private and unknown to all other vehicles and vice versa. Instead, each vehicle is trying to maintain a probability distribution over the intentions of all other vehicles. We further assume that the historical trajectories of all vehicles, together with the surroundings, are fully observable by each vehicle. Since the predictions on driving intentions are determined by historical trajectories and surroundings, we can reasonably assume that all vehicles can arrive at the same belief on the intentions. Formally, the following assumption is introduced.

\begin{assumption}
A common probability distribution $p$ over $T$ can be established and is known to all vehicles, such that $p=[p^i]_{i\in\mathcal{N}}$, where $p_i$ is the common belief of all vehicles $j\in\mathcal{N}, j\neq i$ over the intention of vehicle $i$.
\end{assumption}

Note that although vehicle $i$ has complete knowledge of its own intention, it also recognizes the fact that its intention is not known to others and that the belief of other vehicles over its own intention is $p_i$.

With the aforementioned notations and assumptions, we formally introduce the following definition of games with incomplete information.

\begin{definition}(Harsanyi game of incomplete Information~\cite[Chapter~9.4]{maschler2020game})
A Harsanyi game of incomplete information is a vector $(\mathcal{N},(T_i)_{i\in\mathcal{N}},p,(g_t)_{t\in T})$, where $g_t=((A_i(t_i))_{i\in\mathcal{N}},(u_i(t_i))_{i\in\mathcal{N}})$ is the state game for the type vector $t$.
\end{definition}

Note that the driving intentions of vehicles are modeled as types of players in one-to-one correspondence. Through playing a Harsanyi game of incomplete information, a desirable result is the corresponding Bayesian Nash equilibrium strategy. In particular, a behavior strategy $\sigma_i$ of vehicle $i$, is a map from each type $t_i\in T_i$ to a probability distribution over the available actions: $\sigma_i(t_i)\in\Delta(A_i(t_i))$. Further, given the strategy vector of all vehicles as $\sigma=[\sigma_i]_{i\in\mathcal{N}}$, we denote the expected payoff of vehicle $i$ with type $t_i$ as 
\begin{equation}
U_i(\sigma|t_i) = \mathbb{E}_{t_{-i}\sim p_{-i}}
\mathbb{E}_{
a\sim\sigma(t)
}u_i(t_i;a).
\end{equation}
The corresponding Bayesian Nash equilibrium is defined as follows.
\begin{definition}(Bayesian Nash Equilibrium~\cite[Chapter~9.4]{maschler2020game})
a strategy vector $\sigma^*=[\sigma_i^*]_{i\in\mathcal{N}}$ is a (mixed-strategy) Bayesian Nash Equilibrium if for all $i\in\mathcal{N}$, for all $t_i\in T_i$, and for all $a_i\in A_i(t_i)$, the following inequality holds:
\begin{equation}
U_i(\sigma|t_i)\geq U_i((a_i,\sigma^*_i)|t_i).
\end{equation}
\end{definition}
When a Bayesian Nash equilibrium is reached, no player of any type can obtain a higher expected payoff by unilaterally modifying its own behavioral strategy. In the context of autonomous driving, it means that for an arbitrary vehicle $i$, once its intention is determined, its strategy is optimal over the expectation of the intentions of all other vehicles, given that their strategies are fixed. Since this optimality condition holds for all vehicles, the bilateral signaling effects are modeled, such that the influence of own actions on the belief of other vehicles about the intention of ego vehicle, and further, their preferences on actions, can be properly considered. Thus, an interaction model that handles multimodality on both sides is established.

However, solving for such a mixed-strategy Bayesian Nash Equilibrium is not trivial, especially for a multi-player general-sum game, which is typically the case in common urban traffic scenarios. To leverage the merit of game of incomplete information, we consider instead the Bayes-CCE, which is a generalization of the Bayesian Nash equilibrium, and propose a general solver that is proven to converge to a Bayes-CCE with minimum assumptions required. To introduce the concept of Bayes-CCE, we first define the plan $s_i$ of player $i$ as a direct map from type to a specific action: $s_i(t_i)\in A_i(t_i)$, and further, $s=[s_i]_{i\in\mathcal{N}}$ is the concatenated vector of plans of all players. $\Sigma$ is the set of $s$ such that $s\in\Sigma$. The formal definition of a Bayes-CCE is as follows.

\begin{figure*}[t]
\centering
\includegraphics[scale=0.165]{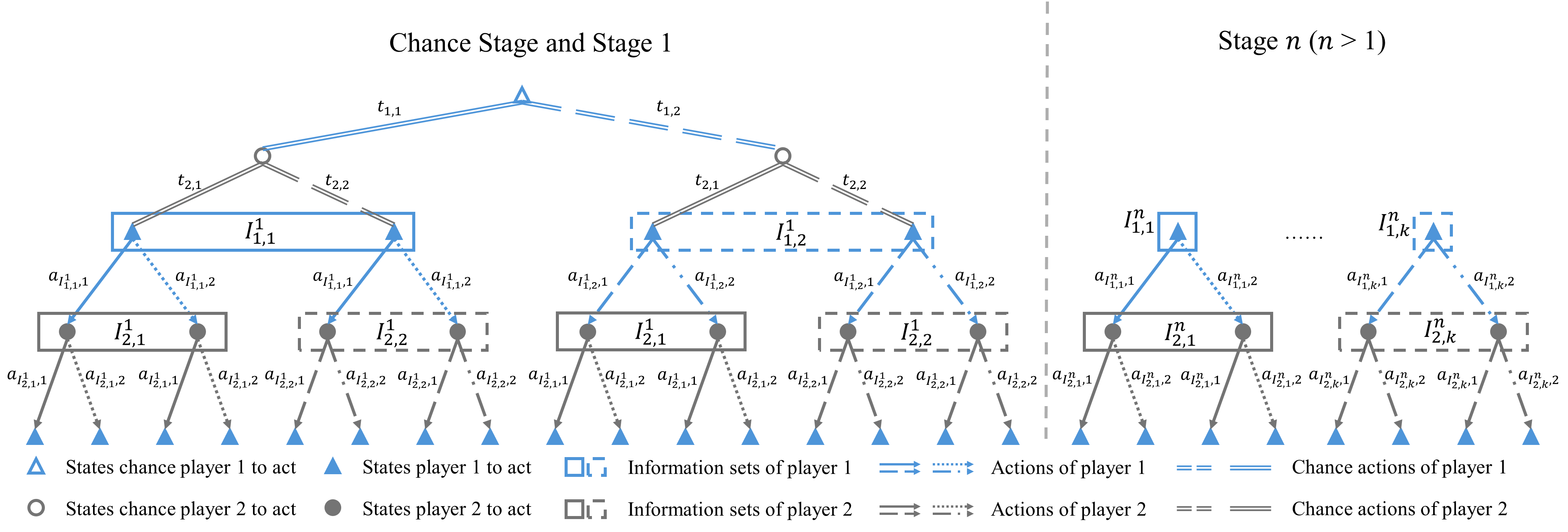}
\caption{Settings of the game of incomplete information for traffic interaction. The game is an extensive-form game containing multiple stages. In the beginning, chance players select types for all players in a chance stage (corresponding to the chance actions in the figure), where the selections are subject to distribution $p$. Then in each of the stages, all players perform a single action in turn. Eventually, the actions of players form a path leading from the root node to one of the leaf nodes, where the payoffs of all players are determined. The notation $I^n_{i,k}$ represents the $k^{th}$ information set of player $i$ at stage $n$, and $a_{I,m}$ denotes the $m^{th}$ action under information set $I$. Each of the information sets contains one or more states, such that the agent to play, under that information set, cannot distinguish between the game states contained in the information set. Therefore it will take the same action (or the same strategic profile of actions) for all the states contained in the information sets. Further, we assume that at the end of each stage except for the chance stage, all the states are distinguishable, and therefore each game state at the end of a stage constitutes an information set at the beginning of the next stage.}
\label{fig:game}
\end{figure*}

\begin{definition}(Bayes-CCE~\cite{hartline2015no})
A strategy profile $\psi\in\Delta(\Sigma)$ is a Bayes-CCE if for every player $i\in\mathcal{N}$, for every type $t_i\in T_i$, and for every possible plan $s'_i$, the following inequality holds:
\begin{equation}
\begin{aligned}
\mathbb{E}_{s\sim\psi}\mathbb{E}_{t_{-i}\sim p_{-i}}u_i(t_i;s(t_i,t_{-i}))&\geq\\
\mathbb{E}_{s\sim\psi}\mathbb{E}_{t_{-i}\sim p_{-i}}&u_i(t_i;s'_i(t_i),s_{-i}(t_{-i})).
\end{aligned}
\end{equation}
\end{definition}

In this paper, we formulate the driving game in urban traffic scenarios as a Harsanyi game of incomplete information. Specifically, the action spaces are represented as sets of sampled candidate trajectories, each associated with a particular driving intention (accelerating, lane switching, etc.) The utility functions include pertinent driving performance indices like smoothness and safety. Solving for the Bayes-CCE yields an open-loop equilibrium strategy, which is a joint probability distribution over candidate trajectories. We assume that all vehicles perform this equilibrium strategy. As time evolves and new information is gained, the prediction of vehicle intentions can be updated through Bayesian filtering, and the corresponding driving game can be resolved repeatedly, resulting in a closed-loop driving scheme.

\section{Proposed Approach}

\subsection{Parallel Solver for Bayes-CCE}
In this section, a parallel solver utilizing classic Monte Carlo counter-factual regret minimization (MCCFR) is introduced to resolve the game and enable a real-time solution for the open-loop Bayes-CCE in general traffic settings. In particular, the structure of a typical extensive-form game of incomplete information in traffic settings is shown as the game tree in Fig. \ref{fig:game}. Each of the nodes on the game tree corresponds to a particular game state, and these game states are partitioned into information sets, such that players cannot distinguish between two states in the same information set. Therefore, the behavior strategy of each player is made with respect to information sets, $\sigma_i(t_i)=[\sigma_i(t_i,I)]_{I\in\mathcal{I}_i}$, where $\mathcal{I}_i$ is the information partition of player $i$, or the set of information sets under which it is the turn of player $i$ to move. $\sigma_i(t_i,I)$ is a distribution over possible actions $A_i(t_i,I)$ for player $i$ of type $t_i$ under Information set $I$. At the beginning stage of the game, an extra chance player (also known as the nature) will take chance moves that will randomly select a type for each player subject to the common probability distribution $p$. After the types of all players are determined, players will take moves in turns to proceed through the game tree until a leaf node is reached and the utilities of all players are thus determined.
\begin{remark}
    We assume that at each stage, the moves of all players are taken simultaneously, which means that each player cannot distinguish between the moves taken by others before it takes its own move. This setting is clearly revealed by the structure of the information sets in the game tree.
\end{remark}
\begin{remark}
    Fig. \ref{fig:game} is an illustrative figure corresponding to a game with 2 players. A game with 3 or more players can be defined similarly. Meanwhile, the game tree description of a game is general enough such that other forms of games, such as a Stackelberg game, can also be well-represented by simply changing the configurations of the information sets. It will be shown that the proposed solving scheme does not rely on the particular structure of the game tree and therefore is also general.
\end{remark}
Before we formally propose the solving scheme, we give a brief introduction to two relevant algorithms as preliminaries.

\textbf{CFR}~\cite{zinkevich2007regret} and \textbf{MCCFR}~\cite{lanctot2009monte}: The vanilla CFR only considers a two-player zero-sum game without types of players. Therefore, $\sigma_i(t_i,I)$, $u_i(t_i;a)$, and $A_i(t_i,I)$ can be rewritten as $\sigma_i(I)$, $u_i(a)$, and $A_i(I)$, respectively. Given the overall strategy as $\sigma$, the counterfactual value $v_i(\sigma,I)$ for information set $I$ is defined as
\begin{equation}
    v_i(\sigma,I) = \sum_{z\in Z_I} \pi^\sigma_{-i}(z[I])\pi^\sigma(z[I],z)u_i(z)
\end{equation}
where $Z_I$ is the set of all the paths on the game tree from root to leaf that pass through I, and $z$ is one of the paths in $Z_I$ containing all historical actions of all players. $z[I]$ is the prefix of $z$ from root to $I$. $\pi^\sigma_{-i}(z[I])$ is the cumulative product of the probabilities of moves taken by other agents along $z[I]$, and $\pi^\sigma(z[I],z)$ is the cumulative product of the probabilities of moves taken by all agents from $z[I]$ to $z$. In general, the counterfactual value $v_i(\sigma,I)$ describes the expected utility of Information set $I$ given strategy profile $\sigma$ with the extra assumption that agent $i$ plays to reach $I$.

The immediate counterfactual regrets of each action $a\in A_i(I)$ is $r(\sigma,I,a)=v_i(\sigma_{(I\rightarrow a)},I)-v_i(\sigma,I)$, with $\sigma_{(I\rightarrow a)}$ define as the strategy profile identical to $\sigma$ except that $\sigma_{(a\rightarrow I)}(a|I)=1$, namely that action $a$ is selected with probability $1$. In the entire no-regret learning process, rounds of playing are conducted. In each round, the regret-matching strategy is conducted to improve the strategy. Particularly in round $k+1$, the strategy $\sigma^{k+1}$ is determined by
    $\sigma^{k+1}_i(a|I)=R^{k,+}_i(I,a)/\sum_{a'\in A_i(I)}R^{k,+}_i(I,a')$,
where $R^{k,+}_i(I,a)=\max\{\sum_{\tau=1}^k r(\sigma^\tau,I,a),0\}$. It is proven that when $k\rightarrow \infty$, the average strategy $\overline{\sigma}^k$ defined as $\overline{\sigma}^k_i(a|I)=\sum_{\tau=1}^k \pi^{\sigma^\tau}_i(I)\sigma^\tau(a|I)/\sum_{\tau=1}^k \pi^{\sigma^\tau}_i(I)$ converges to a Nash equilibrium for a two-player zero-sum game.

A drawback of the vanilla CFR method is that in each round of playing, traversing the entire game tree is required, which is time-consuming and even intractable when the size of the game is large. To cope with this problem, the method MCCFR is proposed, such that in each round of playing, only part of the game tree is explored. Particularly in outcome-sampling MCCFR, only one leaf node of the game tree is sampled according to some pre-defined distribution, $z\sim q(z)$, and all other leaves are dropped (equivalently, their utilities are zeroed out). The following sampled counterfactual value is introduced:
\begin{equation}
\Tilde{v}_i(\sigma,I;z)=
\left\{
\begin{array}{lll}
\frac{1}{q(z)}\pi^\sigma_{-i}(z[I])\pi^\sigma(z[I],z)u_i(z)&\ &\textup{if}\ z\in Z_I\\
0&\ & \textup{else.}
\end{array}
\right.
\end{equation}
It is shown that $\mathbb{E}_{z\sim p(z)}\Tilde{v}_i(\sigma,I;z)=v_i(\sigma,I)$. Replacing $v_i(\sigma,I)$ by $\Tilde{v}_i(\sigma,I;z)$ yields the outcome-sampling MCCFR. It is obvious that in each round of playing, the only information sets that require updating are those passed through by the sampled path $z$.

The previous restatement of CFR and MCCFR describes a solution to obtain the (mixed-strategy) Nash equilibrium in a two-player zero-sum extensive-form game. We integrate MCCFR, the no-regret learning method in game of incomplete information~\cite{hartline2015no}, and CFR-S~\cite{celli2019learning} to introduce MCCFR-S for game of incomplete information, a parallelizable solving scheme that is proven to converge to a Bayes-CCE for a multi-player general-sum game of incomplete information settings. Details are shown in Algorithm 1. We give a brief explanation of each procedure used in this algorithm.

\textbf{Bayes-CCE}$(G,M,p)$: this is the main procedure of Algorithm 1. $G$ is the game described in Fig.\ref{fig:game}. $M$ is the number of iterations and $p$ is the common distribution over intentions of vehicles. $re$, $\sigma$, and $\phi$ are the array of cumulative counterfactual regrets, the array of behavioral strategies, and the frequency of plan $s$, respectively. $va$ is the array of the cumulative sampled counterfactual values. All those arrays are zero-initialized. In each iteraction, a type vector $t$ representing vehicles' intentions are sampled from distribution $p$, and then the outcome-sampling MCCFR is performed once to update the regret array $re$ and the strategy array $\sigma$. A sampling process is then performed based on updated $\sigma$ strategy such that a plan $s$ is sampled from $\sigma$ and the frequency $\phi$ is updated.

\textbf{MCCFR}$(G,re,\sigma,t,va,z,pr,l)$: this is a recursive implementation of MCCFR~\cite{lanctot2009monte}, where $G,re,\sigma,t$ are defined as above. $z$ is the history of play or the past actions of all players concatenated together. For a particular $z$, $pr$ is the vector containing probabilities of all players play to reach $z$ according to current strategy $\sigma$, namely $\{\pi^\sigma_i(z)\}_{i\in\mathcal{N}}$. $l$ is the probability of sampling $z$, namely $q(z)$. The algorithm contains a forward pass and a backward pass. In the forward pass we randomly proceed through the game tree from the root to one of the leaf by sampling action from the $\epsilon$-modulated strategy at each node (line 17 and 18) and calculate the $pr$ and $l$ for current $z$ (line 19 and 20). Once a leaf is reached, we obtain the utility for all players (line 11 to 14). In the backward pass, we revisit all the information sets we visited in the forward pass in reverse order. For an information set $I$, we obtain the probability playing from $I$ to the sampled leaf according to current strategy (line 23), and updates the cumulative counterfactual regret for $I$ (line 24 to 31). A standard regret matching is performed to update the behavioral strategy associated with $I$ (line 32).

\textbf{RM}$(re,i,t_i,I)$: this is the standard regret matching algorithm for updating behavioral strategy associated with information set $I$ called by the MCCFR procedure.

\textbf{Sample}$(\sigma,\phi)$: this is the sampling procedure. given updated strategy $\sigma$, a plan $s_i$ is sampled for each $i\in\mathcal{N}$. The concatenated plan $s\in\Sigma$ is then obtained and record. $\phi$ measures the time when $s$ is sampled, such that $\phi/M$ is an empirical distribution over the space of plan $\Sigma$.

\begin{algorithm}
\caption{MCCFR-S for Game of Incomplete Information}\label{alg:alg1}
\begin{algorithmic}[1]

\Procedure{Bayes-CCE}{$G,M,p$}
\State \textbf{Init} $re,\sigma,\phi,va$
\For{$1$ to $M$}
    \State $t\sim p$
    \State $\textup{MCCFR}(re,\sigma,t,va,\{\},[1]_{i\in\mathcal{N}},1)$
    \State $\textup{Sample}(\sigma,\phi)$
\EndFor
\State \textbf{return} $\phi,va$
\EndProcedure
\Statex
\Procedure{MCCFR}{$G,re,\sigma,t,va,z,pr,l$}
\If {$z\in\mathcal{Z}$}

    \State $\{u_i\}_{i\in\mathcal{N}}\gets \textup{getUtility}(G,z)$
    \State \textbf{return} $(1,l,\{u_i\}_{i\in\mathcal{N}})$
\EndIf
\State $i\gets\textup{getPlayer}(G,z)$
\State $I\gets\textup{getInfoset}(G,z,i)$
\State $a\sim(1-\epsilon)\sigma_i(t_i,I)+\epsilon/|A_i(t_i,I)|$
\State $z\gets z+a$
\State $l\gets l*((1-\epsilon)\sigma_i(a|t_i,I)+\epsilon/|A_i(t_i,I)|)$
\State $pr[i]\gets pr[i]*\sigma_i(a|t_i,I)$
\State $x,l,\{u_i\}_{i\in\mathcal{N}}\gets \textup{MCCFR}(G,re,\sigma,t,va,z,pr,l)$
\State $c\gets x$
\State $x\gets x*\sigma_i(a|t_i,I)$
\State $w\gets u_i/l*\prod_{j\neq i}pr[j]$
\For{$a'\in A_i(t_i,I)$}
    \If{$a'=a$}
        \State $re[i,t_i,I,a']\gets re[i,t_i,I,a'] + (c-x)*w$
    \Else
        \State $re[i,t_i,I,a']\gets re[i,t_i,I,a'] + (-x)*w$
    \EndIf
\EndFor
\State $\sigma[i,t_i,I] \gets \textup{RM}(re,i,t_i,I)$
\State $va[i,t_i,I] \gets va[i,t_i,I]+x*w$
\State \textbf{return} $(x,l,u)$
\EndProcedure
\Statex
\Procedure{RM}{$re,i,t_i,I$}
\State $total\gets 0$
\For{$a\in A_i(t_i,I)$}
    \State $total\gets total+ \max(re[i,t_i,I,a],0)$
\EndFor
\For{$a\in A_i(t_i,I)$}
    \State $\sigma_{i,t_i,I}[a] \gets \max(re[i,t_i,I,a],0)/total$
\EndFor
\State \textbf{return} $\sigma_{i,t_i,I}$
\EndProcedure
\Statex
\Procedure{Sample}{$\sigma,\phi$}
\State \textbf{Init} $\{s_i\}_{i\in\mathcal{N}}$
\For{$i\in\mathcal{N},t_i\in T_i,I\in \mathcal{I}_i(t_i)$}
    \State $a\sim\sigma[i,t_i,I]$
    \State $s_i(t_i,I)\gets a$
\EndFor
\State $s\gets \{s_i\}_{i\in\mathcal{N}}$
\State $\phi[s]\gets \phi[s]+1$
\EndProcedure
\end{algorithmic}
\label{alg1}
\end{algorithm}

Algorithm 1 is easily parallelizable and implemented with multiple processes, as each process samples the type vector $t$, proceeds along the game tree, and samples the overall plan $s$, independently. The only coupled part is the line 32, as each process needs to gather the $re$ computed by all other processes to perform the regret matching and update the strategy. For Algorithm 1, we have the following theorem.
\begin{theorem}
The empirical distribution $\phi/M$ obtained with Algorithm 1 converges almost surely to a Bayes-CCE with common prior $p$ when $M\rightarrow \infty$.
\end{theorem}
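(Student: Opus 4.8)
\emph{Proof proposal.} The plan is to reduce the statement to a no-regret property of the empirical joint distribution over plans, and then exploit that the set of Bayes-CCE with prior $p$ is exactly the nonempty, compact, convex polytope cut out by the inequalities in the definition of Bayes-CCE. For a player $i$, a type $t_i$, and a deviation plan $s'_i$, set
\begin{equation}
\rho_i^M(t_i,s'_i)=\mathbb{E}_{s\sim\phi/M}\,\mathbb{E}_{t_{-i}\sim p_{-i}}\big[u_i(t_i;s'_i(t_i),s_{-i}(t_{-i}))-u_i(t_i;s(t_i,t_{-i}))\big],
\end{equation}
so that $\phi/M$ is a Bayes-CCE precisely when $\rho_i^M(t_i,s'_i)\le 0$ for all the (finitely many) triples $(i,t_i,s'_i)$. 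Since $\Delta(\Sigma)$ is compact and the functionals above are continuous in $\phi/M$, it suffices to prove that, almost surely, $\limsup_{M\to\infty}\rho_i^M(t_i,s'_i)\le 0$ for every such triple; every accumulation point of $\phi/M$ is then a Bayes-CCE, i.e.\ $\mathrm{dist}\big(\phi/M,\ \mathrm{BayesCCE}(p)\big)\to 0$ almost surely (existence of a Bayes-CCE falling out as a by-product).

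First I would split $\rho_i^M(t_i,s'_i)=A^M+B^M$, where $A^M$ is the same average but with each sampled plan $s^k$ replaced by the behavioral strategy $\sigma^k$ it was drawn from, and $B^M=\rho_i^M(t_i,s'_i)-A^M$ is the plan-sampling fluctuation. For $A^M$ I would use the counterfactual-regret decomposition of CFR~\cite{zinkevich2007regret} in its incomplete-information form~\cite{hartline2015no} and general-sum form~\cite{celli2019learning}: because player $i$ of type $t_i$ acts exactly at the information sets in $\mathcal{I}_i(t_i)$, one has $M A^M\le\sum_{I\in\mathcal{I}_i(t_i)}R_i^{+,M}(t_i,I)$, the sum of the positive parts of the cumulative immediate counterfactual regrets. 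Then: (i) procedure \textbf{RM} is regret matching fed with the outcome-sampled counterfactual values, and since the exploration floor $\epsilon>0$ keeps the sampling probability $q(z)$ bounded below, those fed values are uniformly bounded, so regret matching's \emph{deterministic} guarantee makes the cumulative \emph{sampled} immediate regret at each $I$ grow as $O(\sqrt{M})$; (ii) by the unbiasedness identity $\mathbb{E}_{z\sim q}[\tilde v_i(\sigma,I;z)]=v_i(\sigma,I)$ — with the type-drawing step (line~4) being outcome-sampling of the chance node of Fig.~\ref{fig:game} — the per-iteration gap between the true and the sampled immediate regret at $I$ has conditional mean zero given the past and bounded range, so its partial sums form a bounded-increment martingale, and Azuma--Hoeffding plus Borel--Cantelli give that $\tfrac{1}{M}$ times this gap tends to $0$ almost surely. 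Combining (i) and (ii), $R_i^{+,M}(t_i,I)/M\to 0$ almost surely for each of the finitely many $I$, hence $\limsup_M A^M\le 0$ almost surely. (Types with $p_i(t_i)=0$ need no support; those with $p_i(t_i)>0$ are drawn infinitely often almost surely, so every information set is updated infinitely often.)

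Next I would control $B^M$. Procedure \textbf{Sample} draws $s^k$ from $\sigma^k$ by independent draws at the information sets, and $\sigma^k$ is measurable with respect to the algorithm's randomness prior to that draw; hence, writing $g(s)=\mathbb{E}_{t_{-i}\sim p_{-i}}[u_i(t_i;s'_i(t_i),s_{-i}(t_{-i}))-u_i(t_i;s(t_i,t_{-i}))]$, the quantities $g(s^k)-\mathbb{E}_{s\sim\sigma^k}[g(s)]$ form a martingale difference sequence, bounded by $2\sup|u_i|<\infty$ since there are finitely many candidate trajectories. Azuma--Hoeffding makes $\Pr(|B^M|>\varepsilon)$ summable in $M$, so $B^M\to 0$ almost surely by Borel--Cantelli, and a union bound over the finitely many triples makes all these statements hold simultaneously. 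Together with the previous paragraph this gives $\limsup_M\rho_i^M(t_i,s'_i)\le 0$ almost surely for all $(i,t_i,s'_i)$, which by the reduction above proves the theorem.

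The hard part will be step (ii): upgrading the usual ``in expectation / with high probability'' MCCFR regret bound to an \emph{almost-sure} statement holding uniformly over all information sets while tracking the dependence on $\epsilon$ — the outcome-sampling importance weights $1/q(z)$ scale like $(\epsilon/A_{\max})^{-D}$ in the tree depth $D$, so the martingale increments, though bounded, have large range, and the Azuma tails stay summable after the union bound only because $\epsilon$ is a fixed positive constant. A secondary point needing care is that the chain ``bounded counterfactual regret $\Rightarrow$ bounded interim external regret $\Rightarrow$ empirical distribution is a Bayes-CCE'' genuinely applies to the information structure of Fig.~\ref{fig:game} (simultaneous moves encoded by non-singleton information sets, type-indexed information partitions, multi-stage and possibly asymmetric trees); this is precisely what~\cite{hartline2015no} and~\cite{celli2019learning} supply, so the remaining work there is verifying their hypotheses rather than reproving the bridge.
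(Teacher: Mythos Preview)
Your proposal is correct and follows the same two-step skeleton as the paper's own proof: first bound the no-regret term via the MCCFR sampled-regret guarantee of~\cite{lanctot2009monte} (the paper does this by restricting to the iterations $\mathcal{M}_{t_i}$ in which type $t_i$ is drawn and invoking their Theorem~5, while you equivalently fold the type draw into the outcome sampling of the chance node), then control the plan-sampling fluctuation by a concentration argument and conclude via the Bayes-CCE characterization of~\cite{hartline2015no}. Your use of Azuma--Hoeffding plus Borel--Cantelli for the almost-sure statement is in fact crisper than the paper's Chebyshev-type bound from~\cite[Lemma~1]{lanctot2009monte}, and your direct compactness argument replaces the paper's appeal to~\cite[Lemma~10]{hartline2015no}, but these are cosmetic differences on the same route.
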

\begin{proof}
For a particular type of vehicle $i$, $t_i\in T_i$, let $\mathcal{M}_{t_i}$ denote the set of iteration number such that $t^\tau_i=t_i$ when $\tau \in \mathcal{M}_{t_i}$. Define $r^\tau_i(\sigma_i,\sigma^\tau)=u_i(t_i^\tau;\sigma_i(t_i^\tau),\sigma^\tau_{-i}(t^\tau_{-i}))-u_i(t_i^\tau;\sigma^\tau(t^\tau))$, which is the regret value that the player $i$ experience at iteration $\tau$ when its strategy is replaced by $\sigma_i$ from the overall strategy $\sigma^\tau$. Furthermore, the cumulative regret value for player $i$ with type $t_i$, supposed that player $i$ adopts strategy $\sigma_i$ all the time, is defined as
\begin{equation}
    R_{t_i}(\sigma_i)=\frac{1}{|\mathcal{M}_{t_i}|}\sum_{\tau\in\mathcal{M}_{t_i}}r^\tau_i(\sigma_i,\sigma^\tau)).
\end{equation}
Since each player of each type drawn from line 4 of Algorithm 1 maintains a separate MCCFR regret minimizer, it follows directly from \cite[Theorem 5]{lanctot2009monte} that for all possible $\sigma_i$, we have $R_{t_i}(\sigma_i)\leq(1+\frac{\sqrt{2}}{\sqrt{p}})(\frac{1}{\delta})G_{t_i}/\sqrt{|\mathcal{M}_{t_i}|}$ with probability $1-p$ for any $p\in(0,1]$, where $G_{t_i}$ is a game constant and $\delta$ is the smallest probability of a leaf being sampled in the forward pass in the MCCFR process. It is obvious that when $M\rightarrow \infty$, it is also surely that $R_{t_i}(\sigma_i)\leq 0$ if $\delta >0$. In particular, a plan $s_i$ is also a strategy, and therefore this conclusion naturally holds for all $R_{t_i}(s_i)$, namely
\begin{equation}
\label{neq1}
    R_{t_i}(s_i)\leq\left(1+\frac{\sqrt{2}}{\sqrt{p}}\right)\left(\frac{1}{\delta}\right)G_{t_i}/\sqrt{|\mathcal{M}_{t_i}|}
\end{equation}
holds for all $s_i$ with probability $1-p$.

Next, at each iteration $\tau$, we sample a plan $s^\tau$ from the strategy $\sigma^\tau$, and we define the sampled cumulative regret as
\begin{equation}
    \hat{R}_{t_i}(s_i)=\frac{1}{|\mathcal{M}_{t_i}|}\sum_{\tau\in\mathcal{M}_{t_i}}r^\tau_i(s_i,s^\tau).
\end{equation}
Since $s^\tau\sim\sigma^\tau$, the following equations hold naturally:
\begin{equation}
\label{s_sigma}
\begin{aligned}
    &\mathbb{E}_{s^\tau\sim\sigma^\tau}u_i(t_i^\tau;s^\tau(t^\tau)) = u_i(t^\tau_i;\sigma^\tau(t^\tau)),\\
    &\mathbb{E}_{s^\tau\sim\sigma^\tau}u_i(t_i^\tau;s_i(t^\tau_i),s_{-i}(t^\tau_{-i})) = u_i(t^\tau_i;s_i(t^\tau_i),\sigma_{-i}^\tau(t^\tau_{-i})).
\end{aligned}
\end{equation}
Therefore, the following conclusion is directly obtained:
\begin{equation}
\label{mean}
    \mathbb{E}_{\{s^\tau\sim\sigma^\tau\}_{\tau\in\mathcal{M}_{t_i}}}[\hat{R}_{t_i}(s_i)-R_{t_i}(s_i)]=0.
\end{equation}
For simplicity we rewrite $\mathbb{E}_{\{s^\tau\sim\sigma^\tau\}_{\tau\in\mathcal{M}_{t_i}}}$ as $\mathbb{E}_{\mathcal{M}_{t_i}}$, $\mathbb{E}_{s^\tau\sim\sigma^\tau}$ as $\mathbb{E}_\tau$, $r^\tau_i(s_i,s^\tau)-r^\tau_i(s_i,\sigma^\tau))$ as $\zeta_\tau$, and we inspect the following expectation:
\begin{equation}
\begin{aligned}
\label{Var}
    &\mathbb{E}_{\mathcal{M}_{t_i}}\left[(\hat{R}_{t_i}(s_i)-R_{t_i}(s_i))^2\right]\\
    &=\frac{1}{|\mathcal{M}_{t_i}|^2}\mathbb{E}_{\mathcal{M}_{t_i}}\left[\left(\sum_{\tau\in\mathcal{M}_{t_i}}\zeta_\tau\right)^2\right]\\
    &=\frac{1}{|\mathcal{M}_{t_i}|^2}\left(\sum_{\tau\in\mathcal{M}_{t_i}}\mathbb{E}_\tau[\zeta_\tau^2]+2\sum_{\tau\in\mathcal{M}_{t_i}}\sum_{\tau'\in\mathcal{M}_{t_i},\tau'\neq\tau}\mathbb{E}_\tau[\zeta_\tau\zeta_{\tau'}]\right)\\
    &=\frac{1}{|\mathcal{M}_{t_i}|^2}\left(\sum_{\tau\in\mathcal{M}_{t_i}}\mathbb{E}_\tau[\zeta_\tau^2]\right)\leq\frac{\Delta u_{t_i}^2}{|\mathcal{M}_{t_i}|}.
\end{aligned}
\end{equation}
The last two steps follow from the facts that $\zeta_\tau$ and $\zeta_{\tau'}$ are independent with zero expectations, and that $\mathbb{E}_\tau[\zeta_\tau^2]\leq\Delta u_{t_i}^2$ where $\Delta u_{t_i}^2$ is the range of utility of player $i$ with type $t_i$.

Combining (\ref{mean}) and (\ref{Var}), and following \cite[Lemma 1]{lanctot2009monte}, we obtain that
\begin{equation}
\label{neq2}
\hat{R}_{t_i}(s_i)\leq R_{t_i}(s_i)+\frac{\Delta u_{t_i}}{\sqrt{p'}\sqrt{|\mathcal{M}_{t_i}|}}
\end{equation}
with probability $1-p'$.
Combining (\ref{neq1}) and (\ref{neq2}), we have
\begin{equation}
\label{neq3}
    \hat{R}_{t_i}(s_i)\leq \frac{1}{\sqrt{|\mathcal{M}_{t_i}|}}\left(\left(1+\frac{\sqrt{2}}{\sqrt{p}}\right)\left(\frac{G_{t_i}}{\delta}\right)+\frac{\Delta u_{t_i}}{\sqrt{p'}}\right)
\end{equation}
with probability $(1-p)(1-p')$. As a result, it is almost sure that for any $s_i$, we have $\sum_{\tau\in\mathcal{M}_{t_i}}r^\tau_i(s_i,s^\tau)\leq o(|\mathcal{M}_{t_i}|)$ for all $i$, all $t_i$, and all $s_i$. Following \cite[Lemma 10]{hartline2015no}, we conclude that the empirical distribution $\phi/M$ converges almost surely to a Bayes-CCE.
\end{proof}

Sampling an entire plan $s_i$ requires traversing the entire game tree and sampling action for each information set $I\in\mathcal{I}_i$, which is computationally heavy for extensive-form games with multiple stages, as the number of information sets grows exponentially with respect to the number of stages. In the proposed framework, the receding horizon control is adopted, as each agent only performs the action computed in the first stage (details will be discussed in Section IV-D). Therefore, we only need to record the actions taken by players at the first stage, resulting in a partial plan. The resulting distribution can be viewed as a marginal distribution of the origin Bayes-CCE.

\subsection{Integrated Decision Selection and Trajectory Planning}

In a typical game of incomplete information, the type of each player is usually drawn by nature, a chance player, from the prior distribution. In the settings of autonomous driving, however, the types are self-determined by players, and therefore we can select the best type to perform. In the proposed framework, we view the selection of the best type $t^*_i$ from the type list $T_i$ as equivalent to the decision-making process. Formally given the computed Bayes-CCE $\psi$, we denote the expected utility of a type $t'_i$ as
\begin{equation}
    V(t'_i) = \mathbb{E}_{s\sim\psi}\mathbb{E}_{t_{-i}\sim p_{-i}}u_i(t'_i;s(t'_i,t_{-i})).
\end{equation}
Then the best type $t^*_i$ and equivalently the best decision, is determined by
$t^*_i = \argmax_{t'_i\in T_i}V(t'_i).$
However, as we mentioned in Section IV-A, we only keep a partial record of the Bayes-CCE $\psi$ for computation efficiency, and therefore it is not feasible to directly calculate $V(t'_i)$. Instead, we introduce an estimator of this expectation and use this estimator as a criterion of decision-making. We denote the root information set corresponding to $t_i$ as $I_{0,t_i}$, and the estimator is given as $\hat{V}(t'_i)=va[i,t'_i,I_{0,t'_i}]/|\mathcal{M}_{t'_i}|$. Immediately, we have the following theorem.
\begin{theorem}
    Suppose that the empirical distribution $\phi/M$ converges to a Bayes-CCE $\psi$ when $M\rightarrow\infty$, then $\hat{V}(t'_i)\rightarrow V(t'_i)$ when $M\rightarrow\infty$.
\end{theorem}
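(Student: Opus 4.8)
The plan is to unwind what the accumulator $va[i,t'_i,I_{0,t'_i}]$ actually stores and to show that, after normalization by $|\mathcal{M}_{t'_i}|$, it is a strongly consistent Monte Carlo estimator of $V(t'_i)$. Write $g(s):=\mathbb{E}_{t_{-i}\sim p_{-i}}u_i(t'_i;s(t'_i,t_{-i}))$, so that $V(t'_i)=\mathbb{E}_{s\sim\psi}g(s)=\sum_{s\in\Sigma}\psi(s)g(s)$; since the driving game is finite, $\Sigma$ is finite and $g$ is bounded, which is exactly what makes the hypothesis $\phi/M\to\psi$ usable at the end. Tracing the \textsc{MCCFR} procedure of Algorithm~1, the quantity added to $va[i,t'_i,I_{0,t'_i}]$ at an iteration $\tau$ with $t^\tau_i=t'_i$ is the sampled counterfactual value $\tilde{v}_i(\sigma^{\tau-},I_{0,t'_i};z^\tau)$ of the root information set, evaluated with the strategy $\sigma^{\tau-}$ held at the \emph{start} of iteration $\tau$ (the forward pass samples $z^\tau$ and builds the reach and sampling weights from $\sigma^{\tau-}$, and the chance stage has already been resolved by the draw $t^\tau\sim p$ on line~4, so no prior weight enters the prefix term). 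Denote this increment by $Y_\tau$, with the convention $Y_\tau:=0$ when $t^\tau_i\neq t'_i$; then $va[i,t'_i,I_{0,t'_i}]=\sum_{\tau=1}^{M}Y_\tau$ and $\hat V(t'_i)=\left(\sum_{\tau=1}^{M}Y_\tau\right)\big/\,|\mathcal{M}_{t'_i}|$.

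The next step is a one-step conditional-expectation identity. Let $\mathcal{F}_{\tau-1}$ be the history through iteration $\tau-1$, so that $\sigma^{\tau-}$ is $\mathcal{F}_{\tau-1}$-measurable while the type draw $t^\tau$ and the forward-pass coins of iteration $\tau$ are fresh. The unbiasedness property $\mathbb{E}_{z\sim q}\tilde{v}_i(\sigma,I;z)=v_i(\sigma,I)$ recalled above, applied with the chance stage pre-resolved, gives $\mathbb{E}[Y_\tau\mid\mathcal{F}_{\tau-1},\,t^\tau=t]=\mathbb{E}_{s\sim\sigma^{\tau-}}u_i(t'_i;s(t))$ for every $t$ with $t_i=t'_i$. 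Averaging over such $t$ and using the product structure of the common prior from Assumption~1 (so that conditioning on $t^\tau_i=t'_i$ leaves $t^\tau_{-i}\sim p_{-i}$) yields $\mathbb{E}[Y_\tau\mid\mathcal{F}_{\tau-1}]=p_i(t'_i)\,\mathbb{E}_{s\sim\sigma^{\tau-}}g(s)$. The centered increments $Y_\tau-\mathbb{E}[Y_\tau\mid\mathcal{F}_{\tau-1}]$ are bounded, because the utilities are bounded and the forward-pass sampling probability satisfies $q(z^\tau)\ge\delta>0$, with $\delta$ the minimum leaf-sampling probability that is positive owing to the $\epsilon$-modulated sampling, exactly as in the proof of Theorem~1. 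Hence the same bounded-difference martingale strong law used there gives $\frac{1}{M}\sum_{\tau=1}^{M}Y_\tau-p_i(t'_i)\,\frac{1}{M}\sum_{\tau=1}^{M}\mathbb{E}_{s\sim\sigma^{\tau-}}g(s)\to 0$ almost surely, while the ordinary strong law of large numbers gives $|\mathcal{M}_{t'_i}|/M\to p_i(t'_i)>0$ (positive under the standing assumption that the common prior charges the type $t'_i$). Dividing, $\hat V(t'_i)$ has the same almost-sure limit as the Cesàro average $\frac{1}{M}\sum_{\tau=1}^{M}\mathbb{E}_{s\sim\sigma^{\tau-}}g(s)$, provided that limit exists.

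Finally I would identify this Cesàro average with $V(t'_i)=\mathbb{E}_{s\sim\psi}g(s)$, which is where the hypothesis enters. The strategy held at the start of iteration $\tau$ is precisely the strategy left at the end of iteration $\tau-1$, i.e.\ $\sigma^{\tau-}=\sigma^{\tau-1}$, so $\frac{1}{M}\sum_{\tau=1}^{M}\mathbb{E}_{s\sim\sigma^{\tau-}}g(s)$ and $\frac{1}{M}\sum_{\tau=1}^{M}\mathbb{E}_{s\sim\sigma^{\tau}}g(s)$ differ only by an $O(1/M)$ boundary term. Since the sampled plan satisfies $s^\tau\sim\sigma^\tau$ conditionally on the post-update history of iteration $\tau$, one further application of the bounded-difference martingale strong law gives $\frac{1}{M}\sum_{\tau=1}^{M}g(s^\tau)-\frac{1}{M}\sum_{\tau=1}^{M}\mathbb{E}_{s\sim\sigma^{\tau}}g(s)\to 0$ almost surely; and $\frac{1}{M}\sum_{\tau=1}^{M}g(s^\tau)=\sum_{s\in\Sigma}(\phi[s]/M)\,g(s)\to\sum_{s\in\Sigma}\psi(s)g(s)=V(t'_i)$ by the hypothesis $\phi/M\to\psi$ together with the finiteness of $\Sigma$ and the boundedness of $g$. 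Chaining these relations establishes $\hat V(t'_i)\to V(t'_i)$ almost surely on the full-measure event underlying the hypothesis. I expect the main difficulty to be bookkeeping rather than any new estimate: one must keep the ``start-of-iteration'' strategy $\sigma^{\tau-}$ that drives $Y_\tau$ cleanly separated from the ``end-of-iteration'' strategy $\sigma^\tau$ that drives the sampled plan $s^\tau$ (and hence $\phi/M$ and $\psi$), and bridge them by the one-step shift above; the remaining ingredients — unbiasedness of outcome sampling under the forced type, boundedness of the importance weights via $\delta>0$, and the martingale strong law itself — are essentially a rerun of the estimates already carried out for Theorem~1.
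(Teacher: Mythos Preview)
Your argument is correct and shares the same skeleton as the paper's proof: both unwind $va[i,t'_i,I_{0,t'_i}]$ via the unbiasedness of the sampled counterfactual value, connect $u_i(t'_i;\sigma^\tau(t^\tau))$ to the sampled plans through (\ref{s_sigma}), and then use the hypothesis $\phi/M\to\psi$ together with a law of large numbers. The technical route differs, however. The paper first computes $\mathbb{E}[\hat V(t'_i)]$, introduces an auxiliary family of samples $s^{\tau,k}\sim\sigma^\tau$ indexed by a second parameter $Q$, exchanges the $M$- and $Q$-limits, and finally upgrades convergence in mean to almost-sure convergence by invoking Kolmogorov's strong law (asserting independence of the per-iteration terms). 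You instead work directly with the filtration $\mathcal{F}_{\tau-1}$ and apply a bounded-difference martingale strong law twice, once to replace the MCCFR increments $Y_\tau$ by $p_i(t'_i)\,\mathbb{E}_{s\sim\sigma^{\tau-}}g(s)$ and once to replace $\mathbb{E}_{s\sim\sigma^{\tau}}g(s)$ by $g(s^\tau)$, with the one-step shift bridging $\sigma^{\tau-}$ and $\sigma^\tau$. This buys you a cleaner treatment of the dependence: the terms $\hat V^\tau(t'_i)$ are \emph{not} independent across $\tau$ (each $\sigma^\tau$ is history-dependent), so the paper's appeal to Kolmogorov's SLLN is on shaky ground, whereas your martingale formulation handles exactly this adapted structure. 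The cost is the extra bookkeeping you already flagged, namely keeping $\sigma^{\tau-}$ and $\sigma^\tau$ straight; that step is sound since they differ by a single boundary term of order $O(1/M)$.
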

\begin{proof}
For $V(t'_i)$ we have
\begin{equation}
    \begin{aligned}
    V(t'_i) &= \mathbb{E}_{s\sim\psi}\mathbb{E}_{t_{-i}\sim p_{-i}}u_i(t'_i;s(t'_i,t_{-i}))\\
    &=\frac{1}{p_i(t'_i)}\mathbb{E}_{s\sim\psi}p_i(t'_i)\mathbb{E}_{t_{-i}\sim p_{-i}}u_i(t'_i;s(t'_i,t_{-i}))\\
    &=\frac{1}{p_i(t'_i)}\mathbb{E}_{s\sim\psi}\mathbb{E}_{t_{i}\sim p_{i}}\textbf{1}\{t_i=t'_i\}\mathbb{E}_{t_{-i}\sim p_{-i}}u_i(t_i;s(t_i,t_{-i}))\\
    &=\frac{1}{p_i(t'_i)}\mathbb{E}_{s\sim\psi}\mathbb{E}_{t\sim p}\textbf{1}\{t_i=t'_i\}u_i(t_i;s(t))
    \end{aligned}
\end{equation}
where $\textbf{1}\{\cdot\}$ is a binary function such that it equals to $1$ if the condition holds and $0$ otherwise. $p_i(t'_i)$ is the probability of $t'_i$. For $\hat{V}(t'_i)$, we examine its expectation
\begin{equation}
\begin{aligned}
    \mathbb{E}[\hat{V}(t'_i)]&=\frac{1}{|\mathcal{M}_{t'_i}|}\sum_{\tau\in\mathcal{M}_{t'_i}}\mathbb{E}_{z\sim q(z)}\frac{1}{q(z)}\pi^{\sigma^\tau(t^\tau)}(z)u_i(t'_i;z)\\
    &=\frac{1}{|\mathcal{M}_{t'_i}|}\sum_{\tau\in\mathcal{M}_{t'_i}}u_i(t'_i;\sigma^\tau(t^\tau))
\end{aligned}
\end{equation}
which follows from ~\cite[Lemma 1]{lanctot2009monte}. Plug in (\ref{s_sigma}) and we have
\begin{equation}
\begin{aligned}
&\mathbb{E}[\hat{V}(t'_i)]=\frac{M}{|\mathcal{M}_{t'_i}|}\frac{1}{M}\sum_{\tau\in M}\mathbb{E}_{s^\tau\sim\sigma^\tau}\textbf{1}\{t^\tau_i=t'_i\}u_i(t_i;s^\tau(t^\tau))\\
&=\frac{M}{|\mathcal{M}_{t'_i}|}\frac{1}{M}\sum_{\tau\in M}\lim_{Q\rightarrow\infty}\frac{1}{Q}\sum_{k\in Q}\textbf{1}\{t^\tau_i=t'_i\}u_i(t_i;s^{\tau,k}(t^\tau))\\
&=\frac{M}{|\mathcal{M}_{t'_i}|}\lim_{Q\rightarrow\infty}\frac{1}{Q}\sum_{k\in Q}\frac{1}{M}\sum_{\tau\in M}\textbf{1}\{t^\tau_i=t'_i\}u_i(t_i;s^{\tau,k}(t^\tau))
\end{aligned}
\end{equation}
where $s^{\tau,k}$ is sampled from $\sigma^\tau$. The above equation follows from the law of large numbers. Obviously, when $M\rightarrow \infty$, we have
\begin{equation}
\begin{aligned}
    \frac{1}{M}\sum_{\tau\in M}\textbf{1}\{t^\tau_i=t'_i\}u_i(t_i;s^{\tau,k}(t^\tau))\rightarrow\\
    \mathbb{E}_{s\sim\psi}\mathbb{E}_{t\sim p}\textbf{1}\{t_i=t'_i\}u_i(t_i;s(t))
\end{aligned}
\end{equation}
as $\{s^{\tau,k}\}_{\tau\in M}$ can be viewed as samples from the Bayes-CCE $\psi$, and $\{t^\tau\}_{\tau\in M}$ are samples from $p$. Again, the law of large numbers is applied. Therefore,
\begin{equation}
\begin{aligned}
&\lim_{M\rightarrow\infty}\mathbb{E}[\hat{V}(t'_i)]\\
&=\lim_{M\rightarrow\infty}\frac{M}{|\mathcal{M}_{t'_i}|}\lim_{Q\rightarrow\infty}\frac{1}{Q}\sum_{k\in Q}\mathbb{E}_{s\sim\psi}\mathbb{E}_{t\sim p}\textbf{1}\{t_i=t'_i\}u_i(t_i;s(t))\\
&=\lim_{M\rightarrow\infty}\frac{M}{|\mathcal{M}_{t'_i}|}\mathbb{E}_{s\sim\psi}\mathbb{E}_{t\sim p}\textbf{1}\{t_i=t'_i\}u_i(t_i;s(t))\\
&=\frac{1}{p_i(t'_i)}\mathbb{E}_{s\sim\psi}\mathbb{E}_{t\sim p}\textbf{1}\{t_i=t'_i\}u_i(t_i;s(t))=V(t'_i).
\end{aligned}
\end{equation}
Noted that each term $\hat{V}^\tau(t'_i)=\pi^{\sigma^\tau(t^\tau)}/q(z)u_i(t'_i;z)$, is independent and with bounded covariance, following the Kolmogorov’s strong law of large numbers ~\cite[Theorem 2.3.10]{sen2017large}, we conclude that $\hat{V}(t'_i)\rightarrow V(t'_i)$ when $M\rightarrow\infty$.
\end{proof}

Theorem 2 indicates that $\hat{V}(t'_i)$ is an unbiased and consistent estimator of $V(t'_i)$. Moreover, it can be seen from Algorithm 1 that calculating $\hat{V}(t'_i)$, which corresponds to line 33 of the algorithm, adds only minimal computation load to the algorithm as no extra sampling is required. Therefore, we propose the decision-making strategy as 
\begin{equation}
\label{decision}
t^*_i=\argmax_{t'_i\in T_i}\hat{V}(t'_i)
\end{equation}
such that by selecting its type as $t^*_i$, the player $i$ makes the decision that maximizes its overall expected utility corresponding to the given Bayes-CCE $\psi$.
\begin{remark}
    The decision-making strategy presented does not distinguish between ego vehicle and surrounding vehicles. However, we should not assume that surrounding vehicles will adopt the same strategy since their decisions are subject to many factors (navigation, driving preference, bounded rationality, etc.) and are usually sub-optimal. Potential danger may occur if we assume that surrounding vehicles act optimally. Therefore, (\ref{decision}) is only used to make decisions for the ego vehicle but not to predict the driving purposes of surrounding vehicles.
\end{remark}

Once the type $t^*_i$ is determined, we can select the optimal plan $s^*_i$ from the Bayes-CCE $\psi$, such that the trajectory is given by $s^*_i(t^*_i)$. We introduce two different schemes to perform the Bayes-CCE $\psi$ and select the optimal plan.

\textbf{Accurate implementation of Bayes-CCE $\psi$}: since a Bayes-CCE is correlated, the accurate performance of a Bayes-CCE will require either negotiation beforehand (which is unlikely in urban traffic scenarios) or a leader-follower assumption which is often adopted in the Stackelberg game formulation of traffic interactions ~\cite{hang2021decision,fisac2019hierarchical}. Without loss of generality, we assume that player 1 is the leader, and player $i$ is a direct follower of player $i-1$ for all $i\in\mathcal{N}$ and $i\neq 1$. The joint probability distribution $\psi$ can be decomposed into a series of marginal distributions and conditional distributions, such that the probability of a joint plan $s=\{s_i\}_{i\in\mathcal{N}}$ is given by
\begin{equation}
\psi(s)=\psi_1(s_1)\prod_{i\in\mathcal{N},i\neq 1}\psi'_i(s_i|\{s_j\}_{j<i})
\end{equation}
such that $\psi_1$ is the marginal distribution of the plan corresponding to player $1$, and $\psi'_i$ is the conditional distribution of the plans of player $i$ conditioned on the plan of all the players before $i$. If plans are selected by maximum likelihood, the plans selected are
\begin{equation}
    s^*_1 = \argmax_{s_1}\psi_1(s_1), s^*_i=\argmax_{s_i}\psi'_i(s_i|\{s^*_j\}_{j\leq i}).
\end{equation}

We consider this scheme to be overly idealistic due to the following reasons:
\begin{itemize}
    \item Although being widely adopted, the leader-follower assumption is not always valid, especially when more than two vehicles are involved.
    \item Due to partial observation, vehicle $i$ cannot know for sure the plan $s_j$ adopted by vehicle $j$. At best, it can only maintain a belief over $s_j$.
\end{itemize}

Due to these reasons, we introduce the following marginal implementation of the Bayes-CCE.

\textbf{Marginal implementation of Bayes-CCE $\psi$}: to avoid the problem mentioned before, we propose an approximation performance scheme of the Bayes-CCE $\psi$. We define $\psi_i$ as the marginal distribution of the plan of player $i$, and then the plan is determined as
\begin{equation}
    s^*_i = \argmax_{s_i}\psi_i(s_i).
\end{equation}

The marginal implementation of Bayes-CCE is used in the simulation. Once the optimal plan $s^*_i$ is determined by one of the schemes introduced before, the planning process is conducted by taking $s^*_i(t^*_i)$ with the optimal decision $t^*_i$. As a result, an integrated decision-making and trajectory planning scheme is introduced based on the Bayes-CCE.

\subsection{Bayesian Intention Update}

% \begin{figure*}[t]
% \centering
% \includegraphics[scale=0.275]{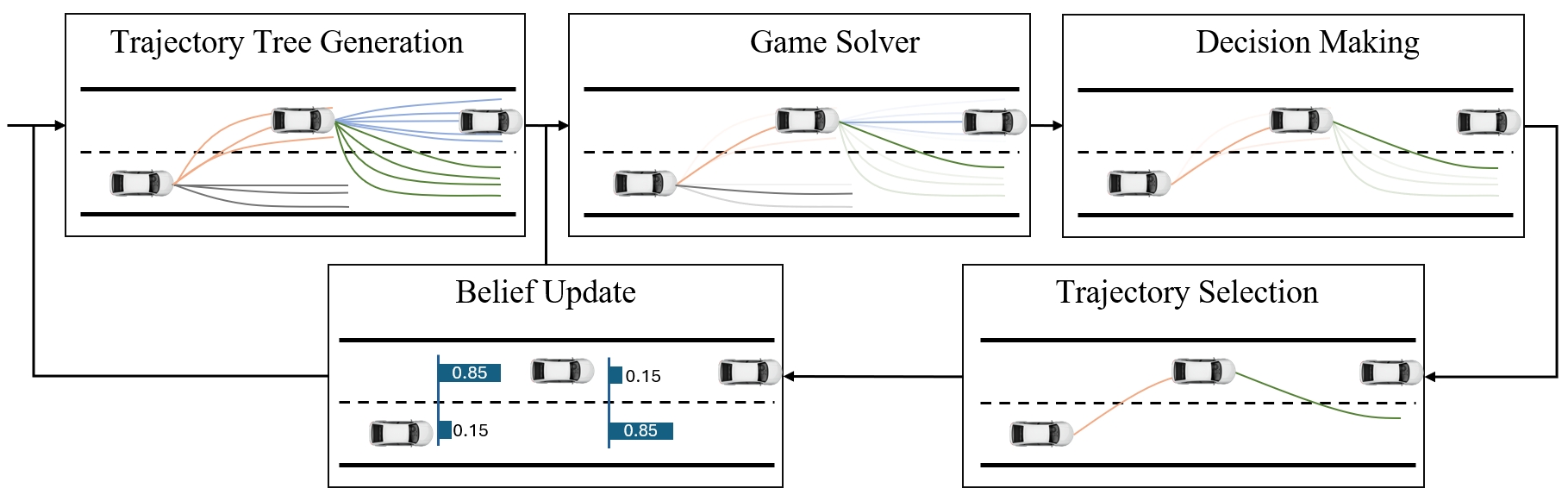}
% \caption{System overview.}
% \label{fig:system}
% \end{figure*}

Following a Bayes-CCE, the probabilities of actions taken by vehicles depend on their underlying driving intentions. Therefore, the actual actions performed by vehicles provide extra information gains over the belief of driving intentions. In this section, we introduce strategies to update the common belief over the driving intentions of all vehicles based on the observed actions. Corresponding to different implementations of the Bayes-CCE, different updating strategies are introduced.

\textbf{Update strategy for the accurate implementation}: suppose that at time stamp $\lambda$ the common prior on intentions of vehicles is given as $p^{\lambda}$, the Bayes-CCE is given as $\psi^\lambda$, and the action performed by all vehicles is $a^\lambda=\{a_i^\lambda\}_{i\in\mathcal{N}}$. The following equation gives the updated common prior $p^{\lambda+1}$
\begin{equation}
    p^{\lambda+1}(t)=\frac{f(a^\lambda,t;\psi^\lambda)p^\lambda(t)}{\sum_{t'\in T}f(a^\lambda,t';\psi^\lambda)p^\lambda(t')}
\end{equation}
which follows the standard Bayes' rule of posterior probability. $f$ is the observation model such that $f(a^\lambda,t;\psi^\lambda)$ gives the probability that action $a^\lambda$ is taken if all players follow the Bayes-CCE $\psi^\lambda$ and their underlying types are defined by $t$. To effectively model $f$, we adopt the Mixture-of-Gaussians distributions, and $f$ is defined as
\begin{equation}
    f(a^\lambda,t;\psi^\lambda)=\sum_{s}\eta(\mu(a^\lambda);s(t))\psi^\lambda(s(t))
\end{equation}
where $\eta(\cdot;s(t))$ is the probability density function of a Gaussian distribution $N(\mu(s(t)),\Sigma)$. $\mu(a^\lambda)$ and $\mu(s(t))$ are the end states of vehicles when action $a^\lambda$ and $s(t)$ are taken, respectively. $\Sigma$ is a predefined covariance matrix.

\textbf{Update strategy for the marginal implementation}: under this implementation, the update rule is similarly given as
\begin{equation}
p_i^{\lambda+1}(t_i)=\frac{f_i(a^\lambda_i,t_i;\psi^\lambda_i)p_i^\lambda(t_i)}{\sum_{t'_i\in T_i}f_i(a^\lambda_i,t'_i;\psi^\lambda_i)p_i^\lambda(t'_i)}
\end{equation}
where
\begin{equation}
f_i(a^\lambda_i,t_i;\psi^\lambda_i)=\sum_{s_i}\eta(\mu(a^\lambda_i);s_i(t_i))\psi^\lambda_i(s_i(t_i)).
\end{equation}
This update strategy is performed individually for all $i\in\mathcal{N}$.

\begin{remark}
    In the updating process, the belief of intentions is updated for both the surrounding vehicles and also the ego vehicle. Following Assumption 1, the updated belief is still common among all the vehicles involved in the game. 
\end{remark}

\subsection{System Overview and Details of Implementation}

Combining all aforementioned modules, we introduce Algorithm 2, which is an integrated decision-making and trajectory planning framework for autonomous driving. Detailed explanations are given as follow: 

\begin{algorithm}
\caption{Integrated Decision Selection and Trajectory Planning for Autonomous Vehicle}\label{alg:alg2}
\begin{algorithmic}[1]
\State \textbf{Init} $p,M$
\State $j \leftarrow \textup{getOwnID}()$
\While{True}
\For{$i\in N$}
\State $x_i\leftarrow \textup{getPosition}(i)$
\For{$t_i\in T_i$}
\State $Tree(i,t_i)\leftarrow \textup{getTrajTree}(x_i,i,t_i)$
\EndFor
\EndFor
\State $G\leftarrow \textup{getGame}(Tree,\{x_i\}_{i\in N})$
\State $\phi,va\leftarrow \textup{Bayes-CCE}(G,M,p)$
\State $\psi\leftarrow\phi/M$
\For{$t_j\in T_j$}
\State $\hat{V}(t_j)\leftarrow va[j,t_j,I_{0,t_j}]/p_j(t_j)$
\EndFor
\State $t^*_j\leftarrow\argmax_{t_j\in T_j}\hat{V}(t_j)$
\State $s^*_j\leftarrow\argmax_{s_j}\psi_j(s_j)$
\State $a_j\leftarrow s^*_j(t^*_j)$
\For{$i\in N$}
\State $a_i\leftarrow \textup{getObservation(i)}$
\EndFor
\State $p\leftarrow \textup{getUpdate}(p,\psi,\{a_i\}_{i\in N})$
\EndWhile
\end{algorithmic}
\label{alg2}
\end{algorithm}

% The system structure is shown in Fig. \ref{fig:system}.
\begin{enumerate}
    \item A two-stage trajectory tree is constructed for each driving intention of each vehicle, which constitutes the action space (line 4 to 9). The methodology used to construct those trajectory trees is the same as the one used in \cite{mustafa2024racp}. 
    \item Given the sets of trajectory trees and the prior distribution, a game of incomplete information is established, and the solver introduced in Section IV-A is invoked to solve for the Bayes-CCE (line 10 to 12).
    \item Given the Bayes-CCE, the optimal decision is selected following the strategy described in equation (\ref{decision}) (line 13 to 16).
    \item One of the two implementation strategies of Bayes-CCE described in Section IV-B is applied to select the optimal plan. Together with the optimal decision selected in the previous step, the trajectory is determined and executed (line 17 to 18).
    \item Given the Bayes-CCE and the new trajectories, the corresponding update strategy is invoked to update the driving intentions of all vehicles. Thus, a new common belief over the driving intentions is established, and the algorithm goes back to Step $1)$ for a new round of playing (line 19 to 22).
\end{enumerate}

The prior belief over the driving intentions is initialized with uniform distribution. Note that data-driven prediction methods can also be applied to perform the initialization and obtain a better estimation of the driving intentions based on historical trajectories.

Considering important performance indices such as passenger comfort and safety, the utility is defined as
\begin{equation}
    u_i(t_i;z)= -J^i_c(z) - J^i_s(z) - J^i_p(z) - J^i_\textup{ref}(z).
\end{equation}
The trajectories of all vehicles are determined by the action history $z$. To compute for the above performance indices, we sample along the trajectories of all vehicles with equal time spacing $h\in[0,1,..., H]$. The comfort indice $J^i_c$ is given by
\begin{equation}
\begin{aligned}
    J^i_c(z)=&\sum_{h=0}^H\omega_{a,lat}|a^h_{i,lat}(z)|^2+\omega_{j,lat}|j^h_{i,lat}(z)|^2\\
    &+\omega_{a,long}|a^h_{i,long}(z)|^2+\omega_{j,long}|j^h_{i,long}(z)|^2
\end{aligned}
\end{equation}
where $a^h_{i,lat}$, $a^h_{i,long}$, $j^h_{i,lat}$, and $j^h_{i,long}$ are the lateral acceleration, the longitudinal acceleration, the lateral jerk, and the longitudinal jerk of vehicle $i$ at $h$, respectively. $\omega_{a,lat}$, $\omega_{a,long}$, $\omega_{j,lat}$, and $\omega_{j,long}$ are the corresponding weighting coefficients. The progress indice $J^i_p$ is given by
\begin{equation}
    J^i_p(z)=\omega_p\sum_{h=0}^H\min\{v^h_{i,long}(z)-v_{slow},0\}^2
\end{equation}
such that longitudinal velocity $v^h_{i,long}$ smaller than $v_{slow}$ will be penalized. $\omega_p$ is the weighting coefficient. Indice $J^i_\textup{ref}$ is given by
\begin{equation}
    J^i_\textup{ref}(z) = \omega_\textup{ref}\sum_{h=0}^H x^h_{i,lat}(z)^2
\end{equation}
which penalizes the deviations from the reference line by penalizing the lateral displacement $x^h_{i,lat}$. $\omega_\textup{ref}$ is the corresponding weighting coefficient. For collision avoidance, we represent each vehicle with two identical circles aligned along the longitudinal axis of the vehicle. For vehicle $i$, the positions of the centers of circles are denoted as $x_{i,f}$ and $x_{i,r}$, respectively. The safety indice $J^i_s$ is defined as
\begin{equation}
J^i_s(z)=\omega_s\sum_{h=0}^H\sum_{
j,\beta,\gamma
}\min\{D(x^h_{i,\beta}(z)-x^h_{j,\gamma}(z))-d,0\}^2
\end{equation}
where $D(\cdot,\cdot)$ defines the Euclidean distance, $j\in\mathcal{N}-\{i\}$, and $\beta,\gamma\in\{f,r\}$. $\omega_s$ is the weighting coefficient. As a result, a negative utility (namely a penalty) is added towards the overall utility when vehicle $i$ fails to maintain a safe distance $d$ with respect to other vehicles. Values of parameters are shown in Table I.

\begin{table}[t]
\centering
\caption{Parameter settings used in simulations}
\begin{tabular}{p{0.65cm}p{0.65cm}p{0.65cm}p{0.65cm}p{0.65cm}p{0.65cm}p{0.65cm}p{0.65cm}}
\hline
Param. & Value & Param. & Value & Param.& Value & Param.& Value \\ \hline
$\omega_{a,lat}$ & 0.5 & $\omega_{a,long}$ & 1.0 &$\omega_{j,lat}$ & 0.5 & $\omega_{j,long}$ & 1.0 \\
$\omega_p$ & 20.0 & $\omega_\textup{ref}$  & 10.0 & $\omega_s$  & 2000.0 & $d$ & 4.0\,m \\ \hline
\end{tabular}
\end{table}

\section{Simulation Results}

To illustrate the effectiveness of the introduced method in handling multi-modal driving behaviors and to verify the generalizability, two case studies of different traffic scenarios and a quantitative analysis compared with a baseline are included in this section. In both case studies, each agent maintains its game solver and solves the game separately. The types of Human-driving vehicles are given and fixed while the autonomous vehicle determines its own type adopting the strategy described in (\ref{decision}). All algorithms are implemented in Python 3.8 running on a server with Intel(R) Xeon(R) Platinum 8358P CPU @ 2.60GHz.

\subsection{Case I: Ramp Merging}

\begin{figure*}[t]
\centering
\subfigure[Simulation results of Scenario A at 0.8\,s, 1.6\,s, 2.4\,s, and 3.2\,s, where HV1 yields and AV merges into the gap between the two HVs.]{
\includegraphics[scale=0.0355]{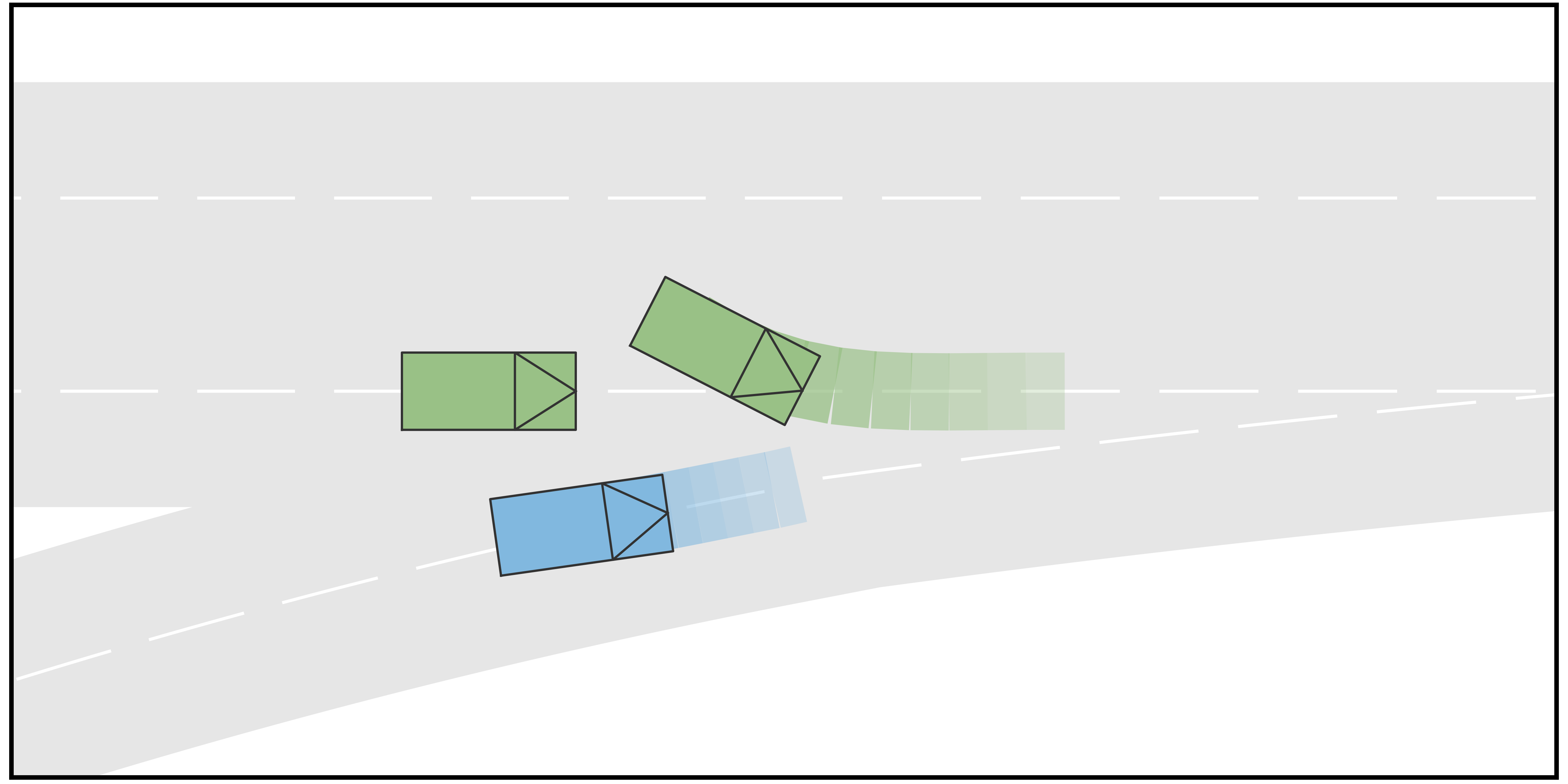}
\includegraphics[scale=0.0355]{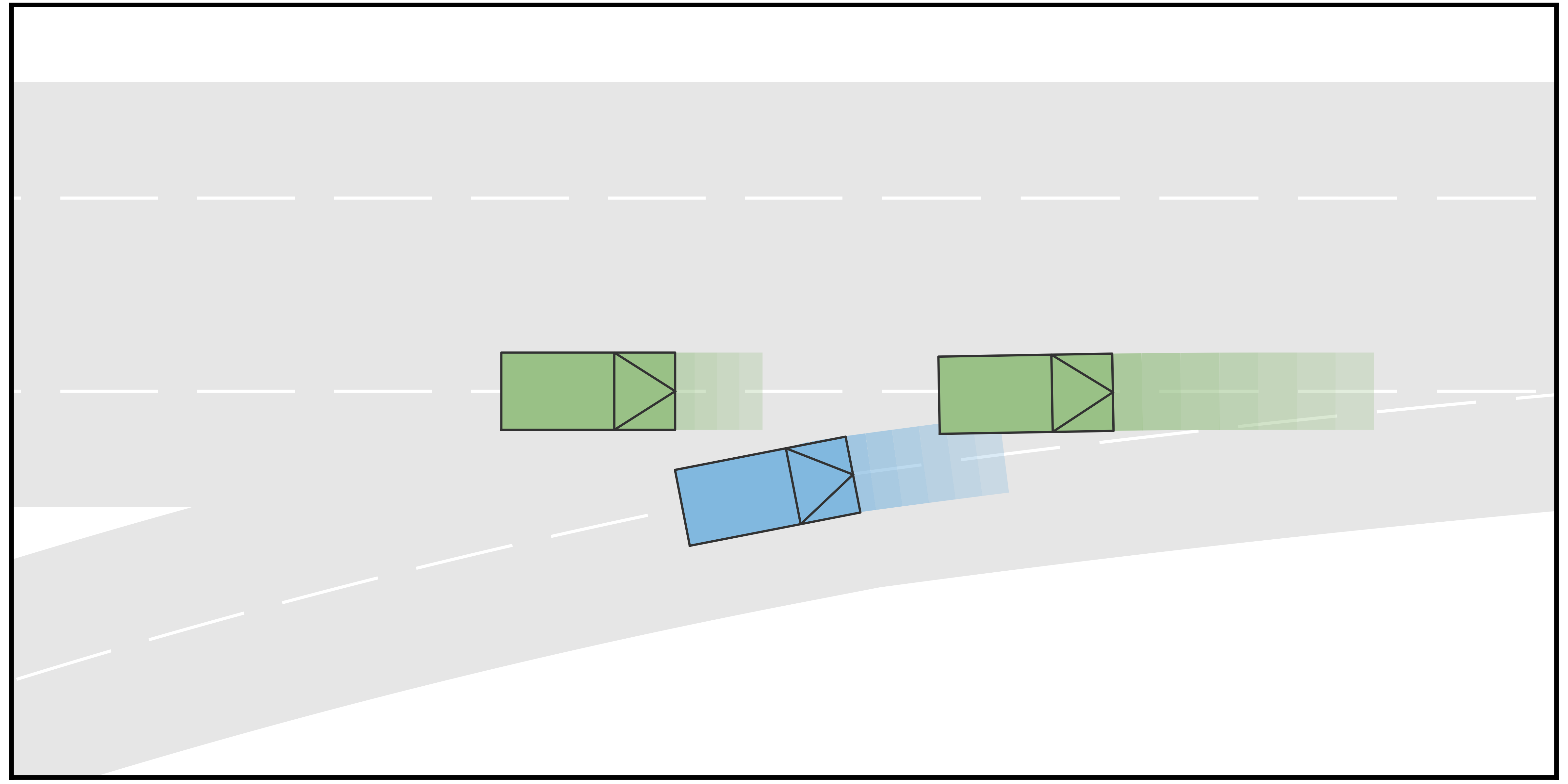}
\includegraphics[scale=0.0355]{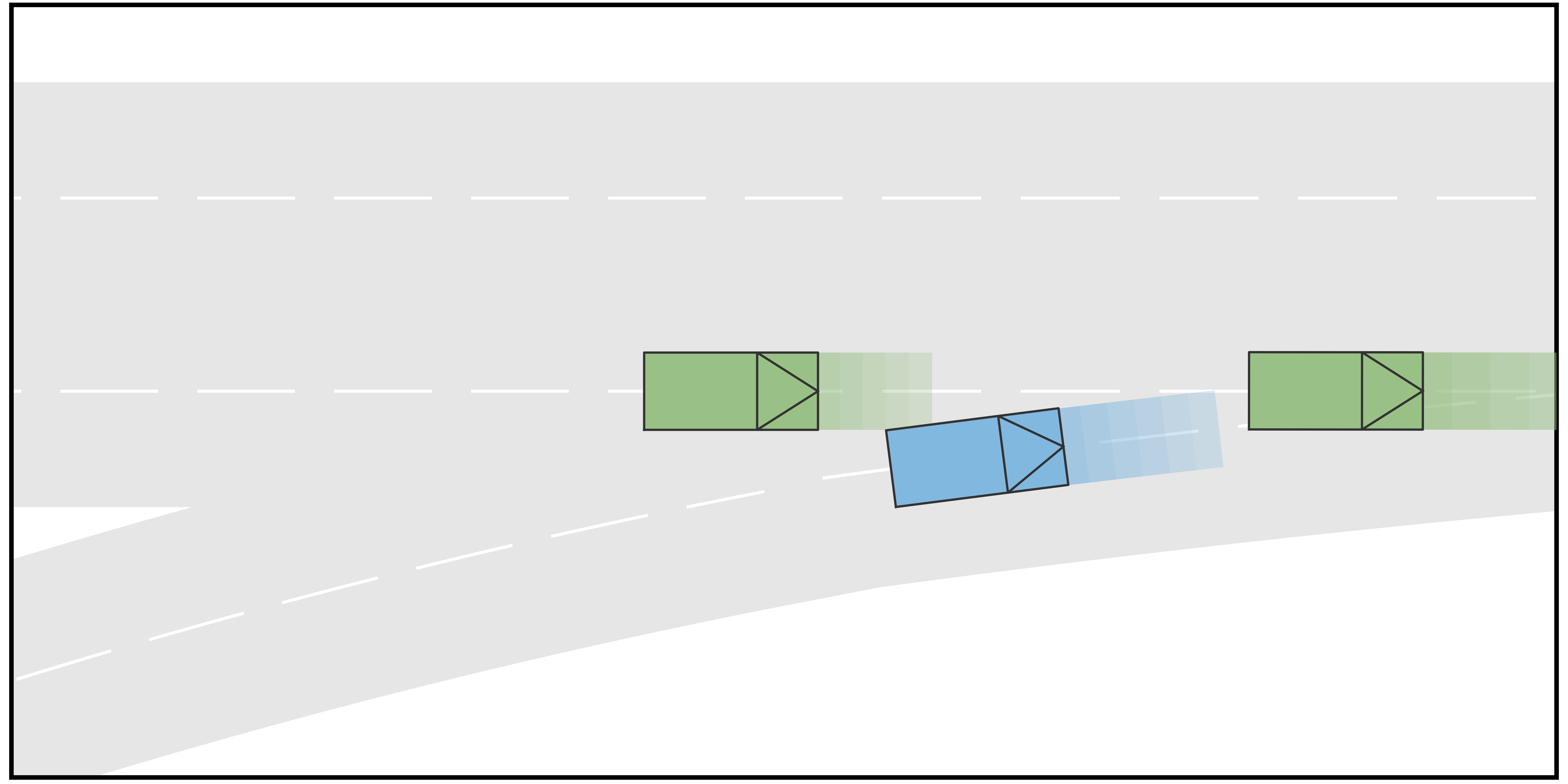}
\includegraphics[scale=0.0355]{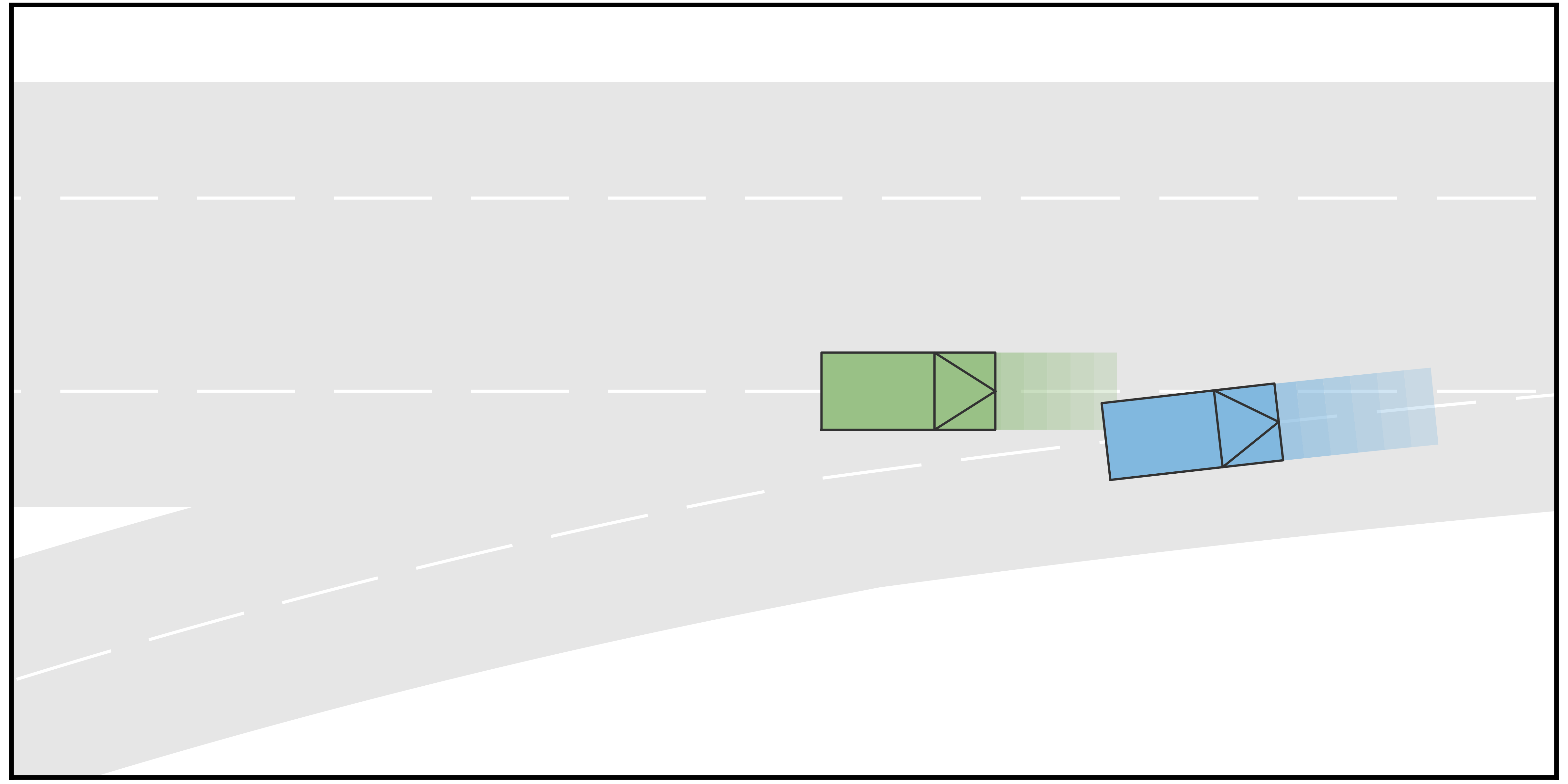}
}
\subfigure[Simulation results of Scenario B at 0.8\,s, 1.6\,s, 2.4\,s, and 3.2\,s, where HV1 does not yield and AV merges behind both HVs.]{
\includegraphics[scale=0.0355]{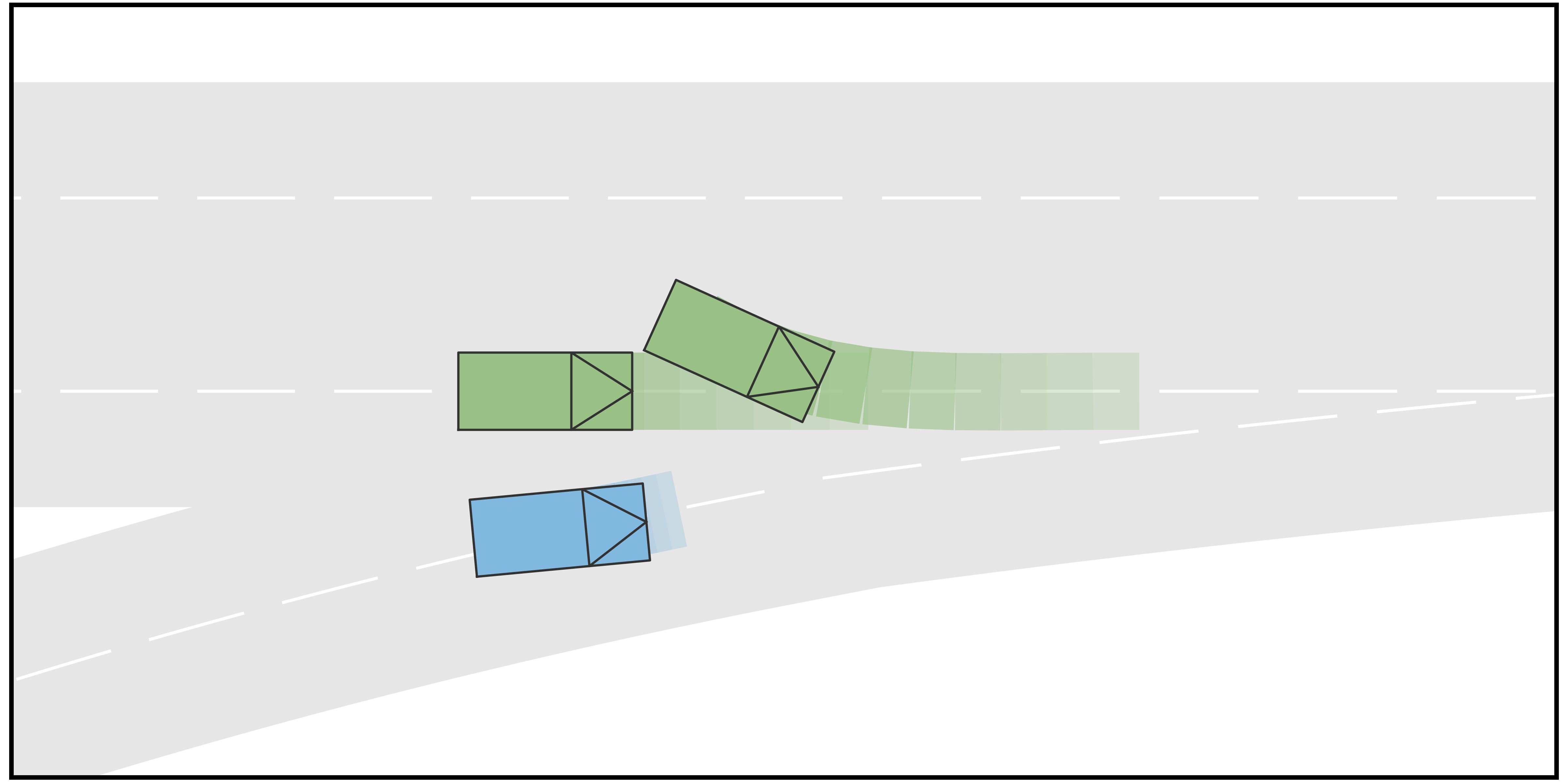}
\includegraphics[scale=0.0355]{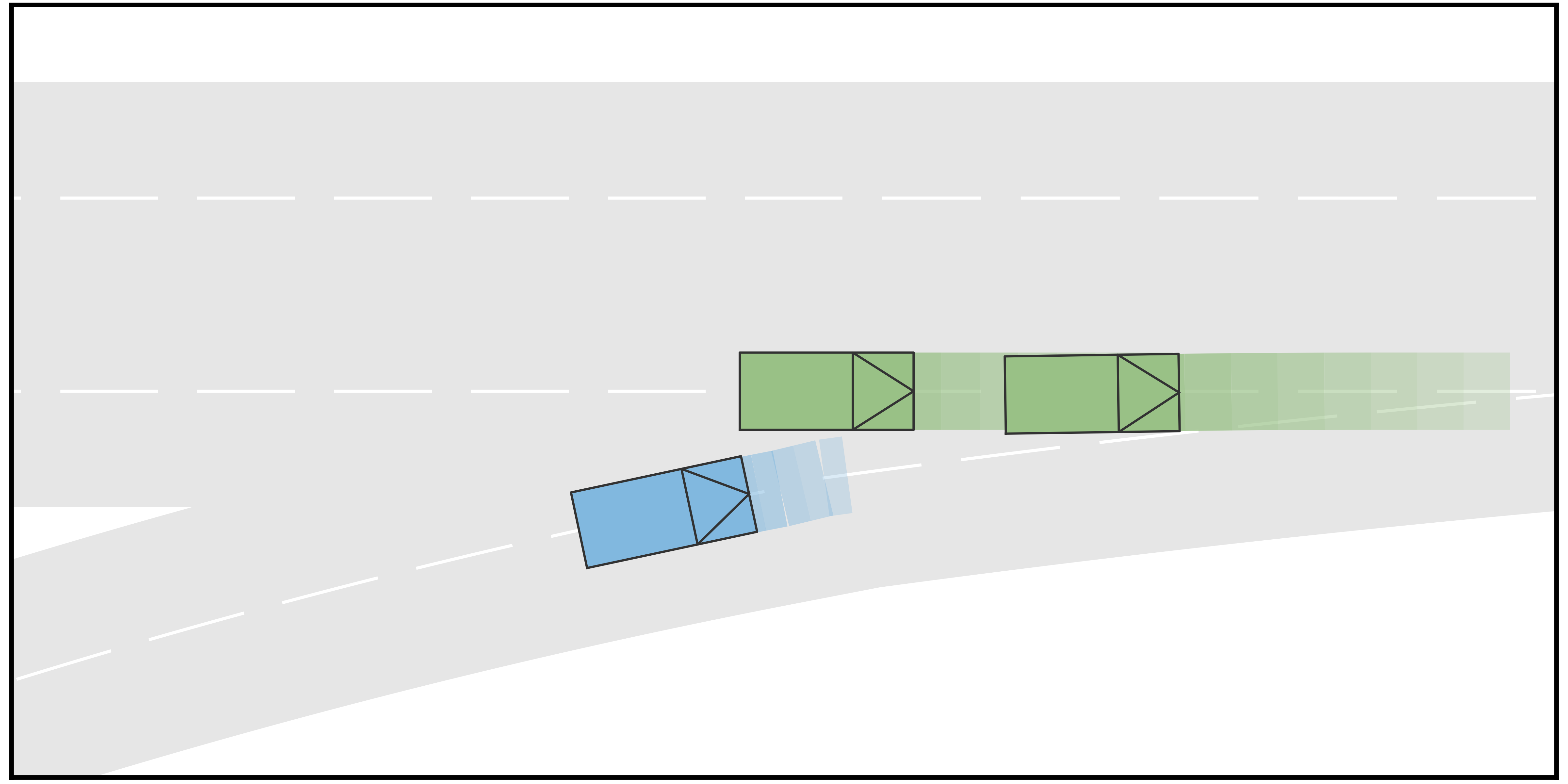}
\includegraphics[scale=0.0355]{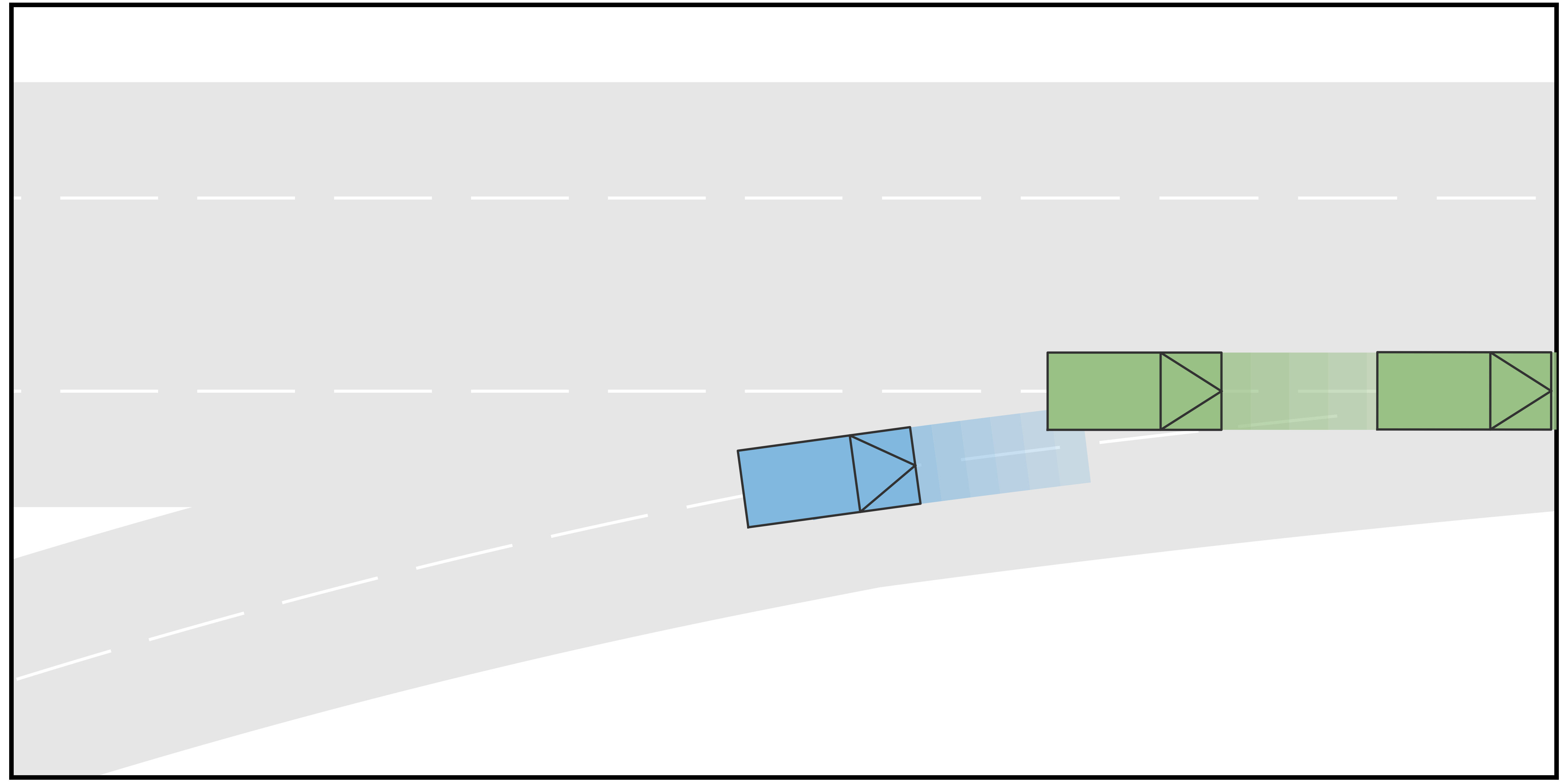}
\includegraphics[scale=0.0355]{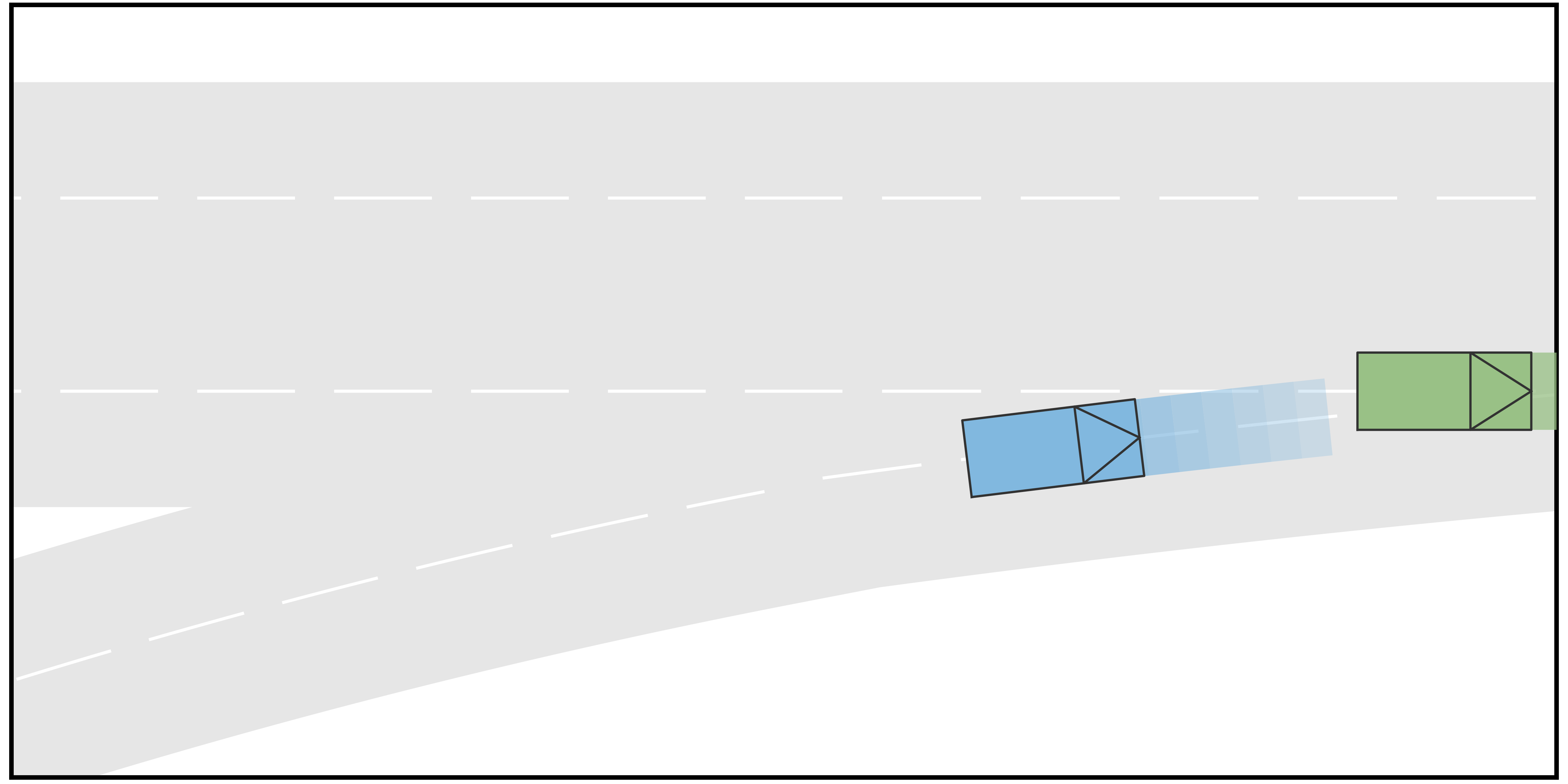}
}
\subfigure[Simulation results of Scenario C at 0.8\,s, 1.6\,s, 2.4\,s, and 3.2\,s. where HV2 yields and AV merges into the gap between the two HVs.]{
\includegraphics[scale=0.0355]{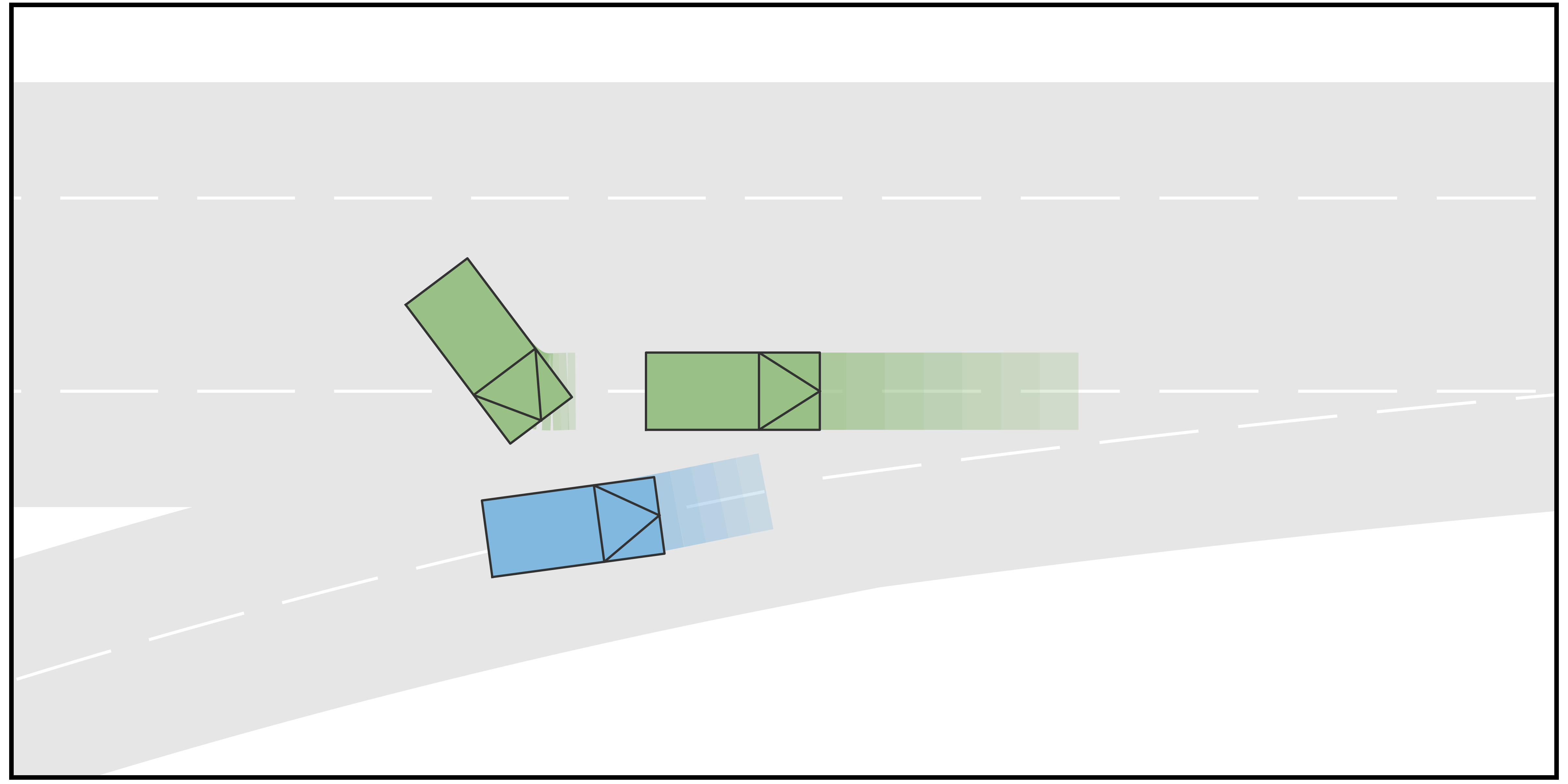}
\includegraphics[scale=0.0355]{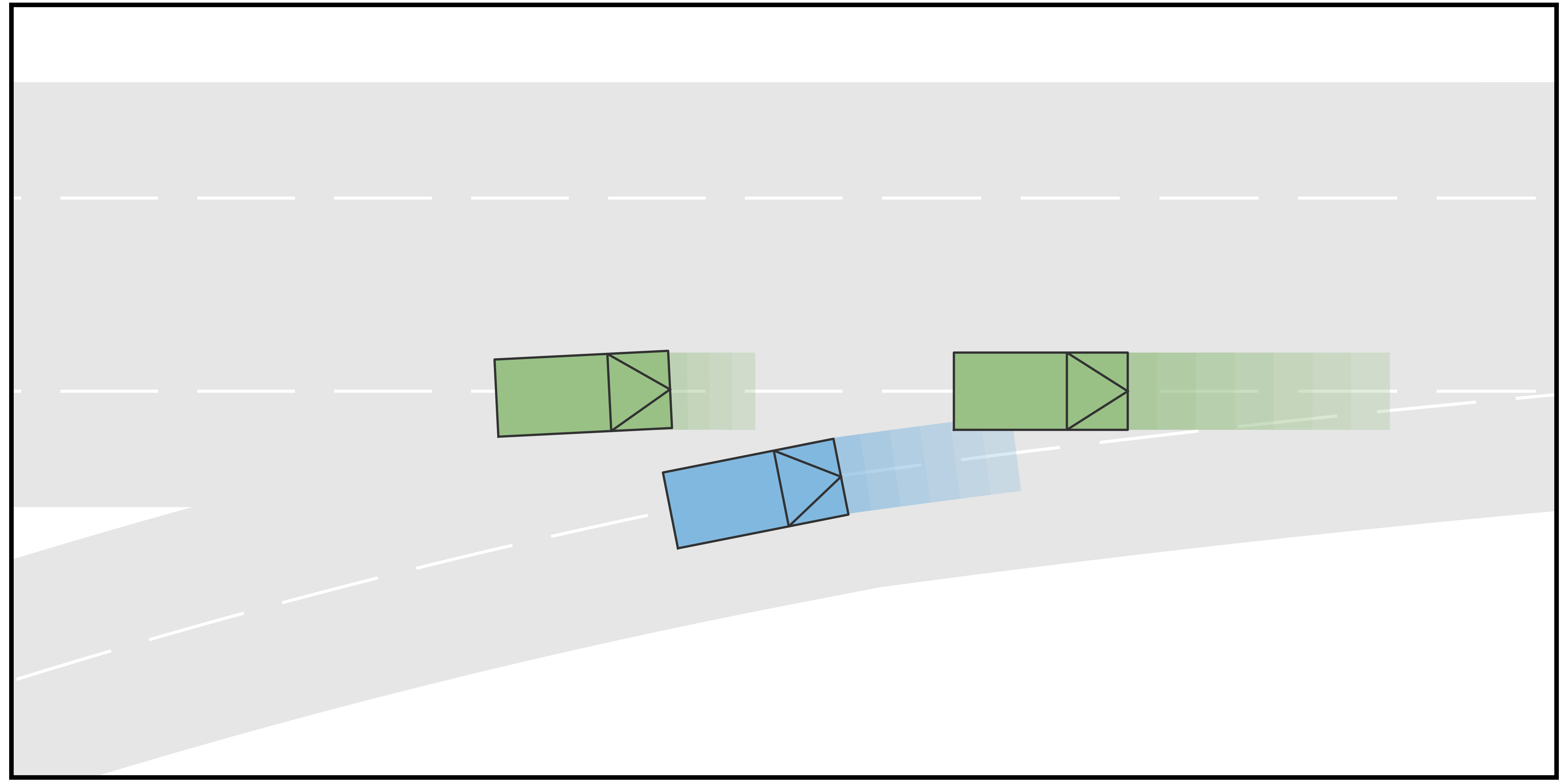}
\includegraphics[scale=0.0355]{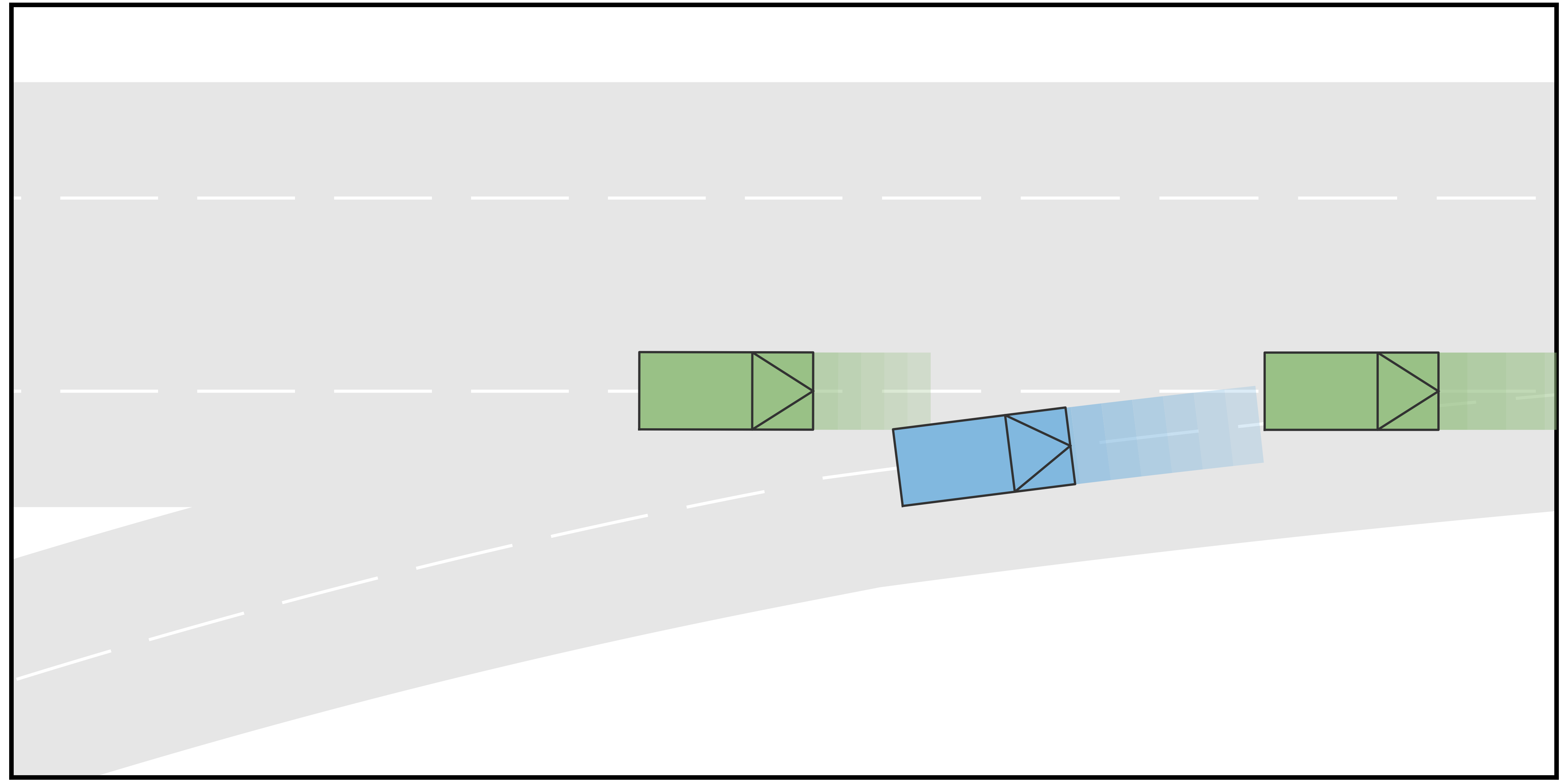}
\includegraphics[scale=0.0355]{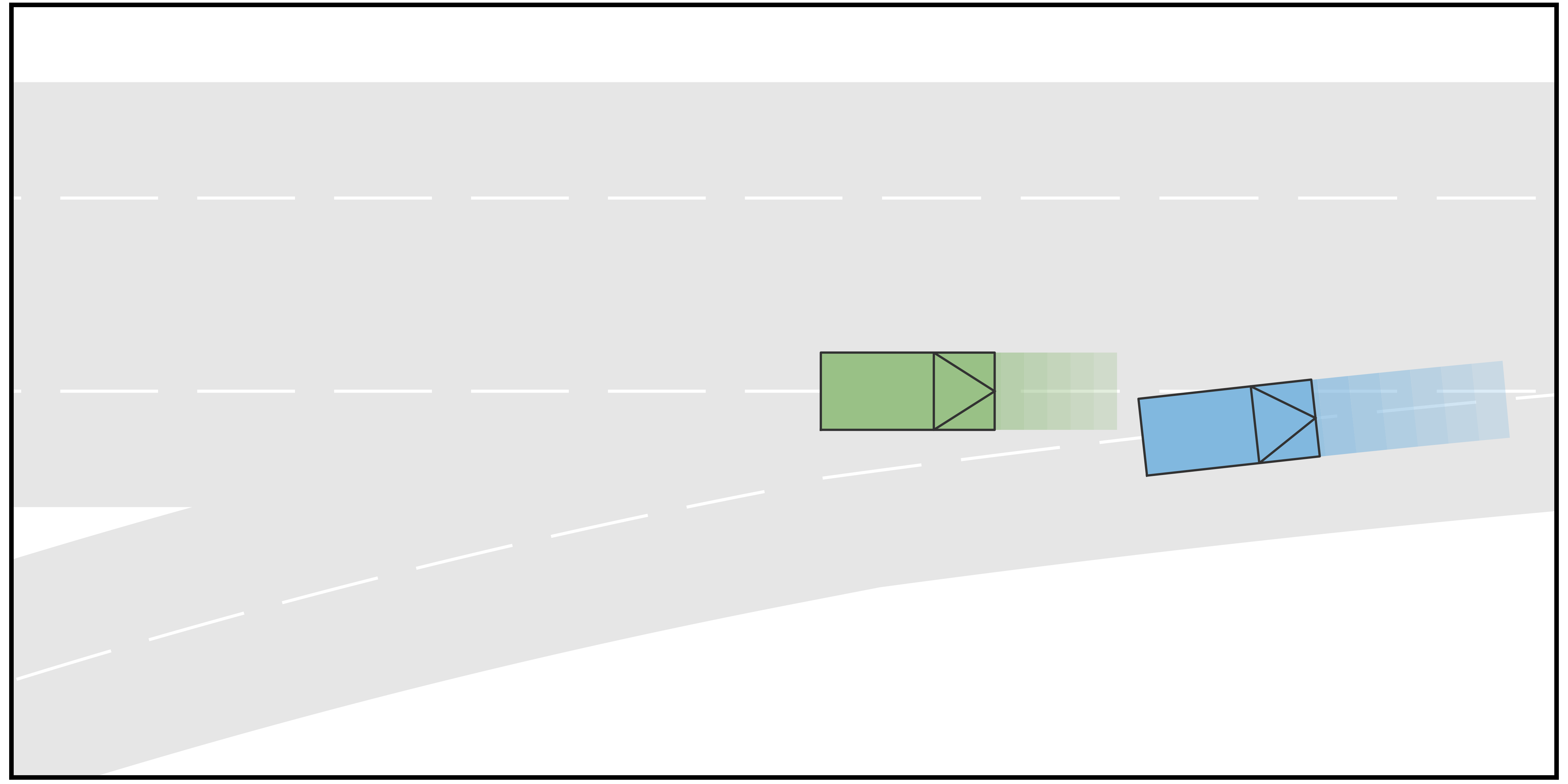}
}
\subfigure[Simulation results of Scenario D at 0.8\,s, 1.6\,s, 2.4\,s, and 3.2\,s, where HV2 does not yield and AV merges behind both HVs.]{
\includegraphics[scale=0.0355]{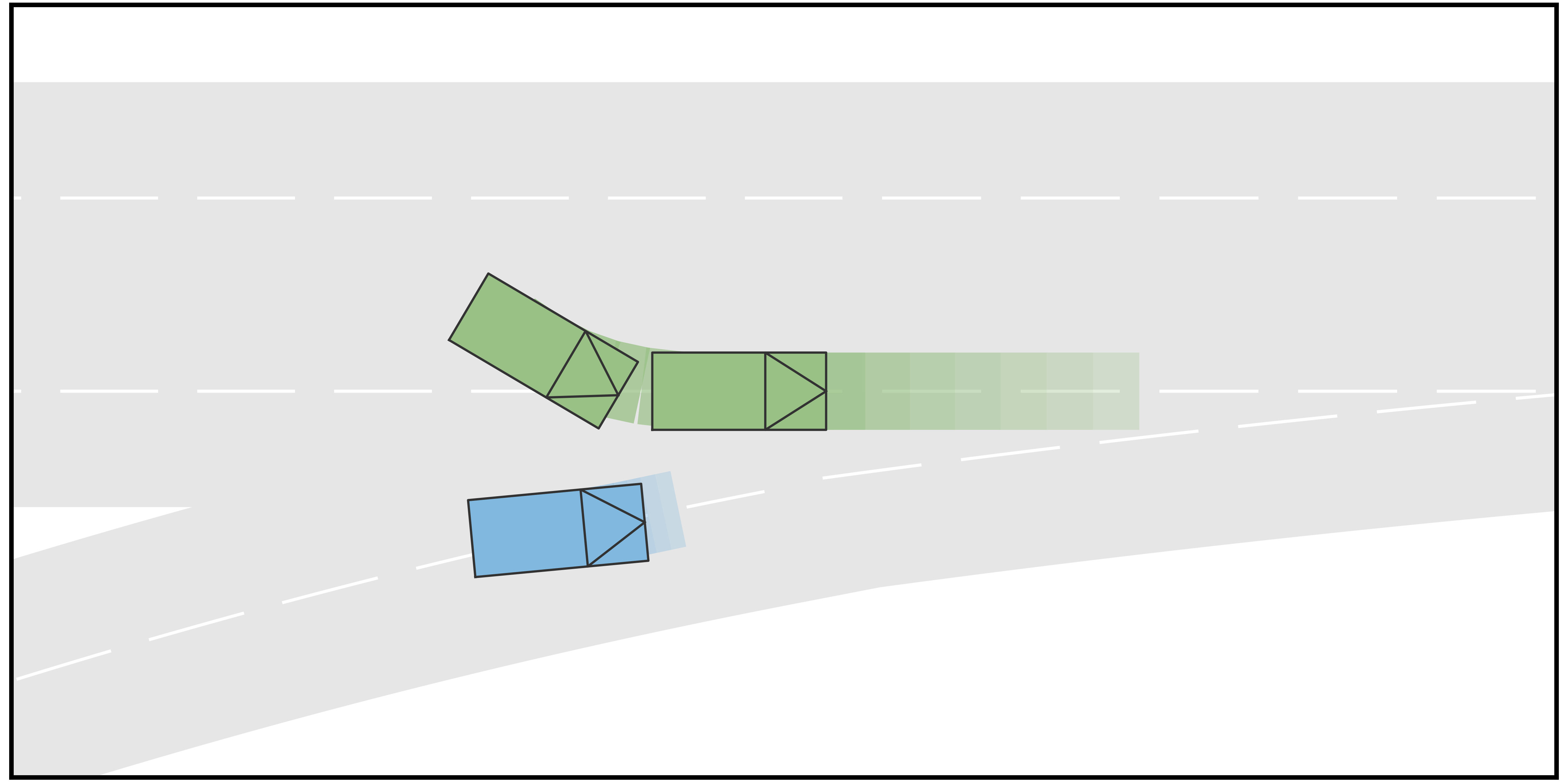}
\includegraphics[scale=0.0355]{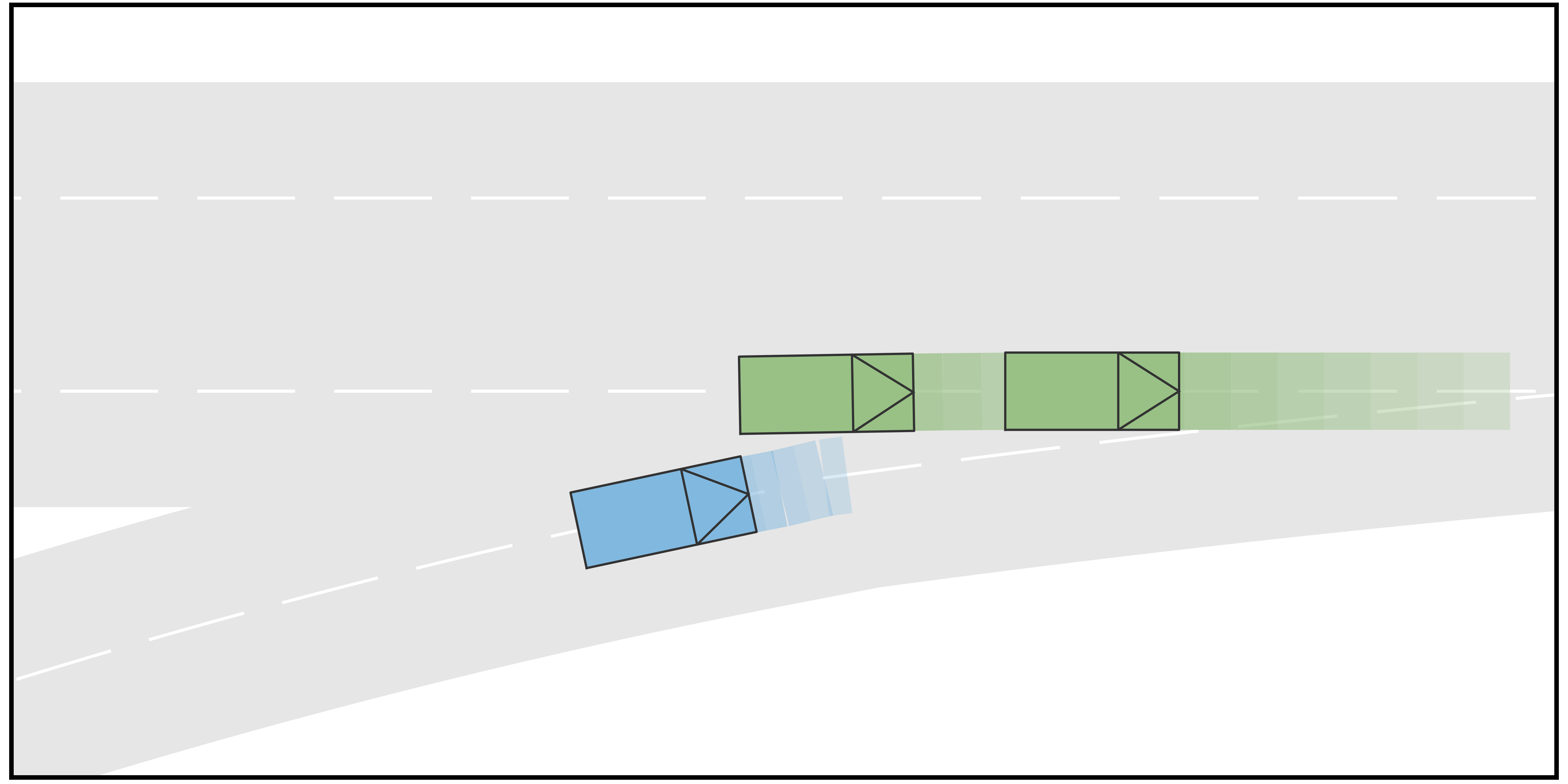}
\includegraphics[scale=0.0355]{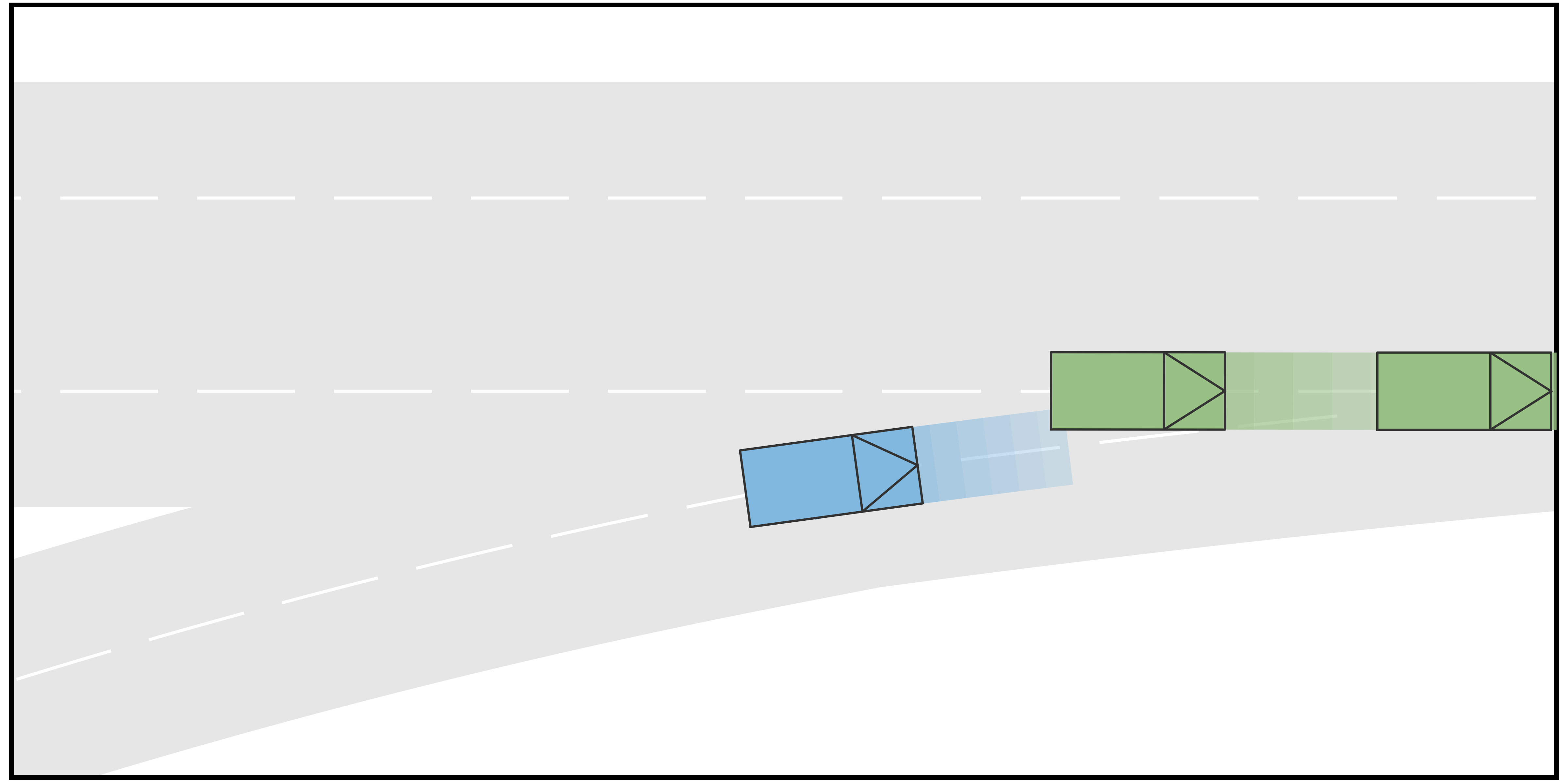}
\includegraphics[scale=0.0355]{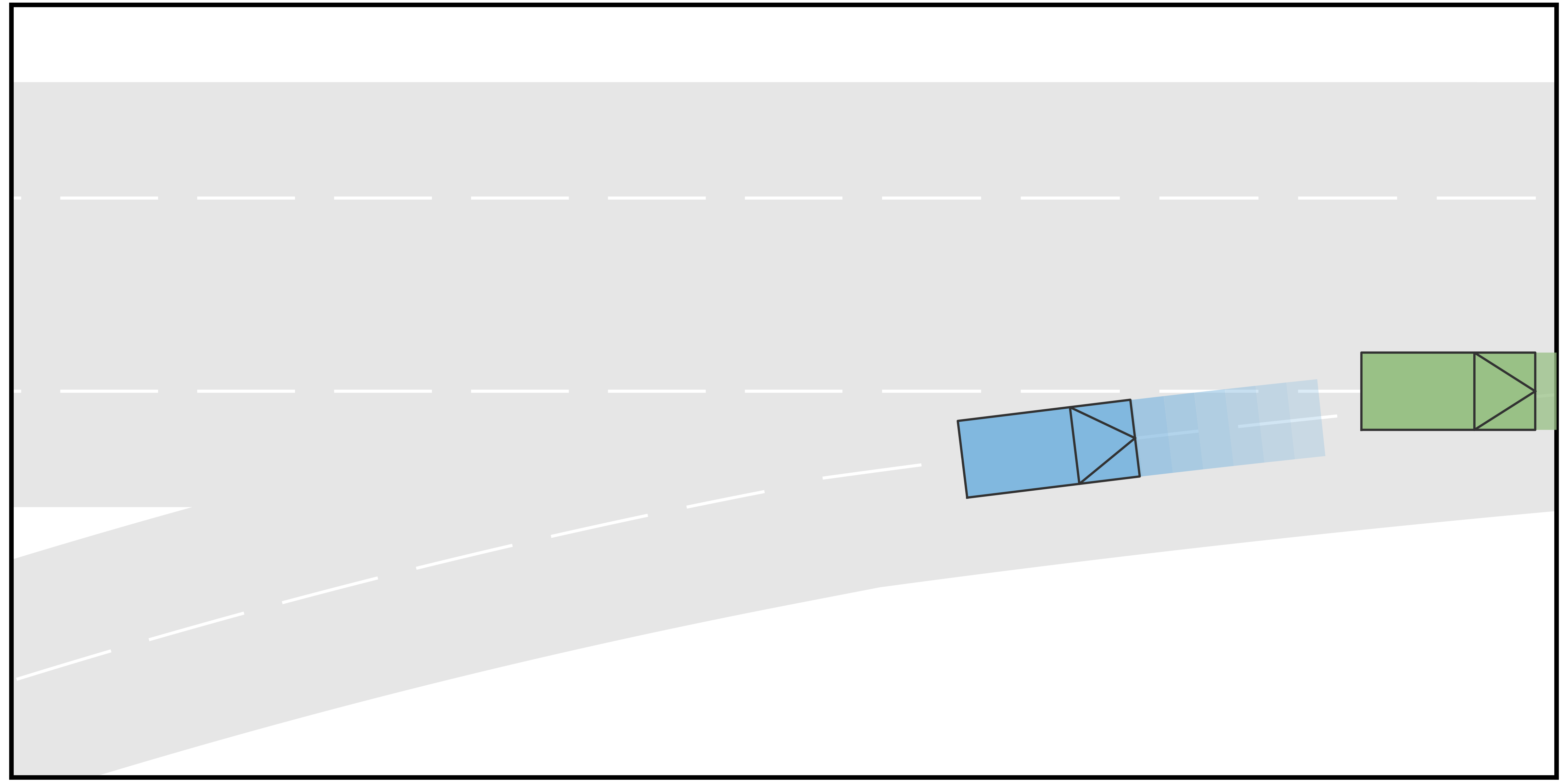}
}

\caption{Simulation results for Case I. In all scenarios, AV manages to merge into the main road without collision. Depending on the differentiated behaviors of HVs, AV either merges into the gap between two HVs or merges behind both HVs.}

\label{fig:case_A}
\end{figure*}

In this case, the goal of the autonomous driving vehicle (AV) is to merge into a two-lane road from the ramp. One human-driving vehicle (HV1) is currently driving in the target lane of AV, while another human-driving vehicle (HV2) is running in another lane of the main road but also tries to merge into the target lane of AV. The initial velocity of all three vehicles is $7\,\textup{m}/\textup{s}$. For the type settings, we consider each vehicle to be either a conservative-type vehicle or an aggressive-type vehicle. For each type, we sample the terminal longitudinal velocities that constitute the action space. In particular, for the aggressive type, the action space is determined to be $[7,8,10,12]\,\textup{m}/\textup{s}$, while for the conservative type, the action space is set to be $[6,4,2,0]\,\textup{m}/\textup{s}$. Furthermore, in this case, four different scenarios are considered. In Scenario A, the longitudinal initial positions of AV, HV1, and HV2 are $10\,\textup{m}$, $8\,\textup{m}$, and $12\,\textup{m}$, respectively. HV1 and HV2 are conservative and aggressive, respectively. In Scenario B, the other settings are the same as in Scenario A but HV1 is also aggressive. In Scenario C, the longitudinal initial positions of AV, HV1, and HV2 are $10\,\textup{m}$, $12\,\textup{m}$, and $8\,\textup{m}$, respectively. HV1 and HV2 are aggressive and conservative, respectively. In Scenario D, the other settings are the same as in Scenario C but the HV2 is also aggressive. The time span for both stages in the trajectory tree is $1\,\textup{s}$. 

The simulation results are shown in Fig. \ref{fig:case_A}. It can be seen from the results that when the rear HV decides to yield, the AV manages to identify the type of the HV and successfully merge into the gap between the two HVs, such that the driving efficiency is enhanced. On the contrary, when the rear HV decides not to yield, the AV also identifies and performs the merging after both HVs to avoid collisions. Fig. \ref{fig:long_velocity_A} shows the longitudinal velocities of AV for each scenario. It can be seen that for all 4 scenarios, the AV decelerates at the beginning because at the moment it is uncertain about the driving intentions of the HVs. With incoming observations, the AV keeps updating the belief on HVs' intentions. When it is certain that the rear HV is going to yield (in Scenarios A and C), it accelerates to merge into the gap between the two HVs. On the contrary, when AV is certain that the rear HV is not going to yield (in Scenarios B and D), it takes further braking to avoid collisions and merges behind both HVs.

\begin{figure}[t]
\centering
\includegraphics[scale=0.58]{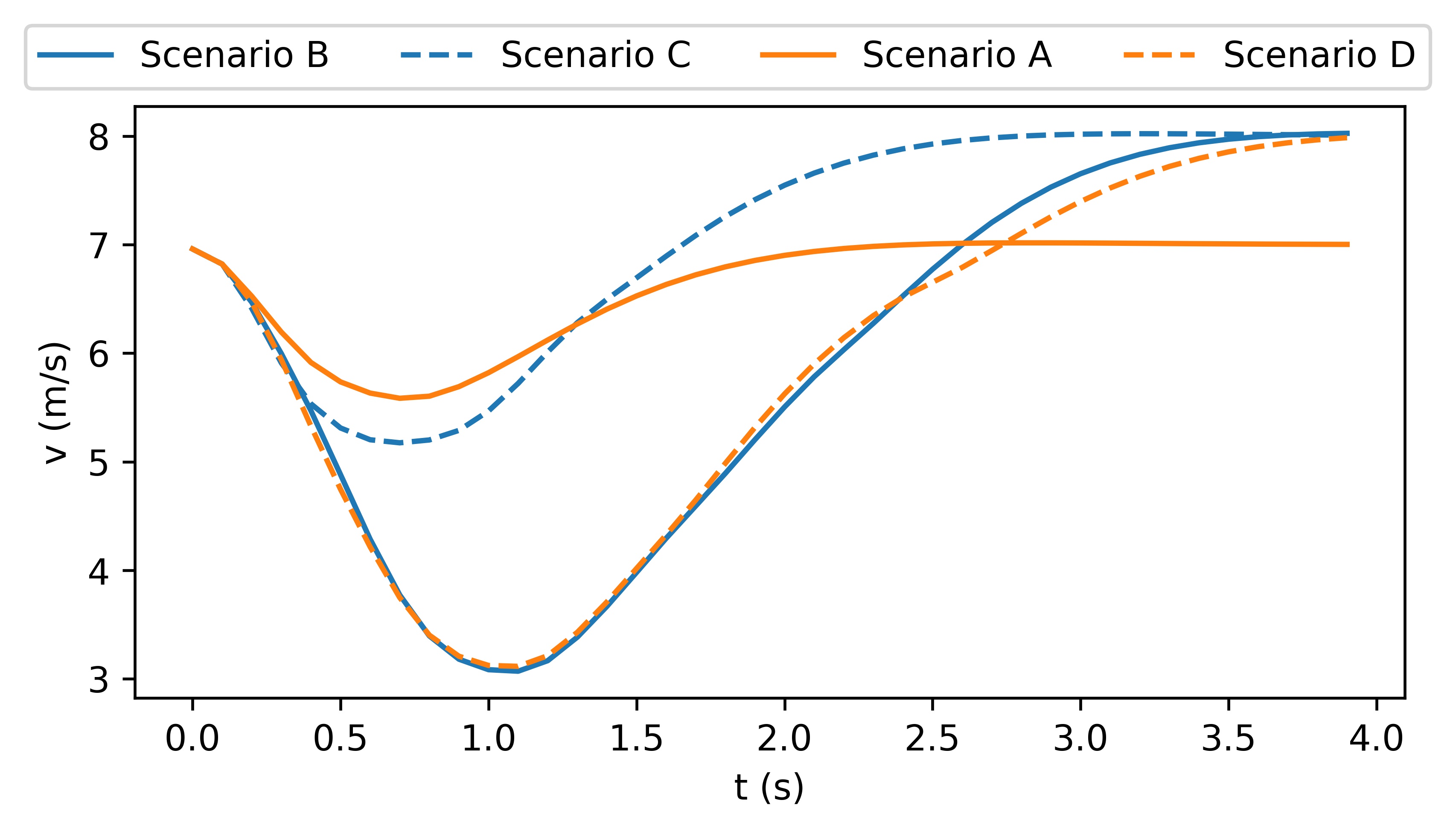}
\caption{Longitudinal velocities of AV in different scenarios in Case I.}
\label{fig:long_velocity_A}
\end{figure}

\subsection{Case II: Unprotected Left-Turn}

\begin{figure*}[t]
\centering
\subfigure[Simulation results of Scenario A at 1.5\,s, 2.5\,s, and 3.5\,s.]{
\includegraphics[scale=0.02209]{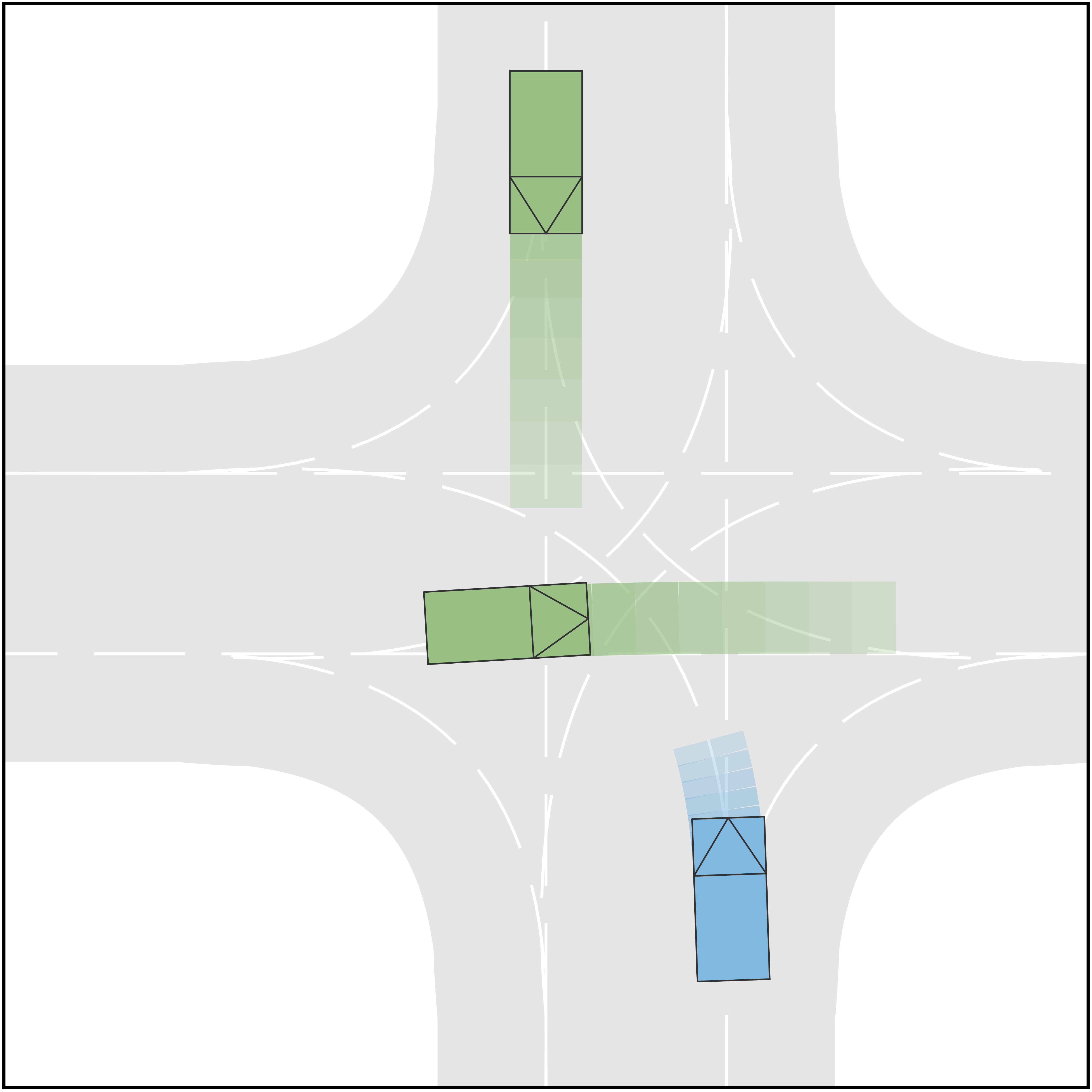}
\includegraphics[scale=0.02209]{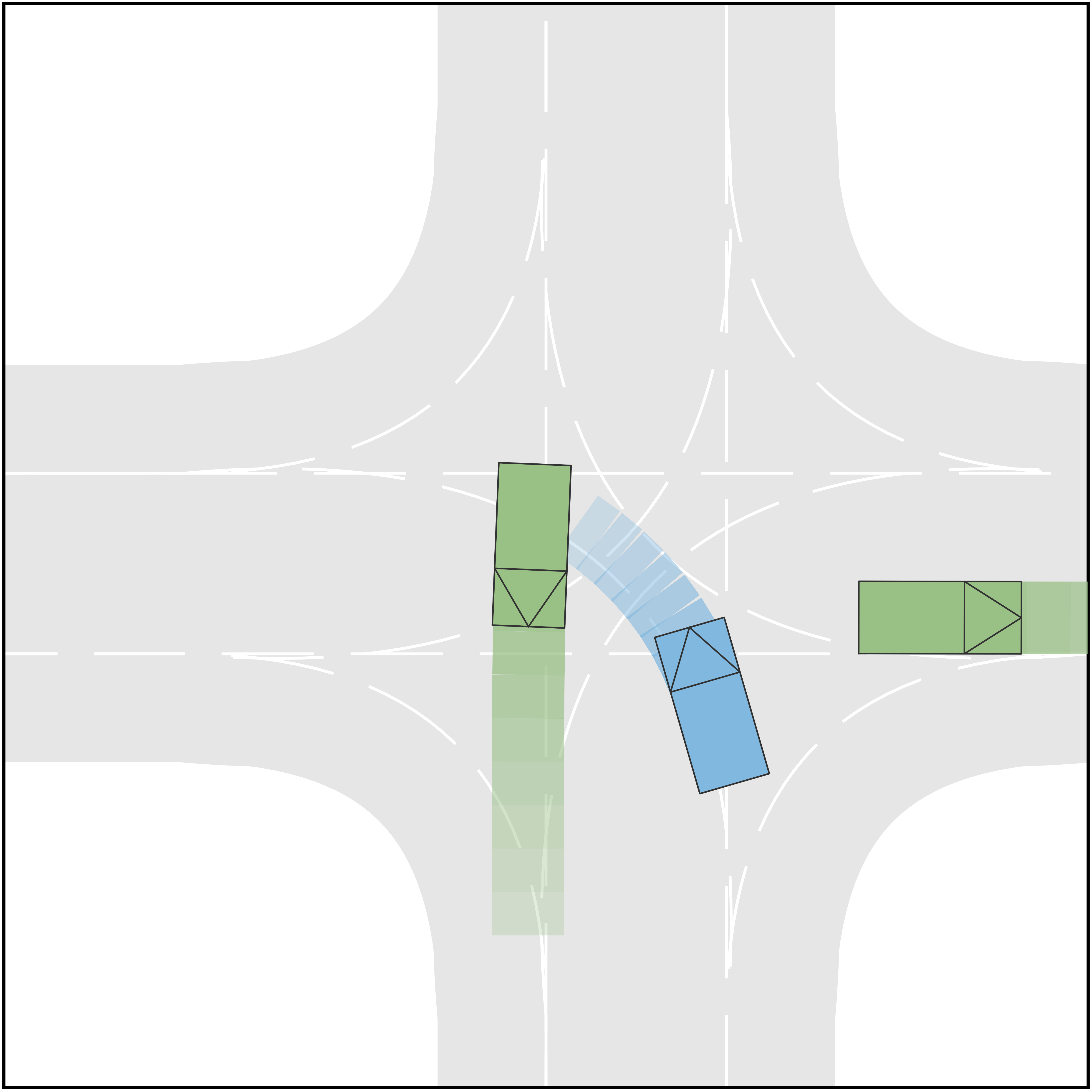}
\includegraphics[scale=0.02209]{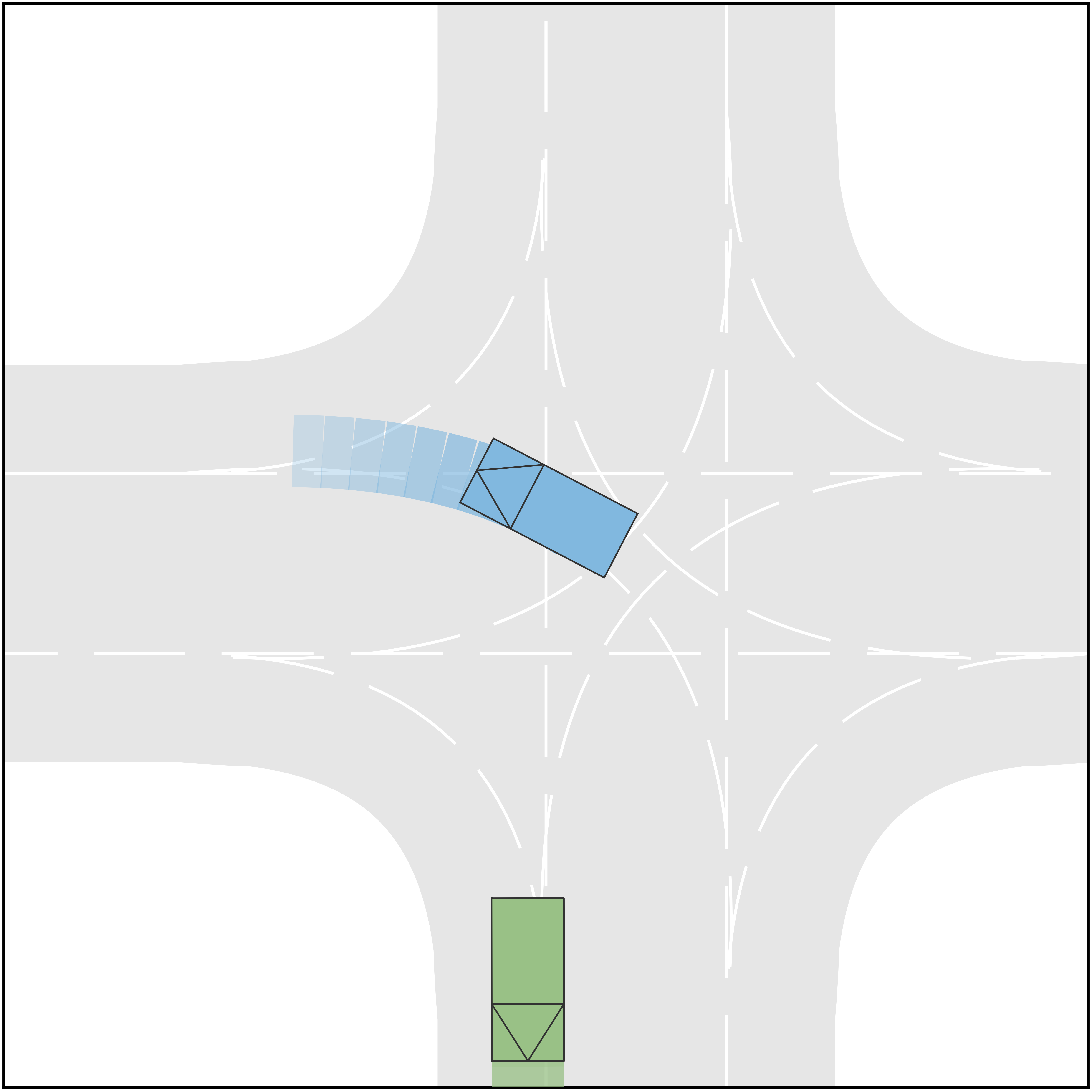}
}
\subfigure[Simulation results of Scenario B at 1.5\,s, 2.5\,s, and 4.0\,s.]{
\includegraphics[scale=0.02209]{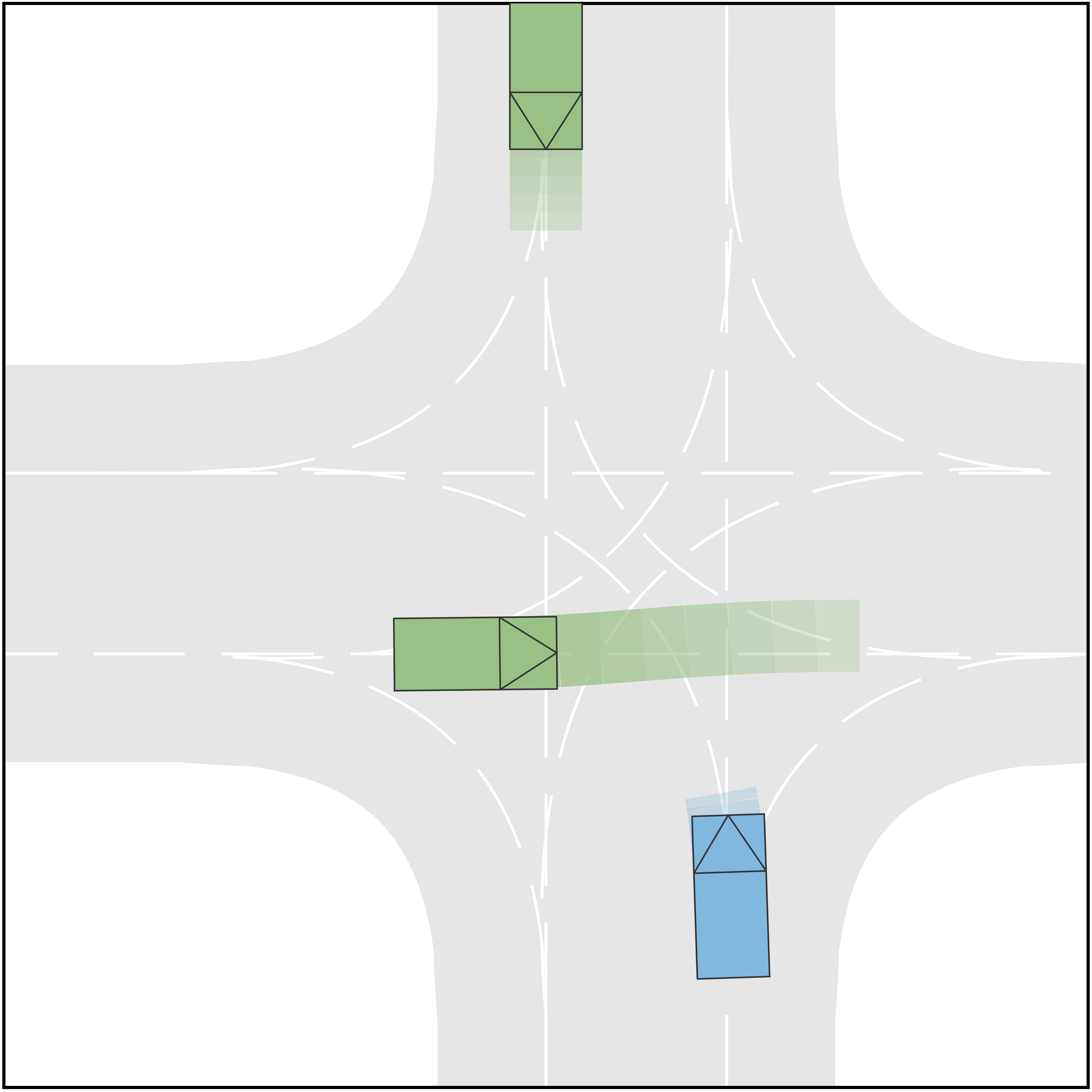}
\includegraphics[scale=0.02209]{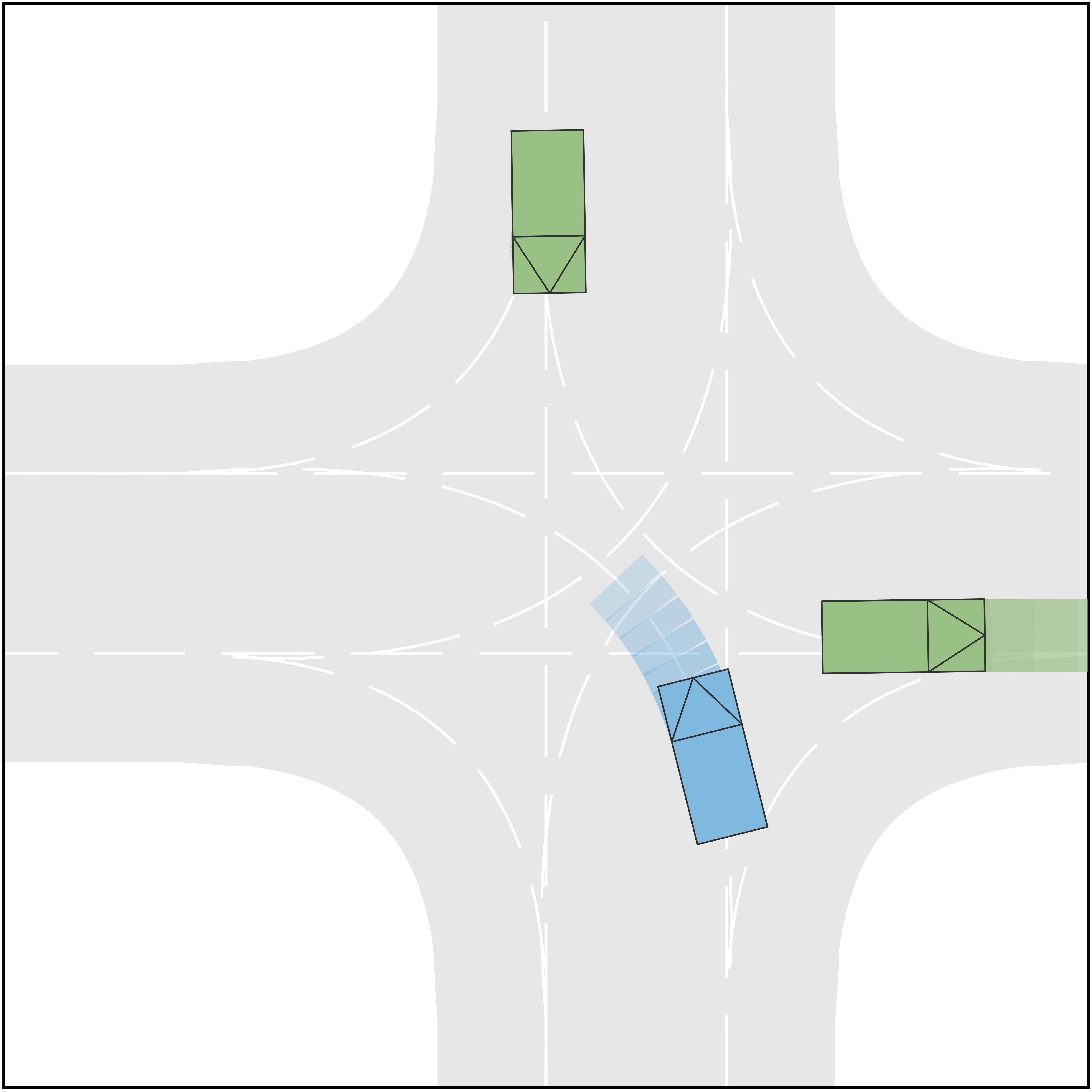}
\includegraphics[scale=0.02209]{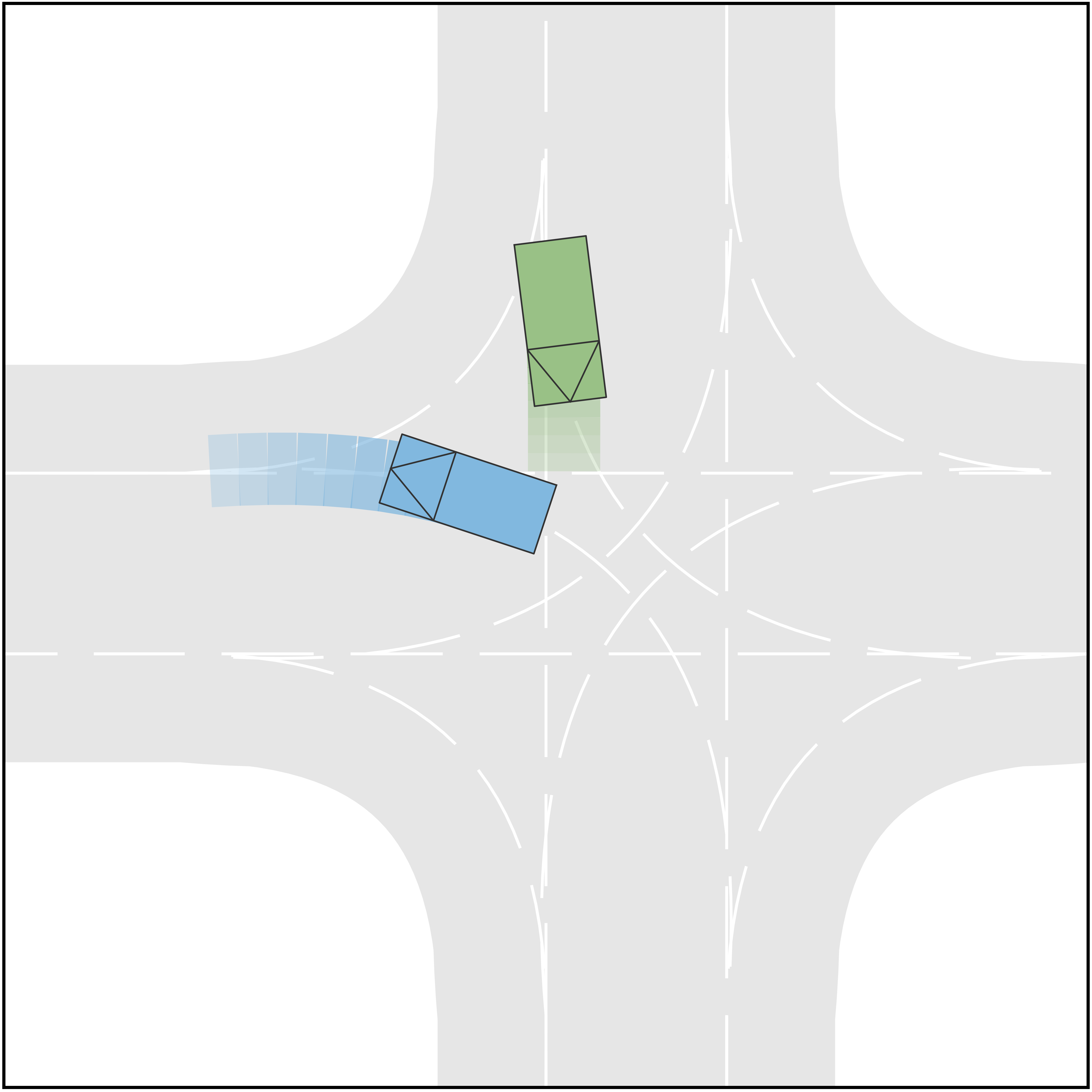}
}
\subfigure[Simulation results of Scenario C at 1.5\,s, 2.5\,s, and 3.5\,s.]{
\includegraphics[scale=0.02209]{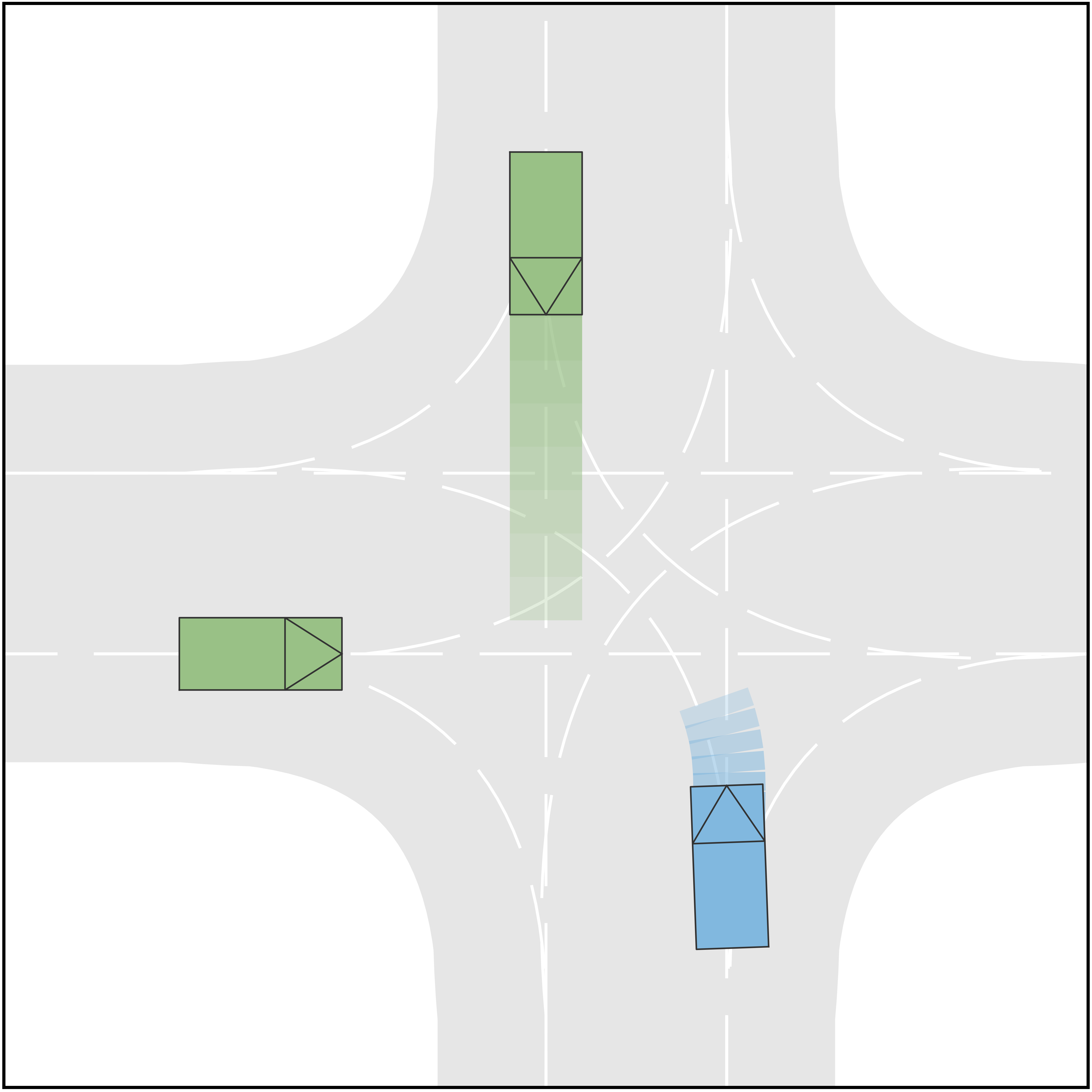}
\includegraphics[scale=0.02209]{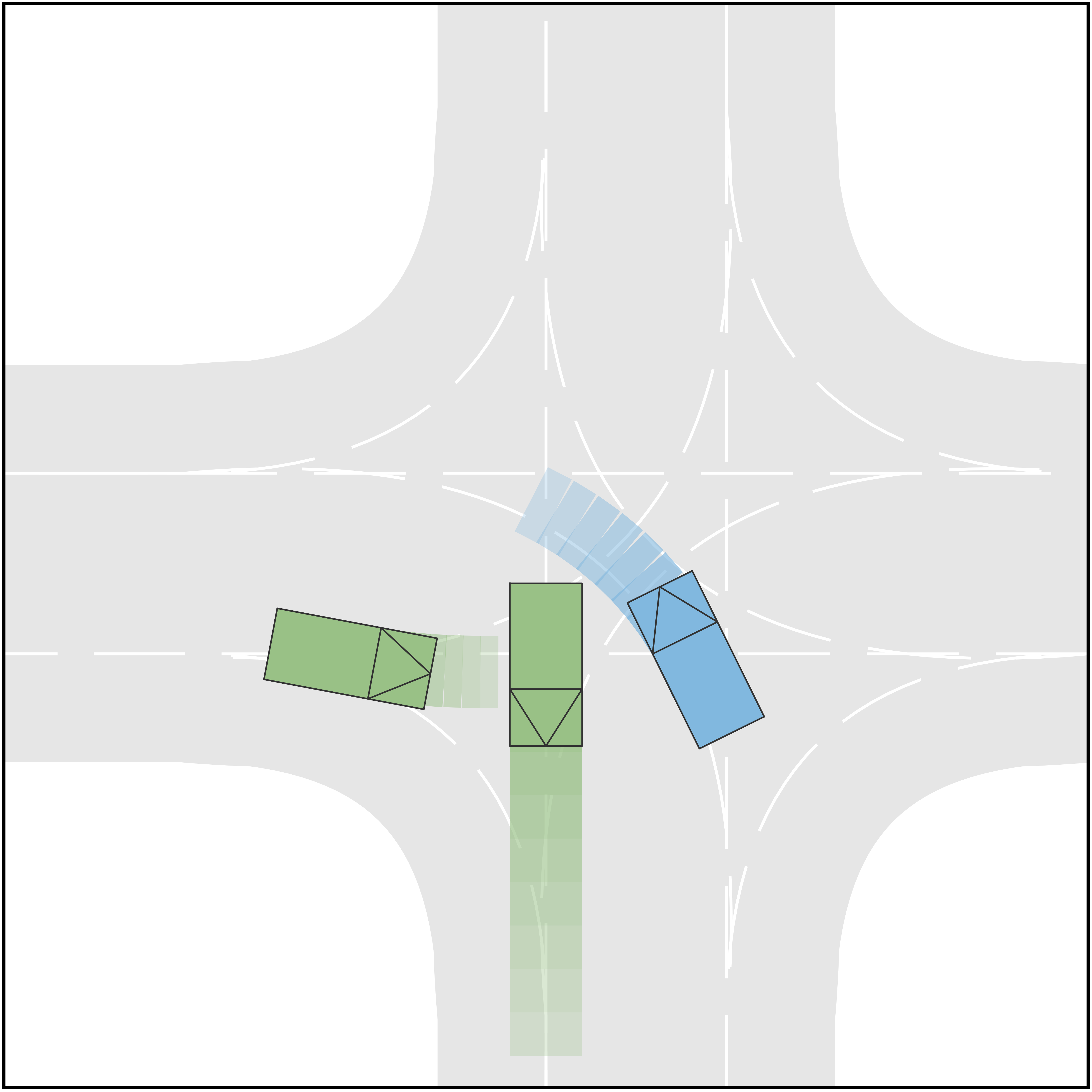}
\includegraphics[scale=0.02209]{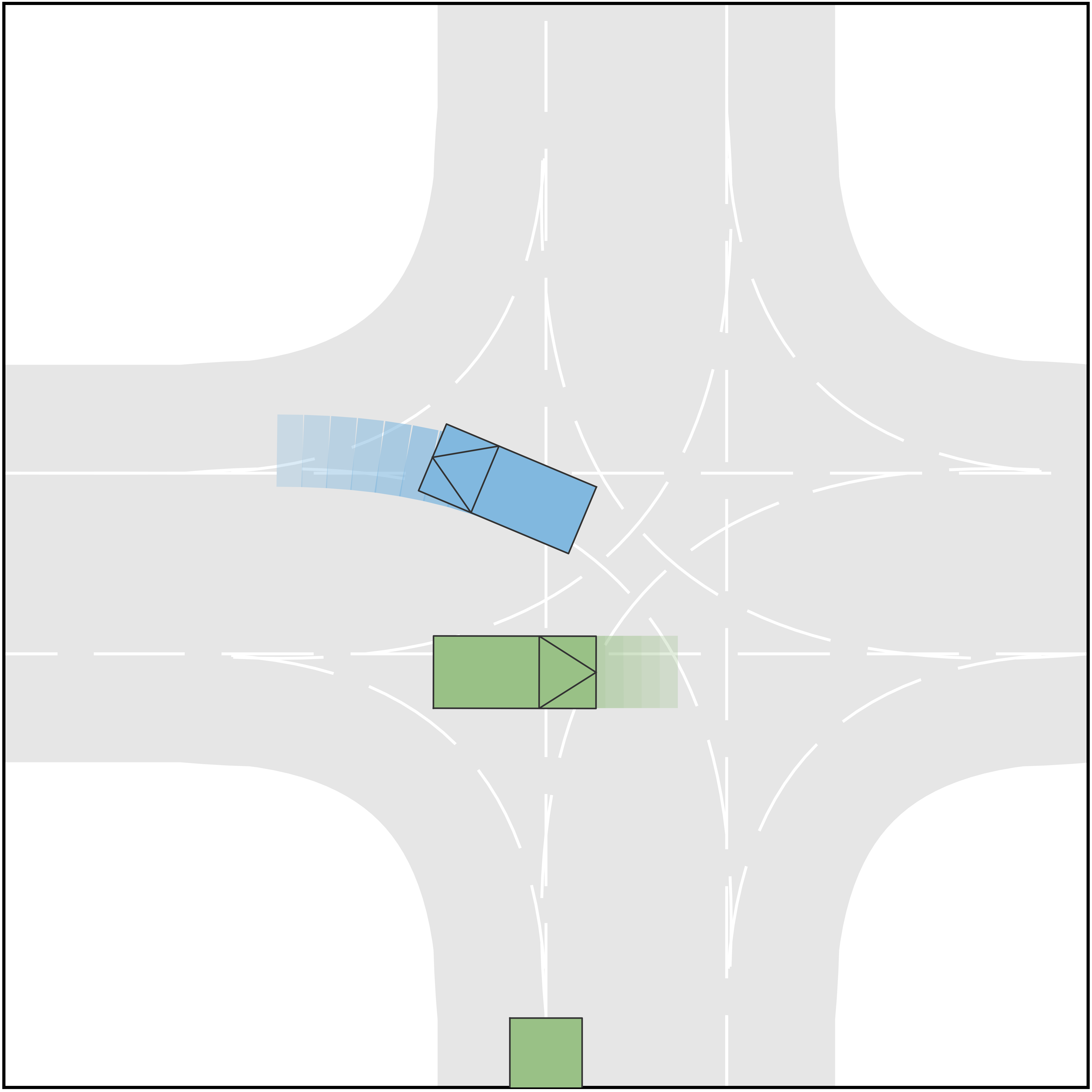}
}
\subfigure[Simulation results of Scenario D at 1.5\,s, 2.5\,s, and 4.0\,s.]{
\includegraphics[scale=0.02209]{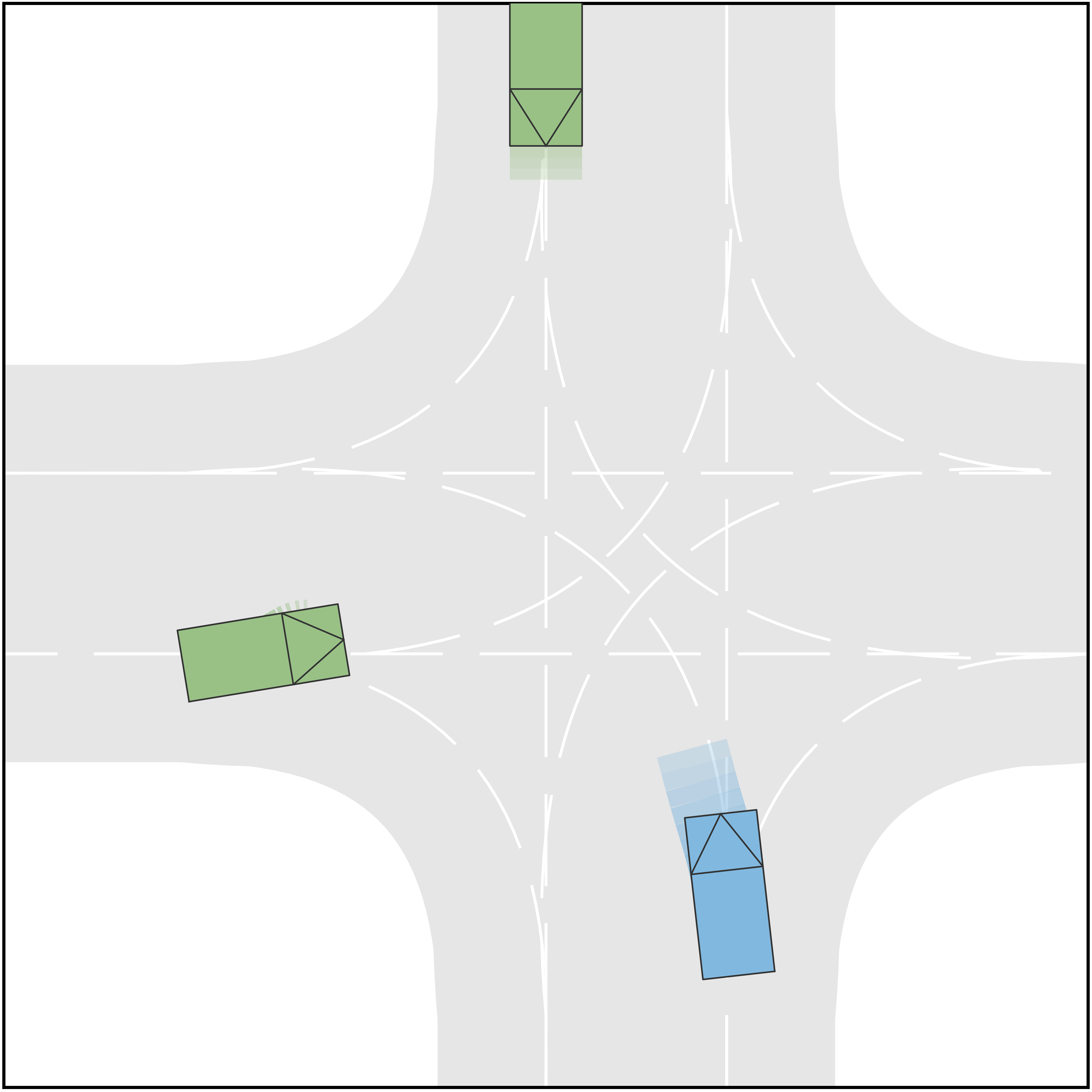}
\includegraphics[scale=0.02209]{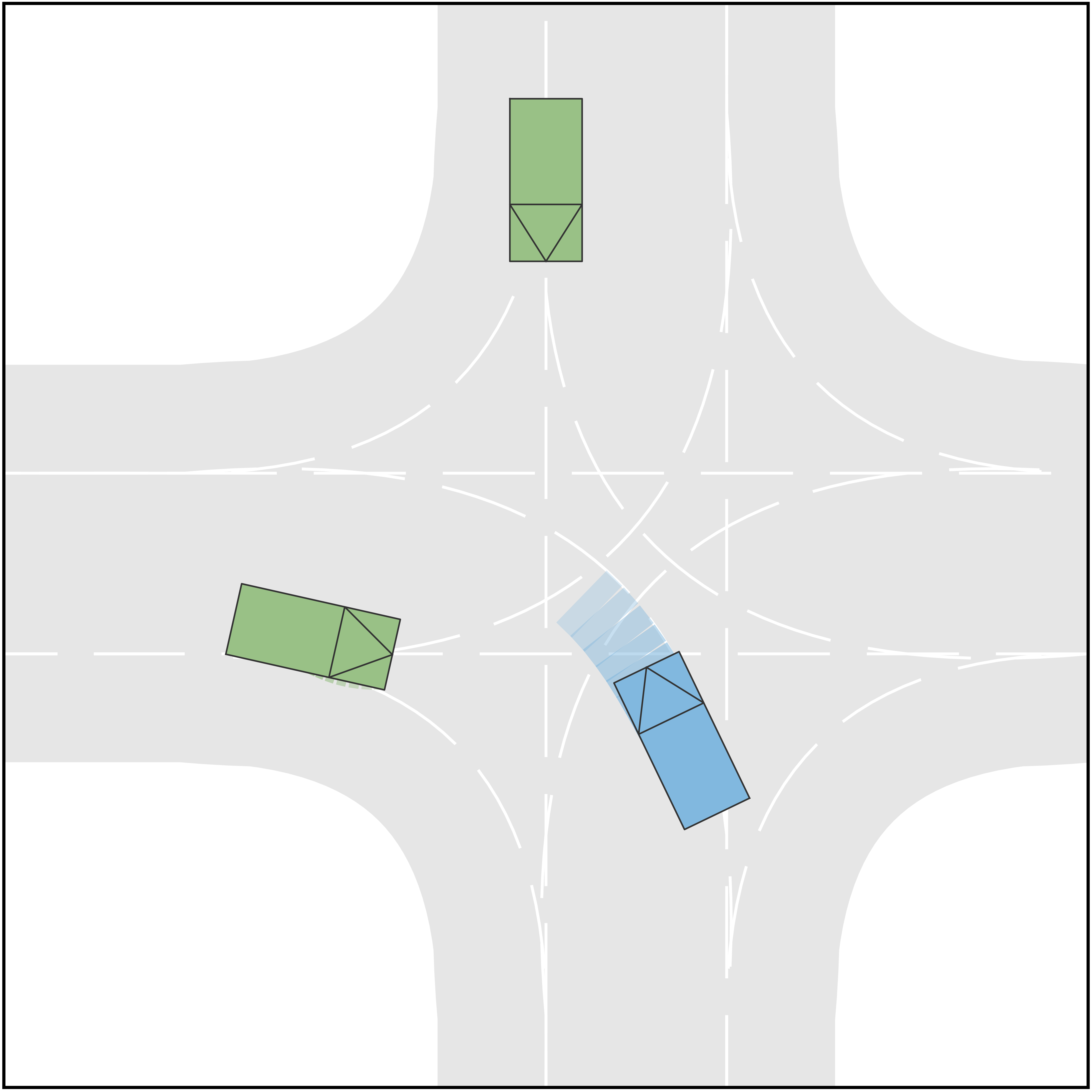}
\includegraphics[scale=0.02209]{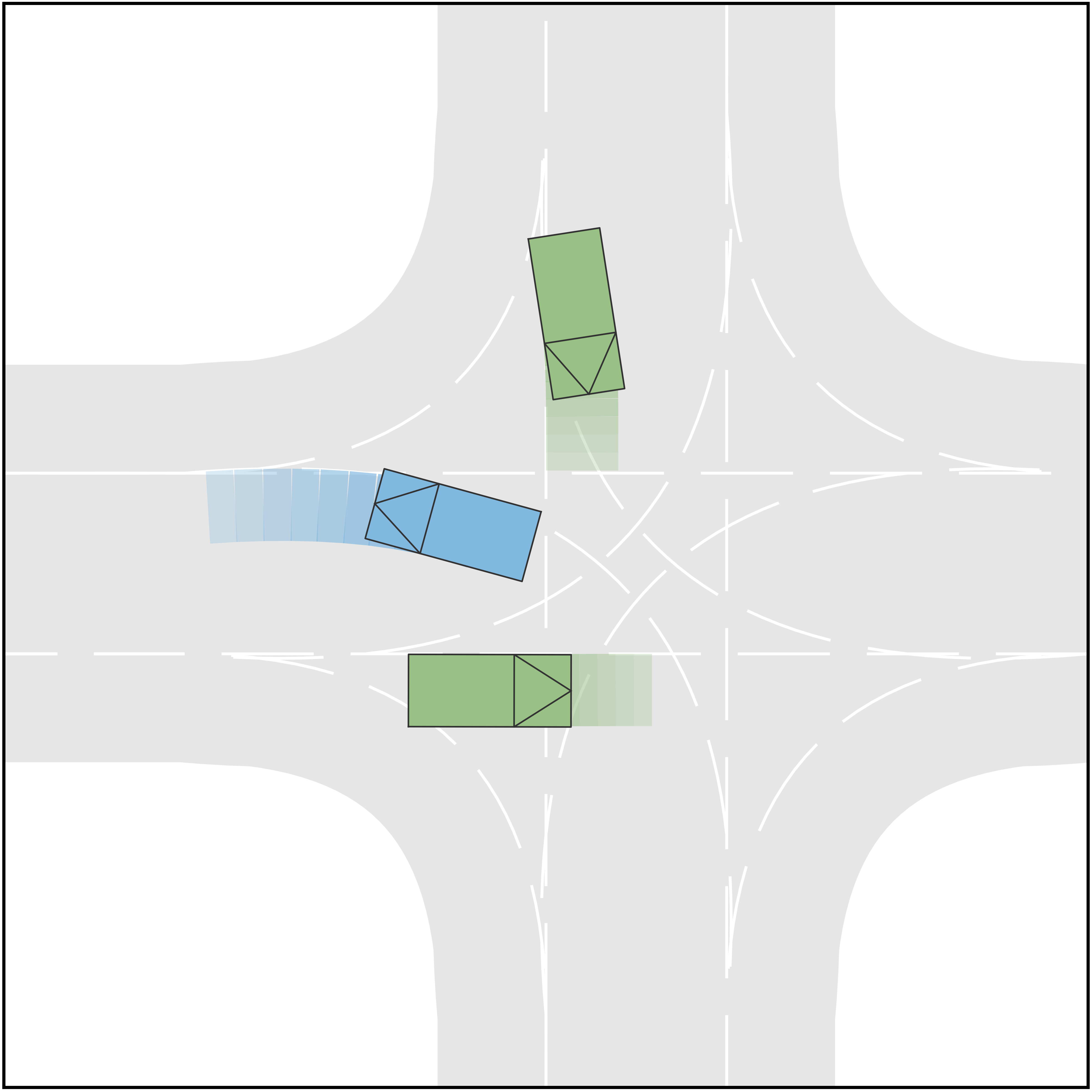}
}
\subfigure[Simulation results of Scenario E at 1.5\,s, 2.5\,s, and 3.5\,s.]{
\includegraphics[scale=0.02209]{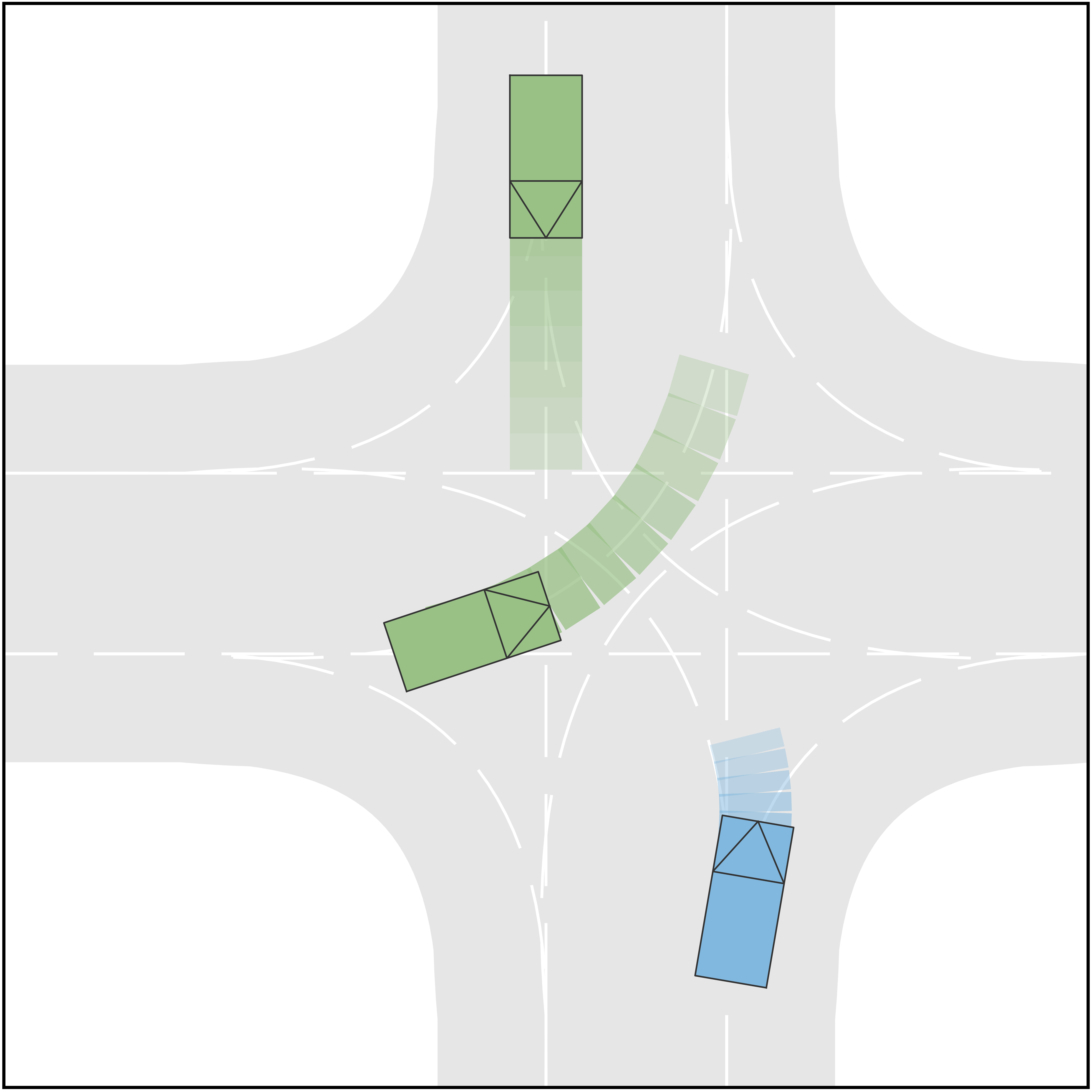}
\includegraphics[scale=0.02209]{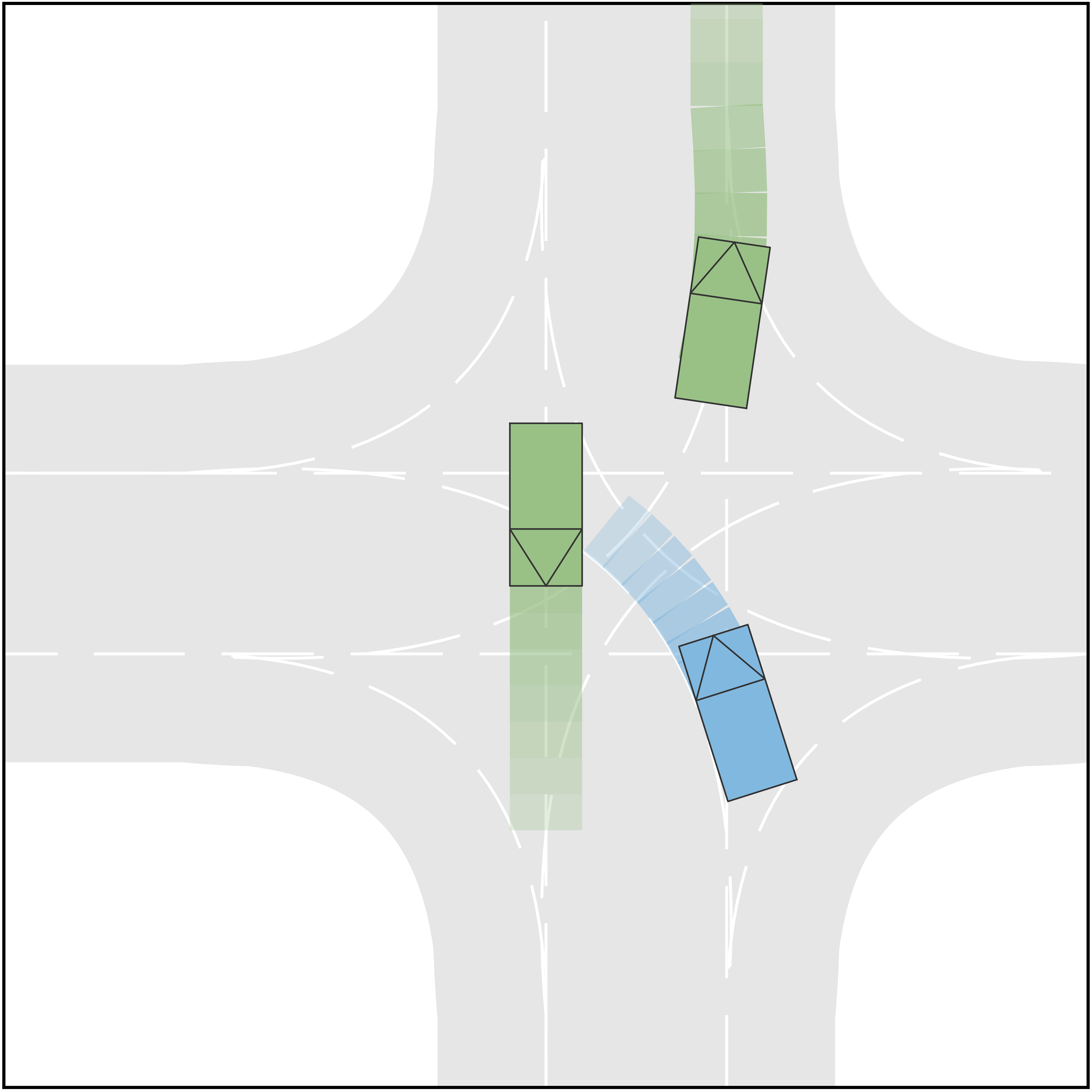}
\includegraphics[scale=0.02209]{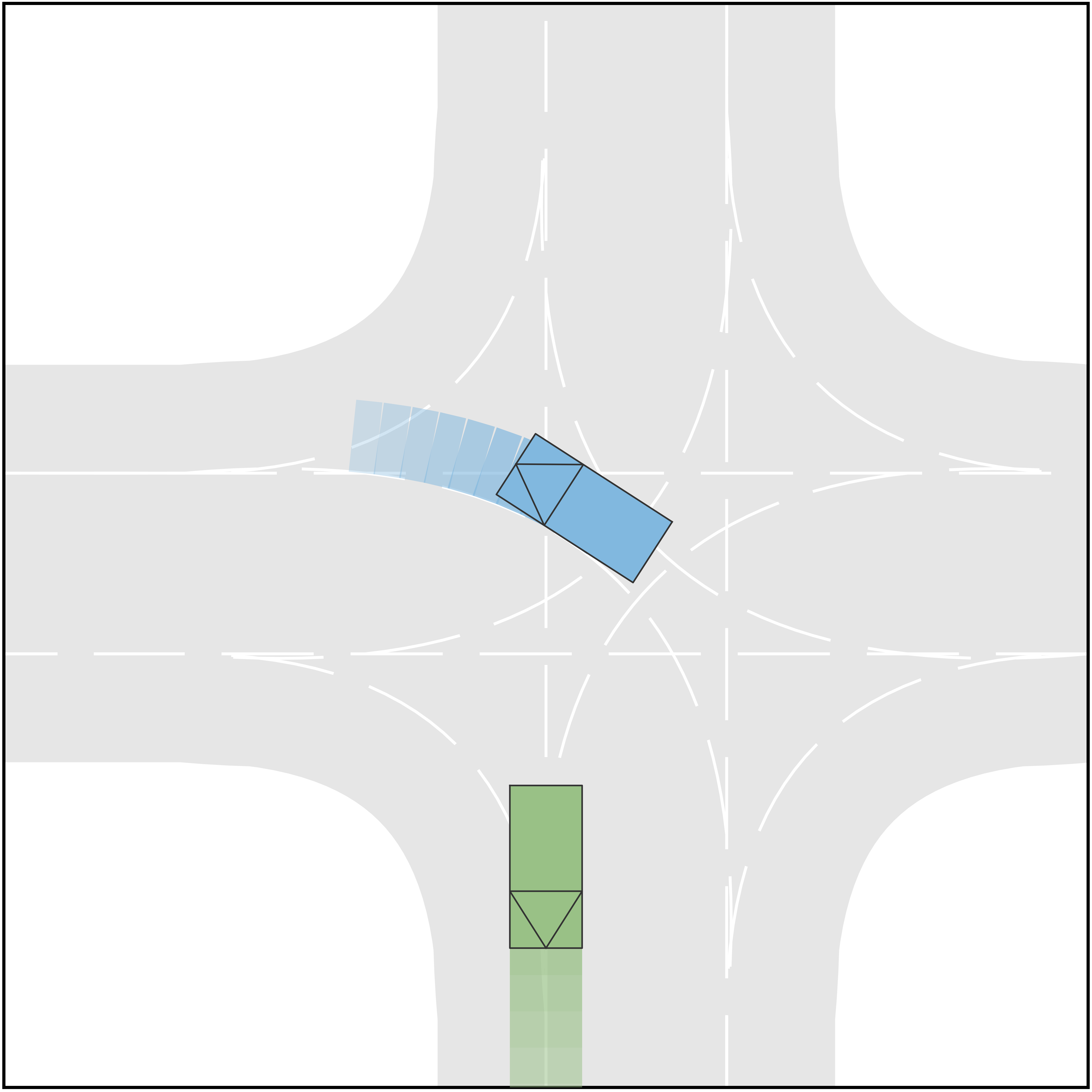}
}
\subfigure[Simulation results of Scenario F at 1.5\,s, 2.5\,s, and 3.5\,s.]{
\includegraphics[scale=0.02209]{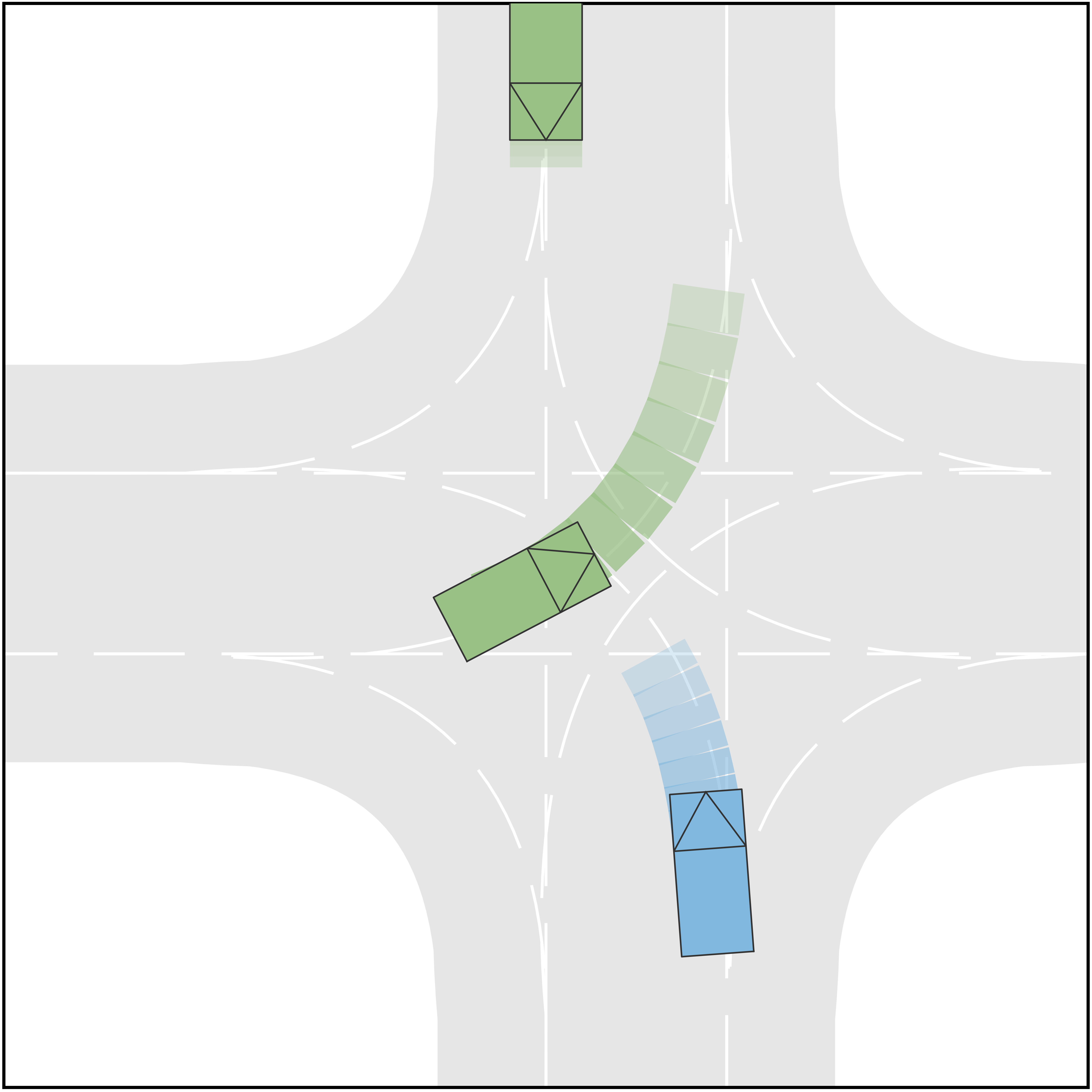}
\includegraphics[scale=0.02209]{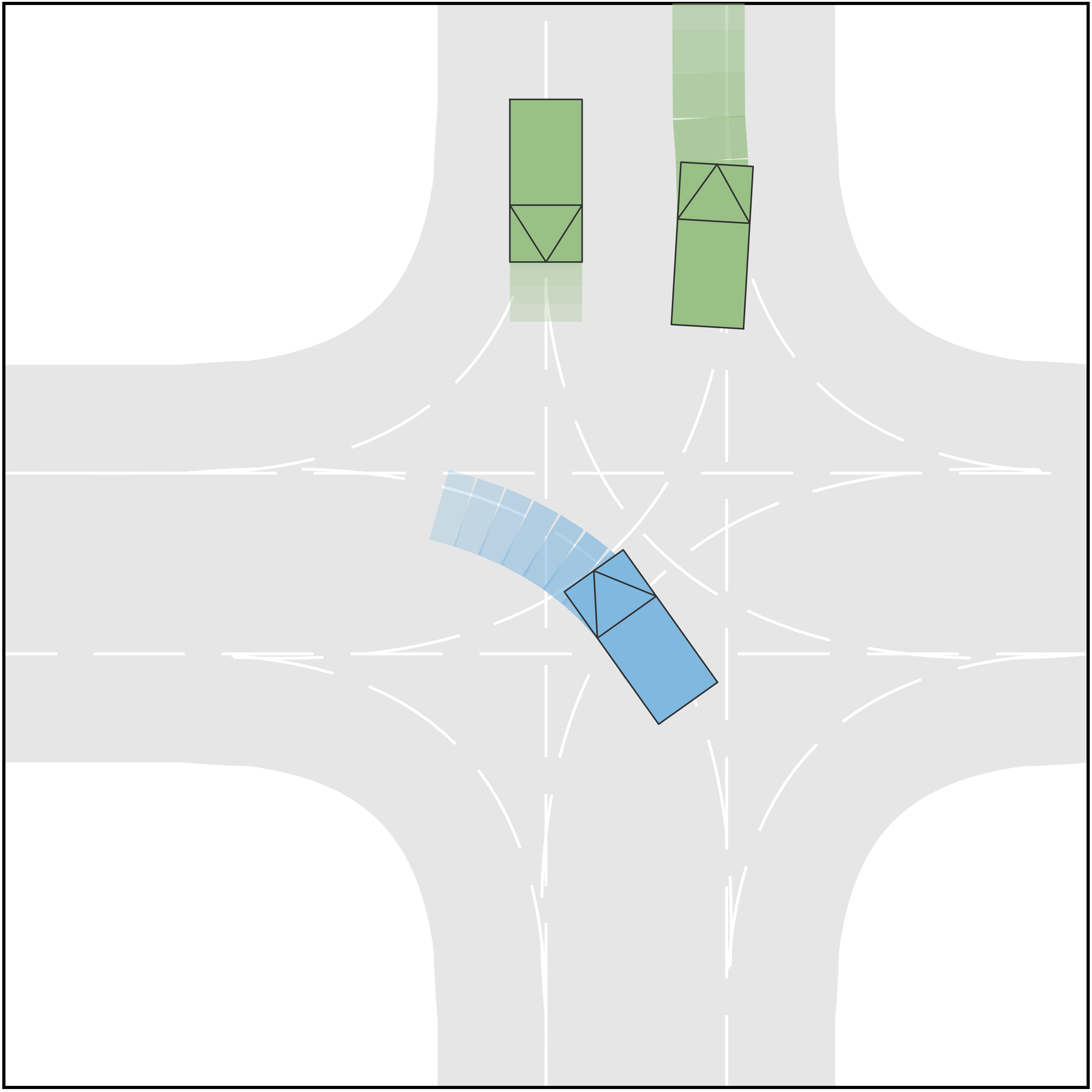}
\includegraphics[scale=0.02209]{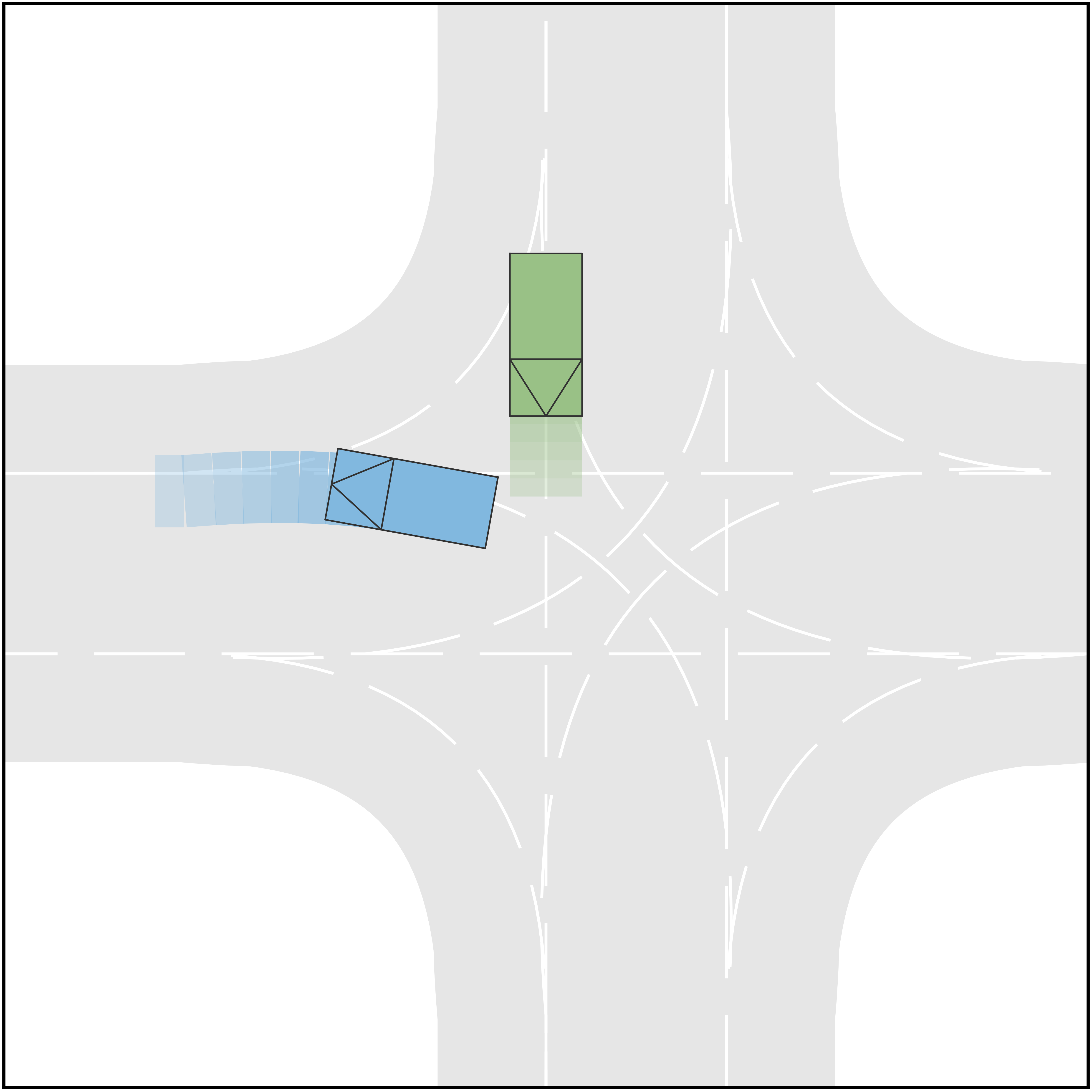}
}
\subfigure[Simulation results of Scenario G at 1.5\,s, 2.5\,s, and 3.5\,s.]{
\includegraphics[scale=0.02209]{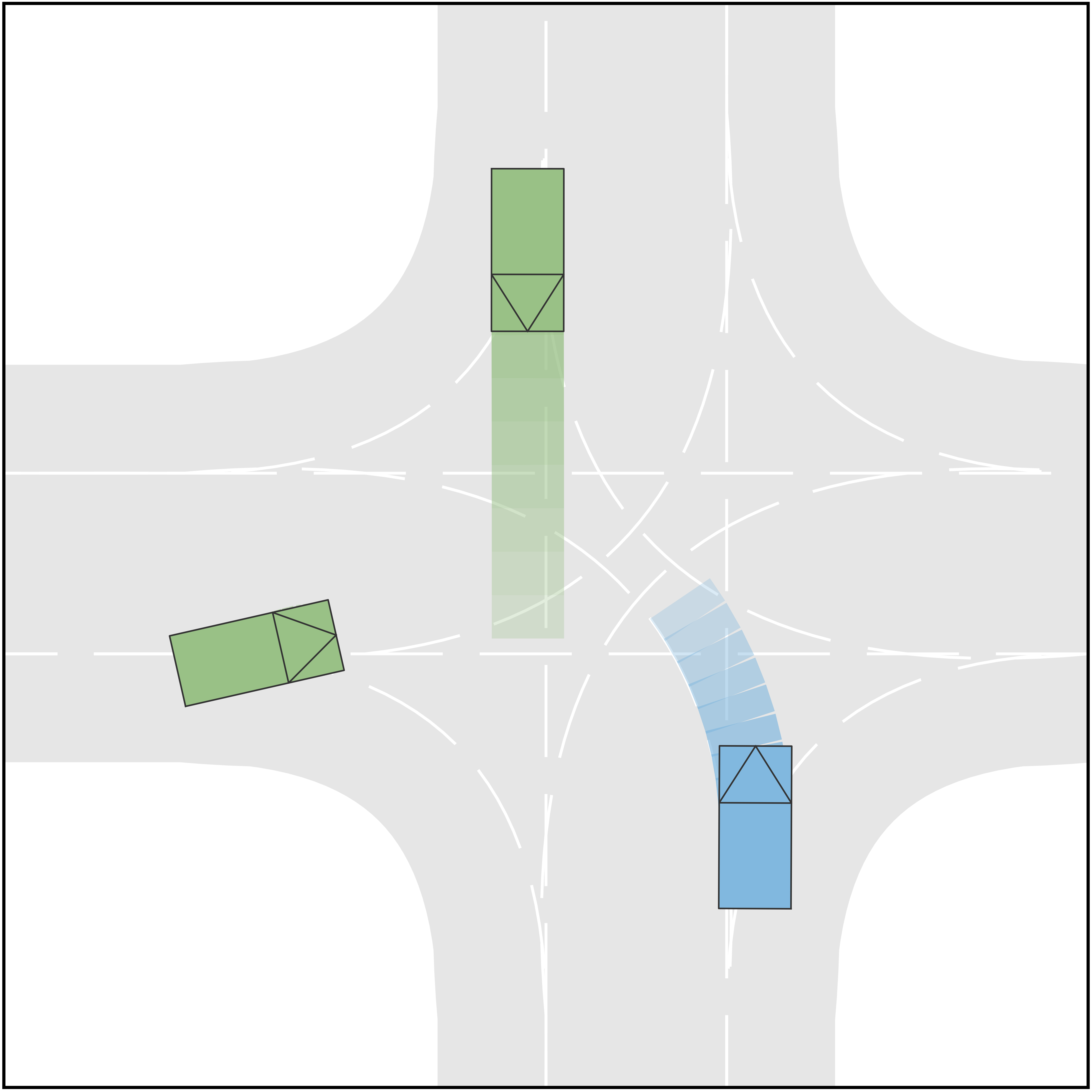}
\includegraphics[scale=0.02209]{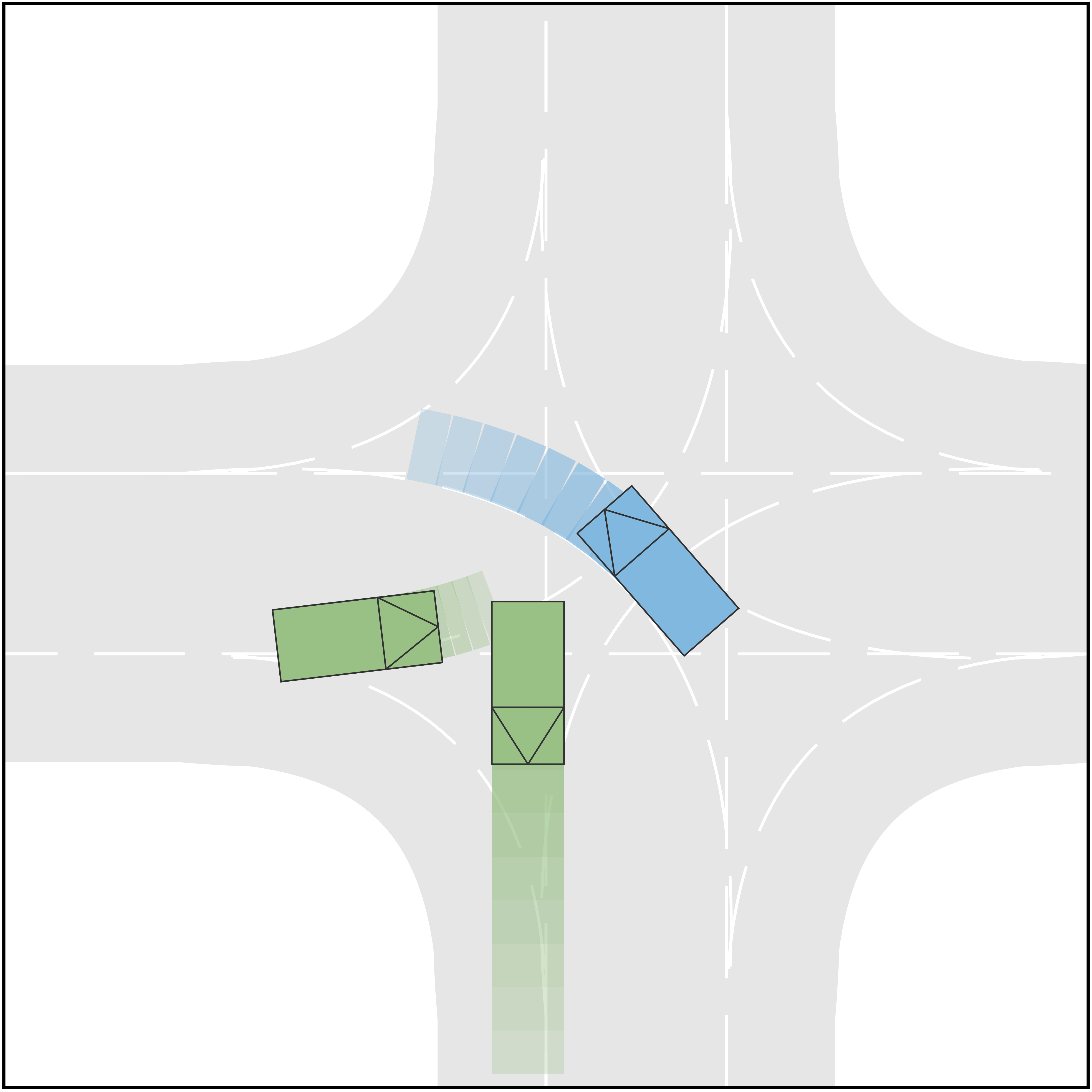}
\includegraphics[scale=0.02209]{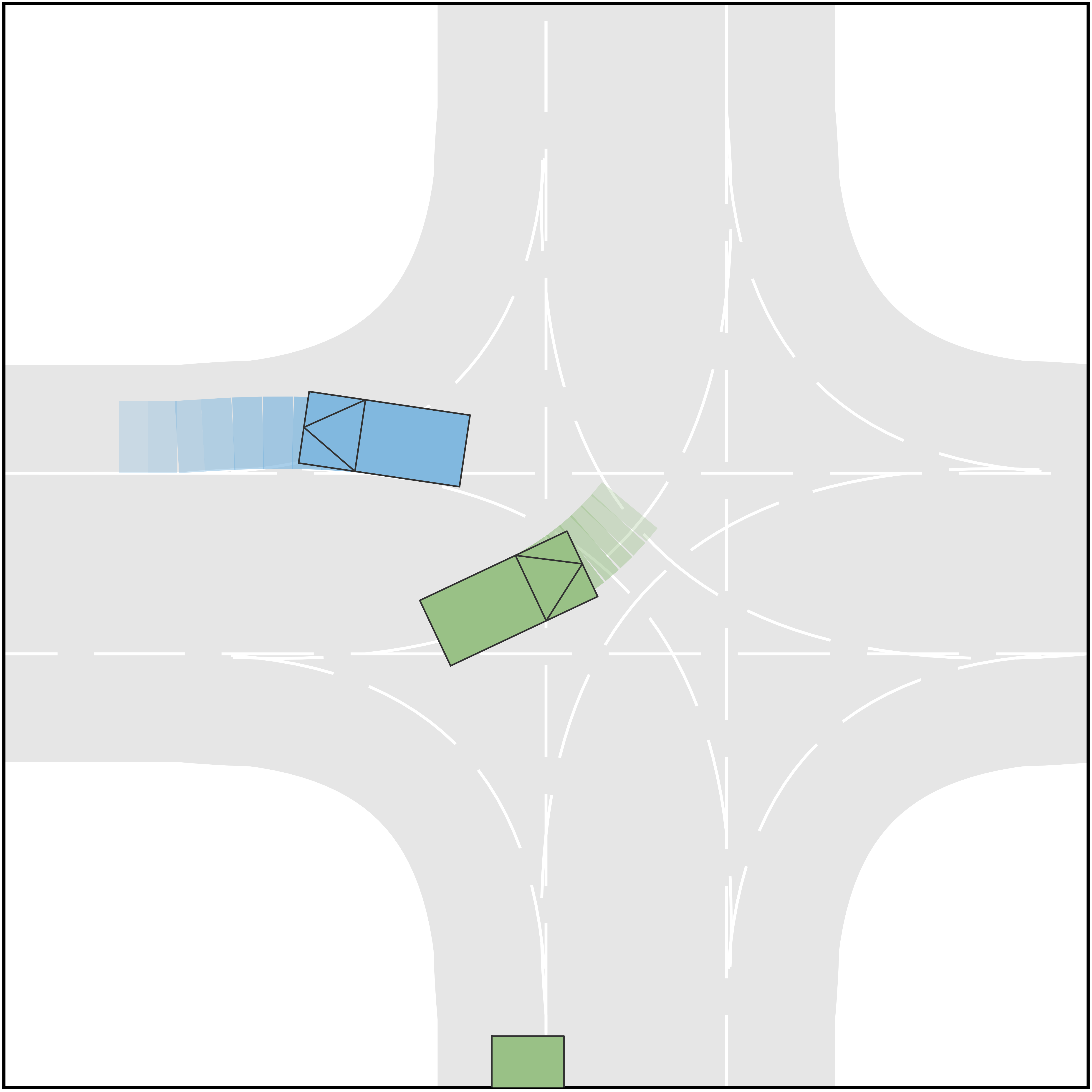}
}
\subfigure[Simulation results of Scenario H at 2.0\,s, 4.0\,s, and 5.0\,s.]{
\includegraphics[scale=0.02209]{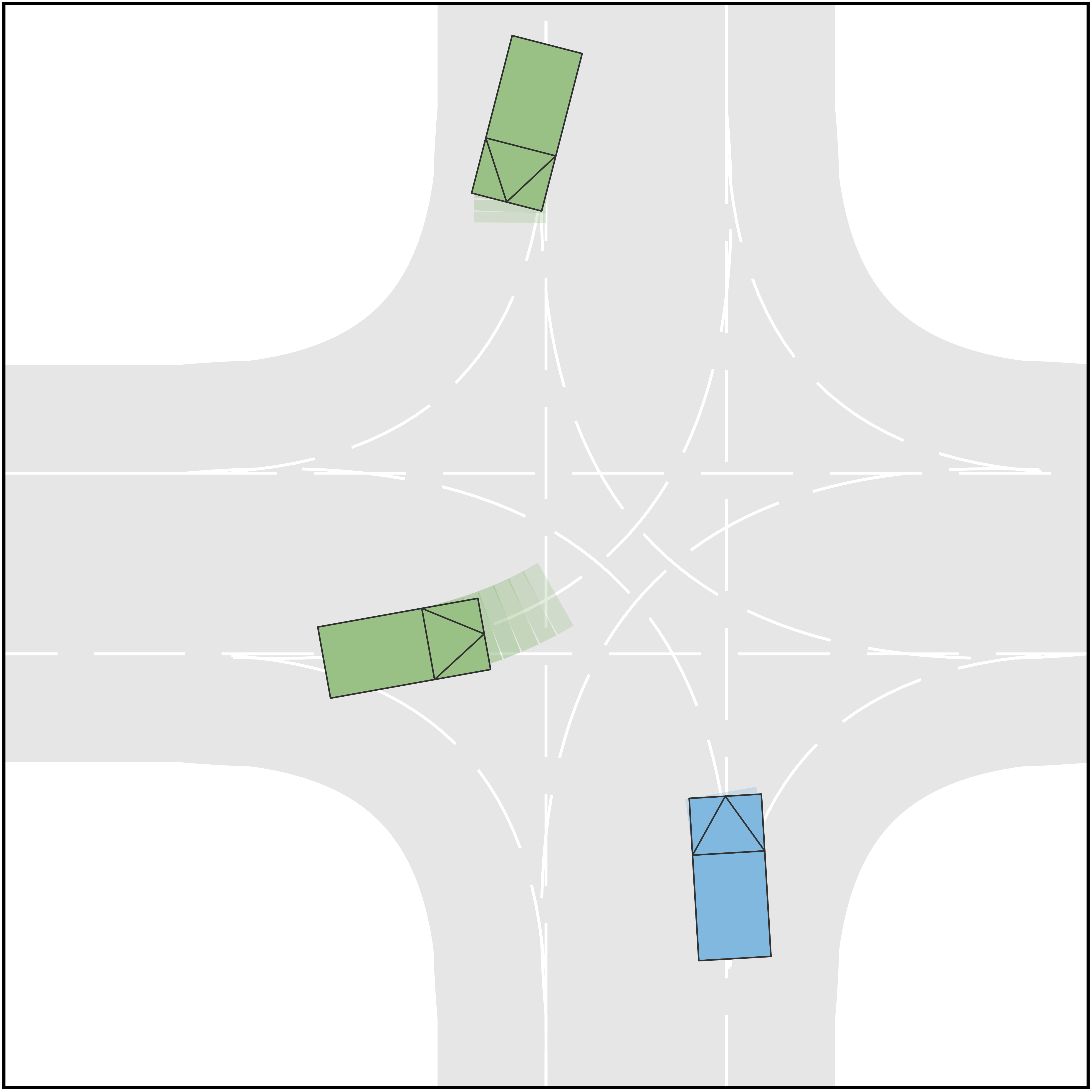}
\includegraphics[scale=0.02209]{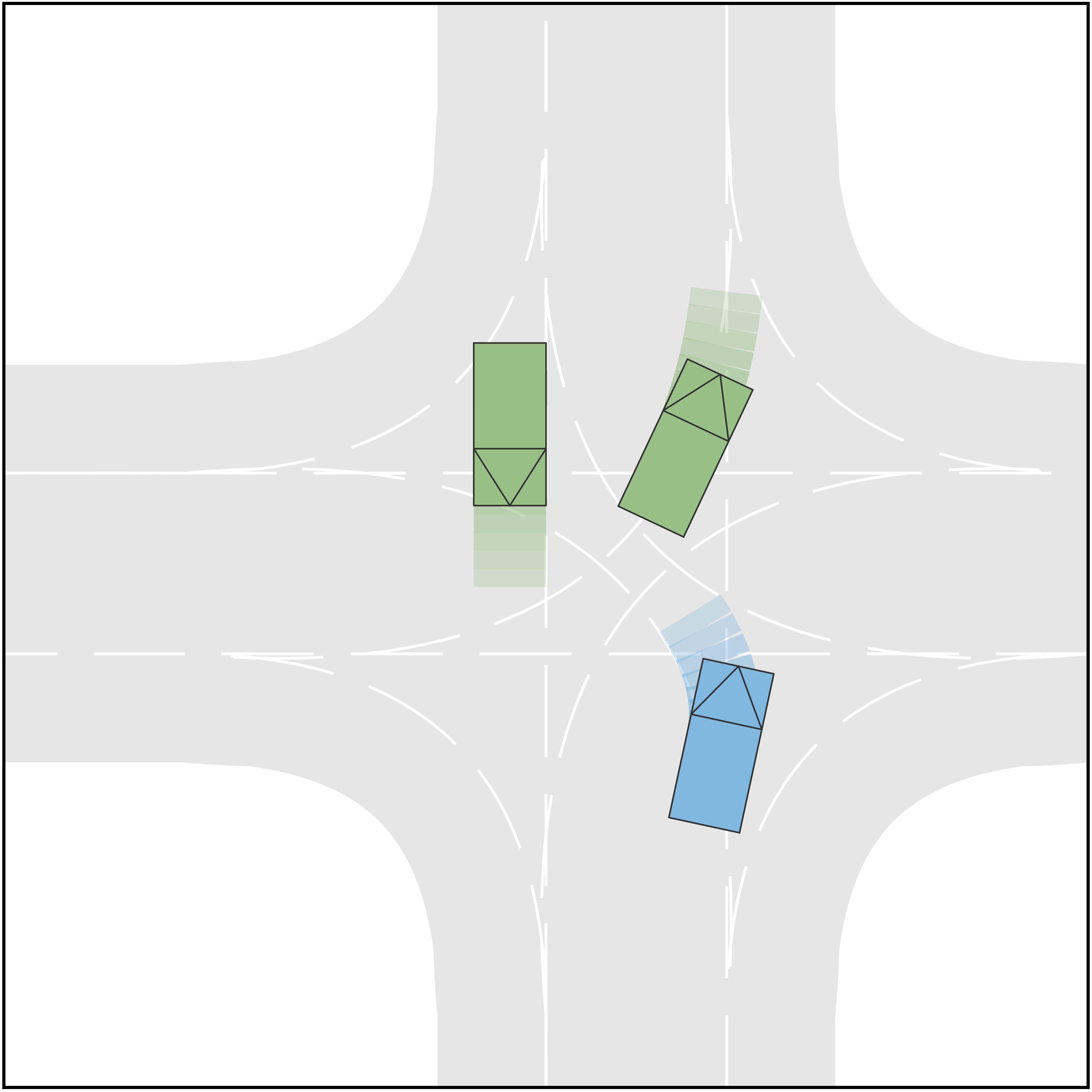}
\includegraphics[scale=0.02209]{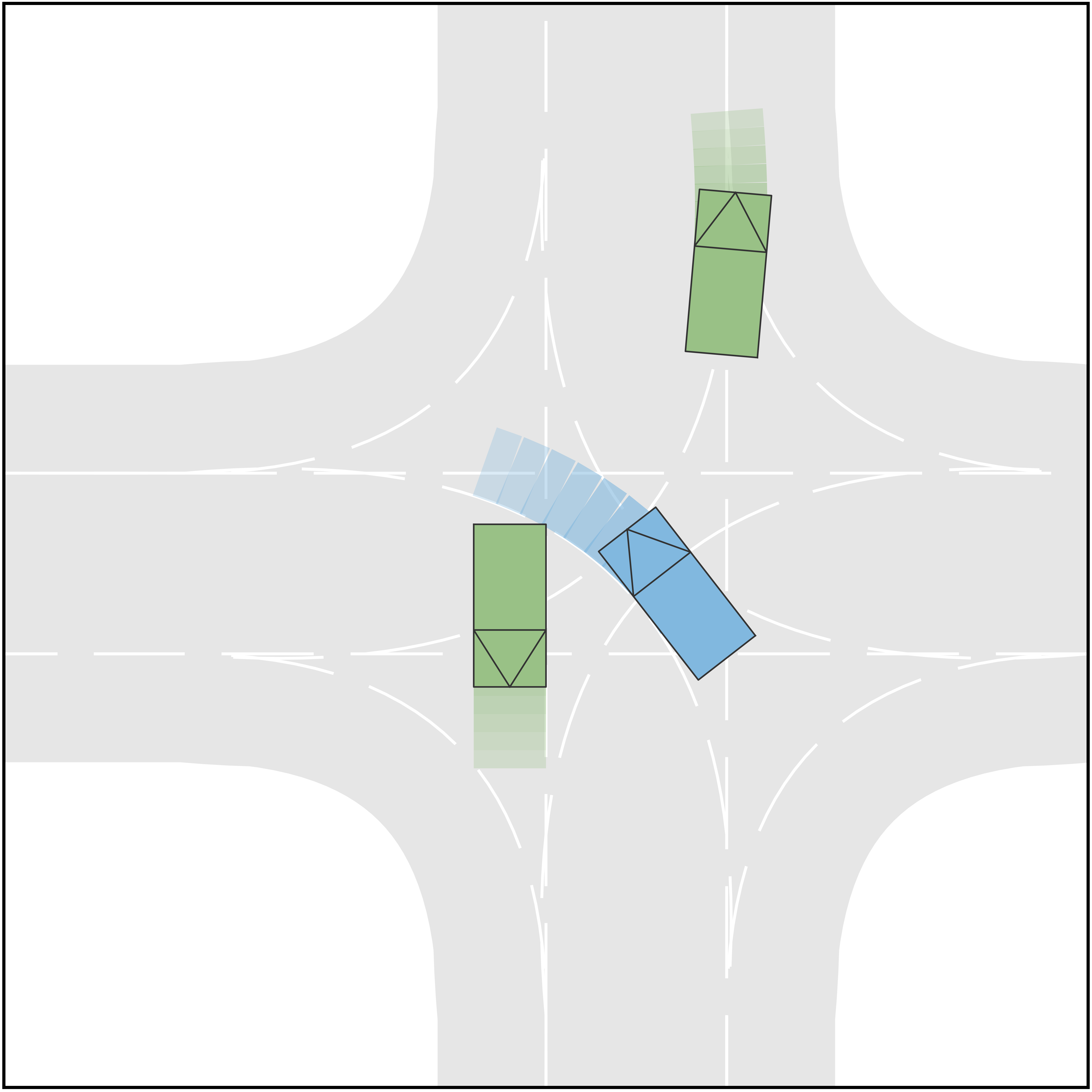}
}

\caption{Simulation results for Case II. Successful unprotected left-turning is performed in all scenarios without collisions. Adaptive strategies of AV are demonstrated in these figures, which react responsively to different driving modes of HVs to ensure security and successful passing.}

\label{fig:case_B}
\end{figure*}

In this case, the goal of AV is to perform an unprotected left-turning in an unsignalized intersection. Two human-driving vehicles are considered in this scenario, including HV1 traveling from the left-hand side of AV and HV2 traveling in the opposite lane. The initial positions of AV, HV1, and HV2 are $(15,-5)\,\textup{m}$, $(-5,10)\,\textup{m}$, and $(10,35)\,\textup{m}$, respectively, and the initial velocity is $7\,\textup{m}/\textup{s}$ for all three vehicles. For each vehicle, four different types are considered. Specifically, those types are combinations of navigation types and longitudinal types. For AV, the types concerning navigation include straight-traveling and left-turning and the types concerning longitudinal strategies include aggressive and conservative, such that AV can be an aggressive straight-traveling agent or a conservative left-turning agent, etc. Although the driving intention of the AV is to perform left-turning, it is clearly not known by HVs, and therefore straight-traveling types of AV also need to be modeled (but not selected). The same types are considered for HV1. For HV2, the navigation types include straight-going and right-turning, while longitudinal types are the same. Since vehicles can swerve away from the reference line to avoid each other in an intersection, the action space should include lateral actions and longitudinal actions. The longitudinal actions are the same as in Case I, while the set of lateral action is the set of eligible longitudinal displacement with respect to the corresponding reference lines $[-1,-0.5,0,0.5,1]\,\textup{m}$. In this case, 8 different scenarios are considered with respect to different types of HVs, and those scenarios are listed in Table II. The time span for the first stage in the trajectory tree is $1\,\textup{s}$ and that for the second stage is $2\,\textup{s}$.

\begin{figure}[t]
\centering
\includegraphics[scale=0.60]{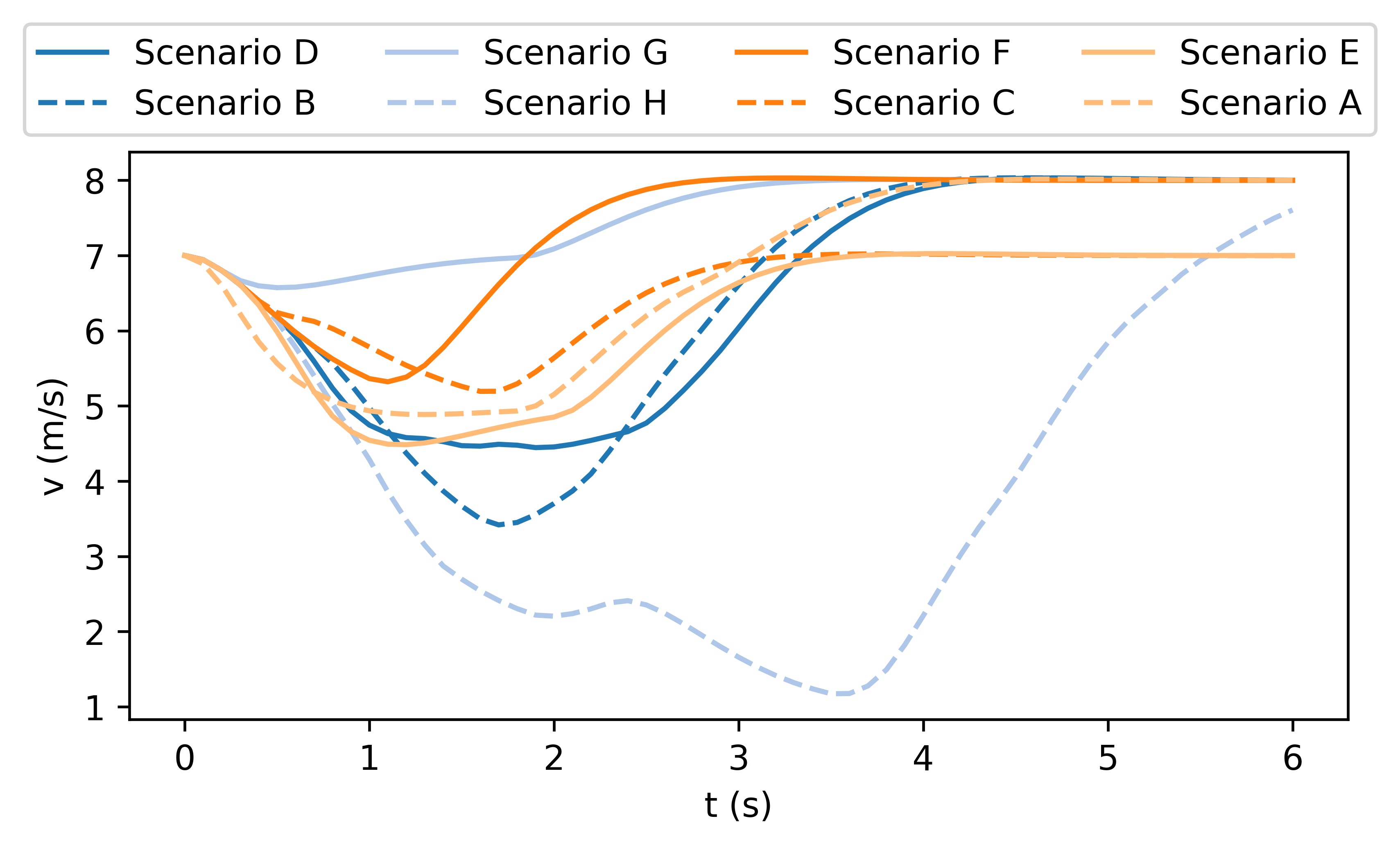}
\caption{Longitudinal velocities of AV in different scenarios in Case II.}
\label{fig:long_velocity_B}
\end{figure}

\begin{figure}[t]
\centering
\includegraphics[scale=0.60]{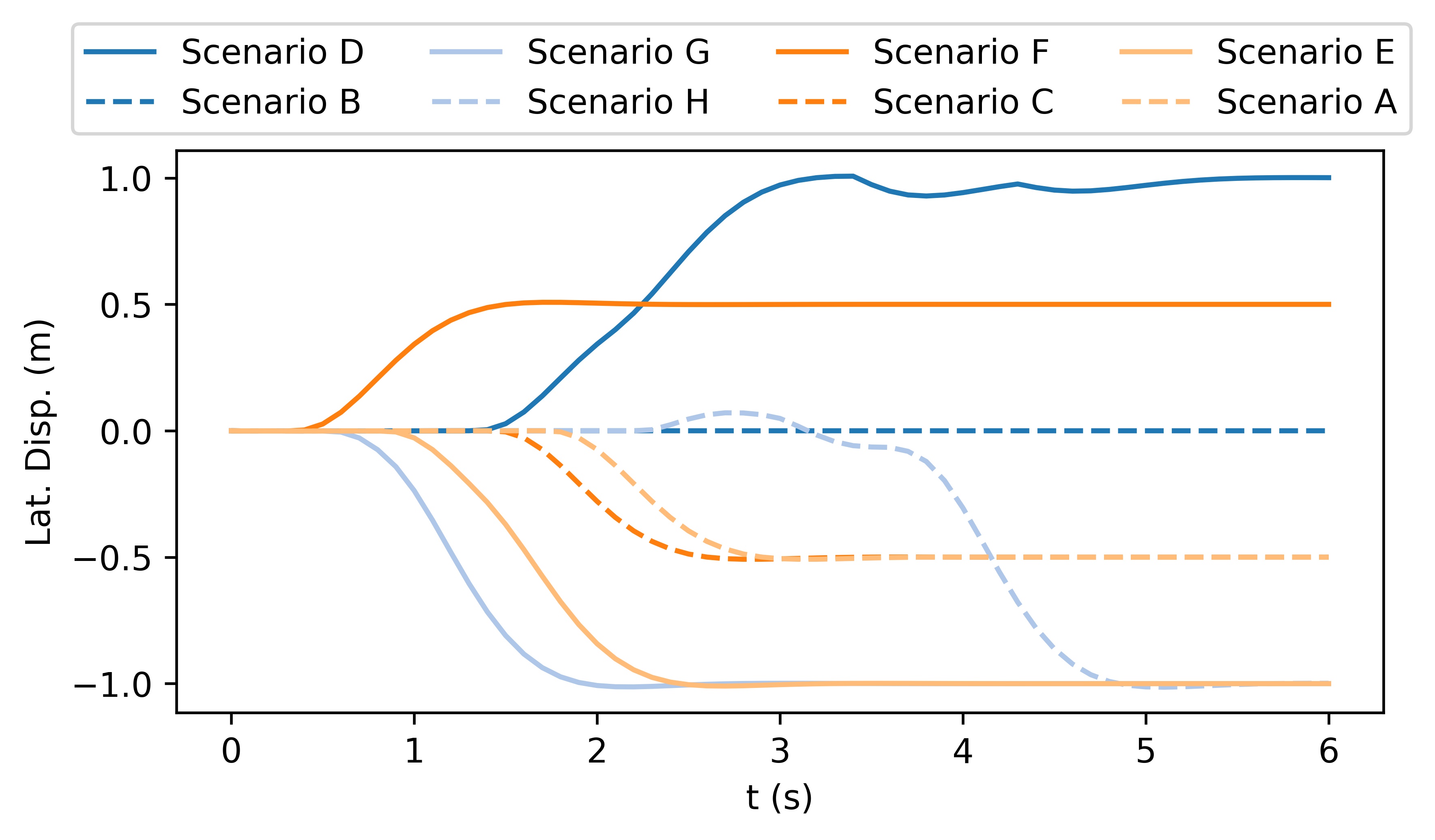}
\caption{Lateral displacements of AV in different scenarios in Case II.}
\label{fig:lat_disp_B}
\end{figure}

\begin{table}[t]
\label{table:types}
\centering
\caption{Scenario settings of Case I}
\begin{tabular}{ccc}
\hline
Scenario   & HV1                          & HV2                          \\ \hline
         A & Straight-going, Aggressive   & Straight-going, Aggressive   \\
         B & Straight-going, Aggressive   & Straight-going, Conservative \\
         C & Straight-going, Conservative & Straight-going, Aggressive   \\
         D & Straight-going, Conservative & Straight-going, Conservative \\
         E & Left-turning, Aggressive     & Straight-going, Aggressive   \\
         F & Left-turning, Aggressive     & Straight-going, Conservative \\
         G & Left-turning, Conservative   & Straight-going, Aggressive   \\
         H & Left-turning, Conservative   & Straight-going, Conservative \\ \hline
\end{tabular}
\end{table}

The qualitative simulation results are shown in Fig. \ref{fig:case_B}. In general, AV successfully identifies the intentions and types of other vehicles and manages to pass the scenario in a smooth and secure manner. Diversified driving behaviors are exhibited with respect to different driving intentions of HVs. Specifically, in Scenario A where both HVs are aggressive and straight-going, AV slows down and passes through the intersection after both HVs. In Scenario B, AV slows down to yield to the HV1, which is straight-going and aggressive. Then it accelerates to pass in front of HV2, which is considered conservative. In Scenario C where HV1 is straight-going and conservative and HV2 is straight-going and aggressive, the AV swerve to the right to pass through the intersection from behind HV2. In Scenario D where both HVs are straight-going and conservative, AV accelerates and cuts across in front of both HVs. The behavior of AV in Scenario E, F, and G are similar to that in Scenario A, B, and C, respectively. In Scenario H, although both HVs are conservative, HV1 is performing a left-turning and therefore there is no room for AV to cut across. As a result, AV slows down to pass the intersection behind both HVs. In addition to the behaviors of AV, the proposed method also manages to simulate the interactive behaviors of the HVs. For example, in Scenario A, both HVs swerve to the left to keep safety distances from AV, and in Scenario B, HV2 steers to the right to pass through the intersection behind AV, etc. Fig. \ref{fig:long_velocity_A} shows the longitudinal velocities of AV in each scenario, and Fig. \ref{fig:lat_disp_B} shows the lateral displacements. In general, AV decelerates at the beginning to avoid collision when the intentions of HVs remain unclear and then takes responsive actions to different driving behaviors of HVs.

\subsection{Comparison}

\begin{figure}[t]
\centering
\subfigure[$t=1.0\,\textup{s}$]{\includegraphics[scale=0.0355]{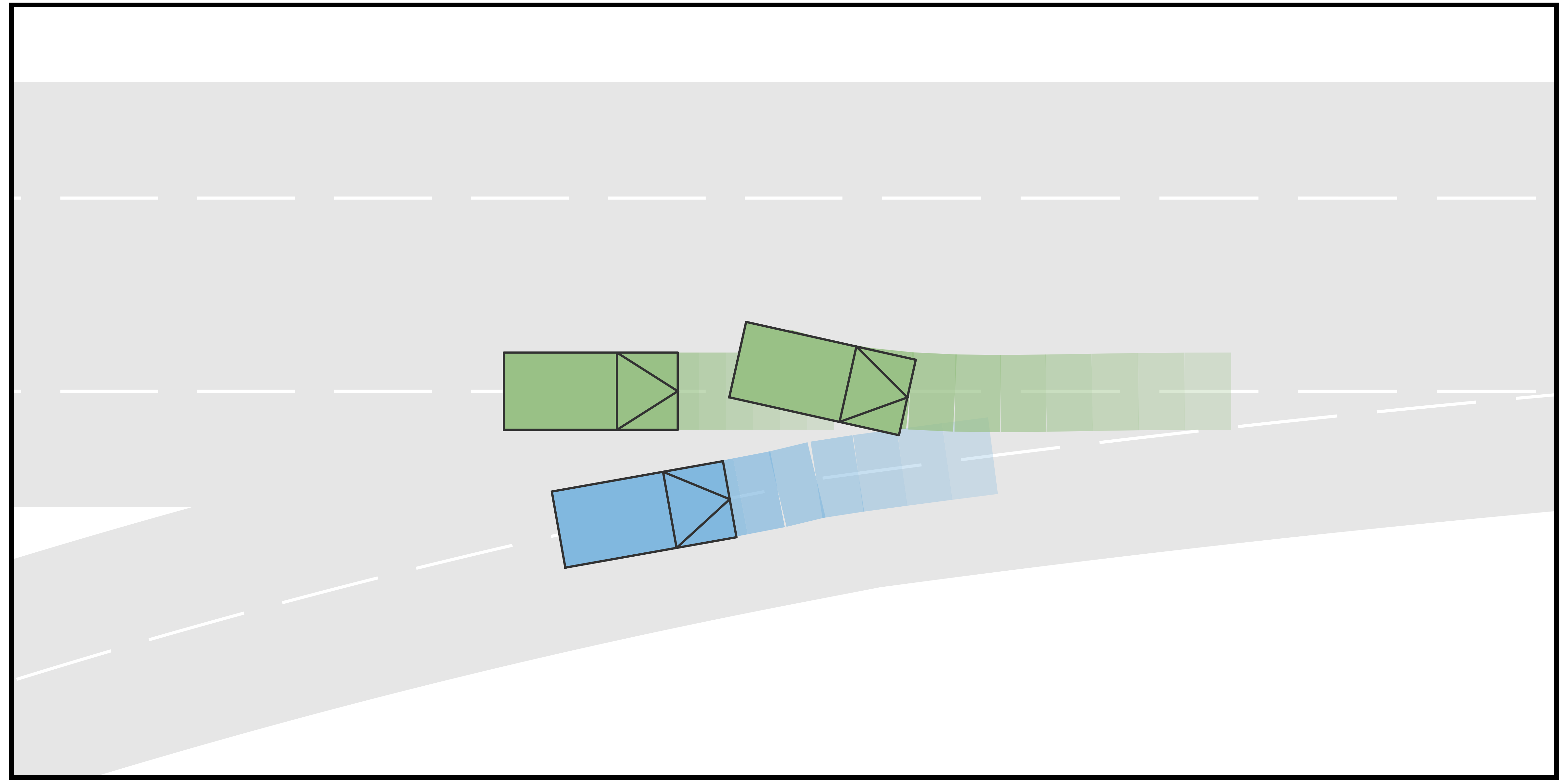}}
\subfigure[$t=1.5\,\textup{s}$]{\includegraphics[scale=0.0355]{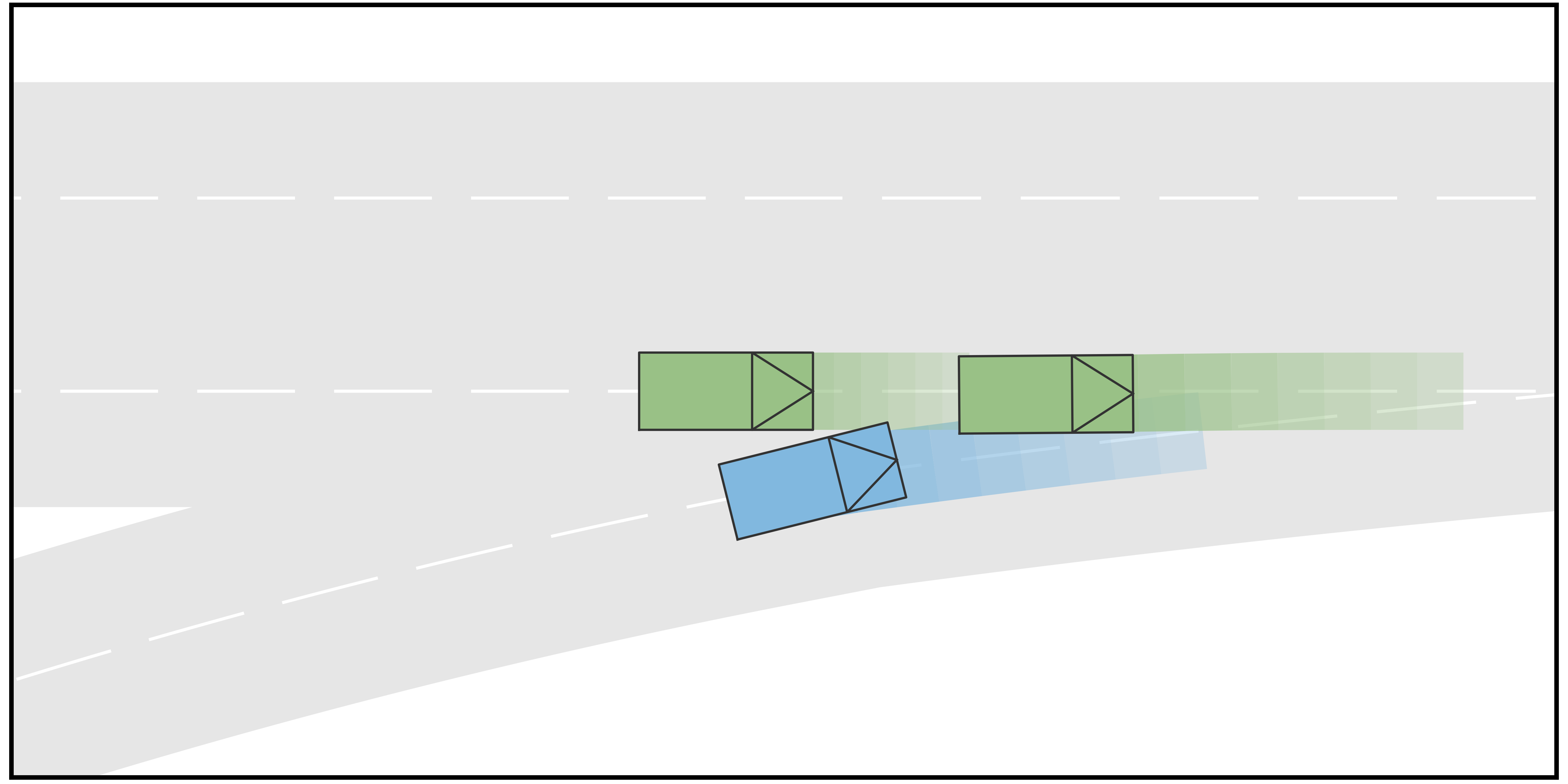}}
\subfigure[$t=2.0\,\textup{s}$]{\includegraphics[scale=0.0355]{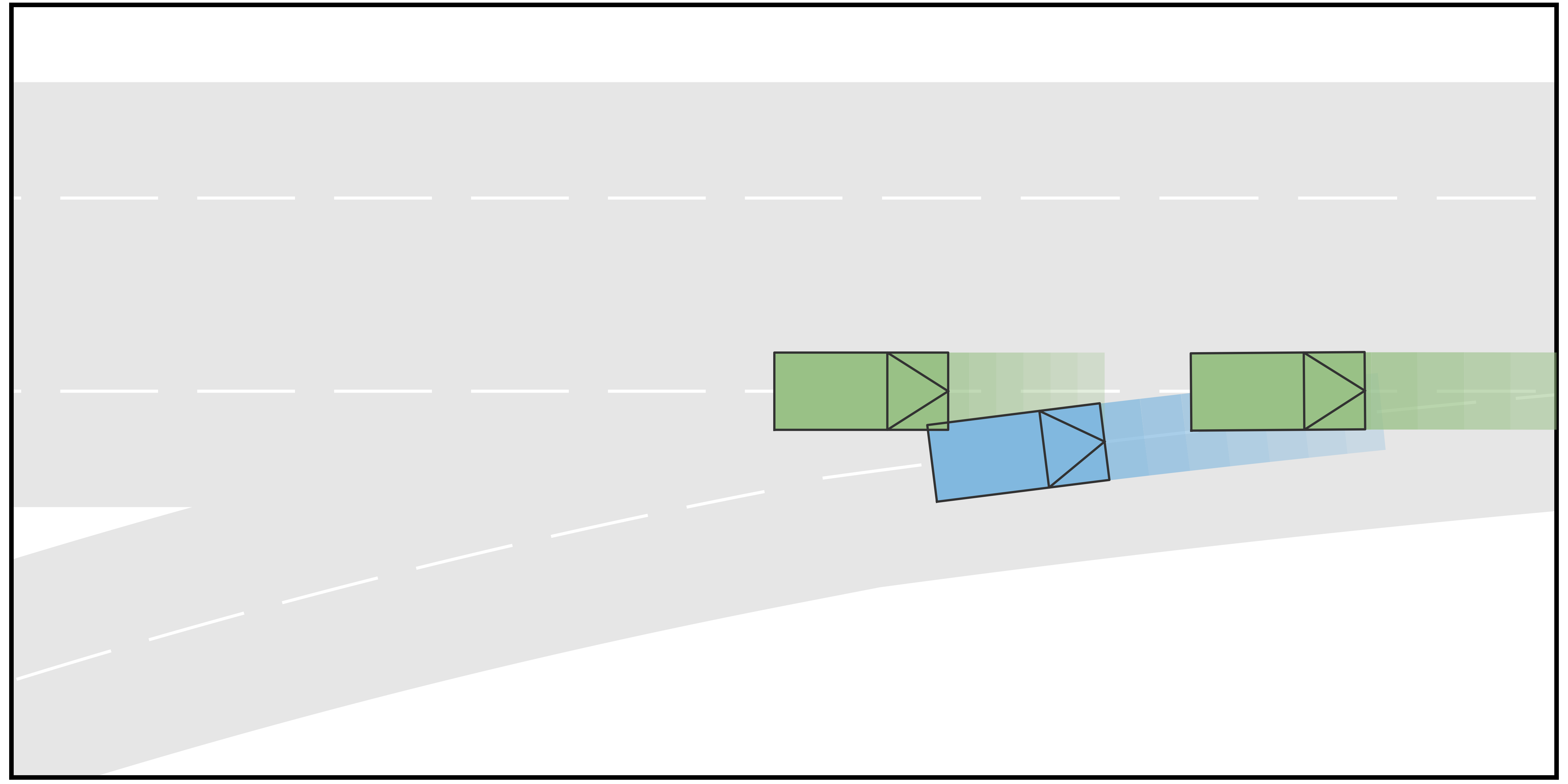}}
\subfigure[$t=2.5\,\textup{s}$]{\includegraphics[scale=0.0355]{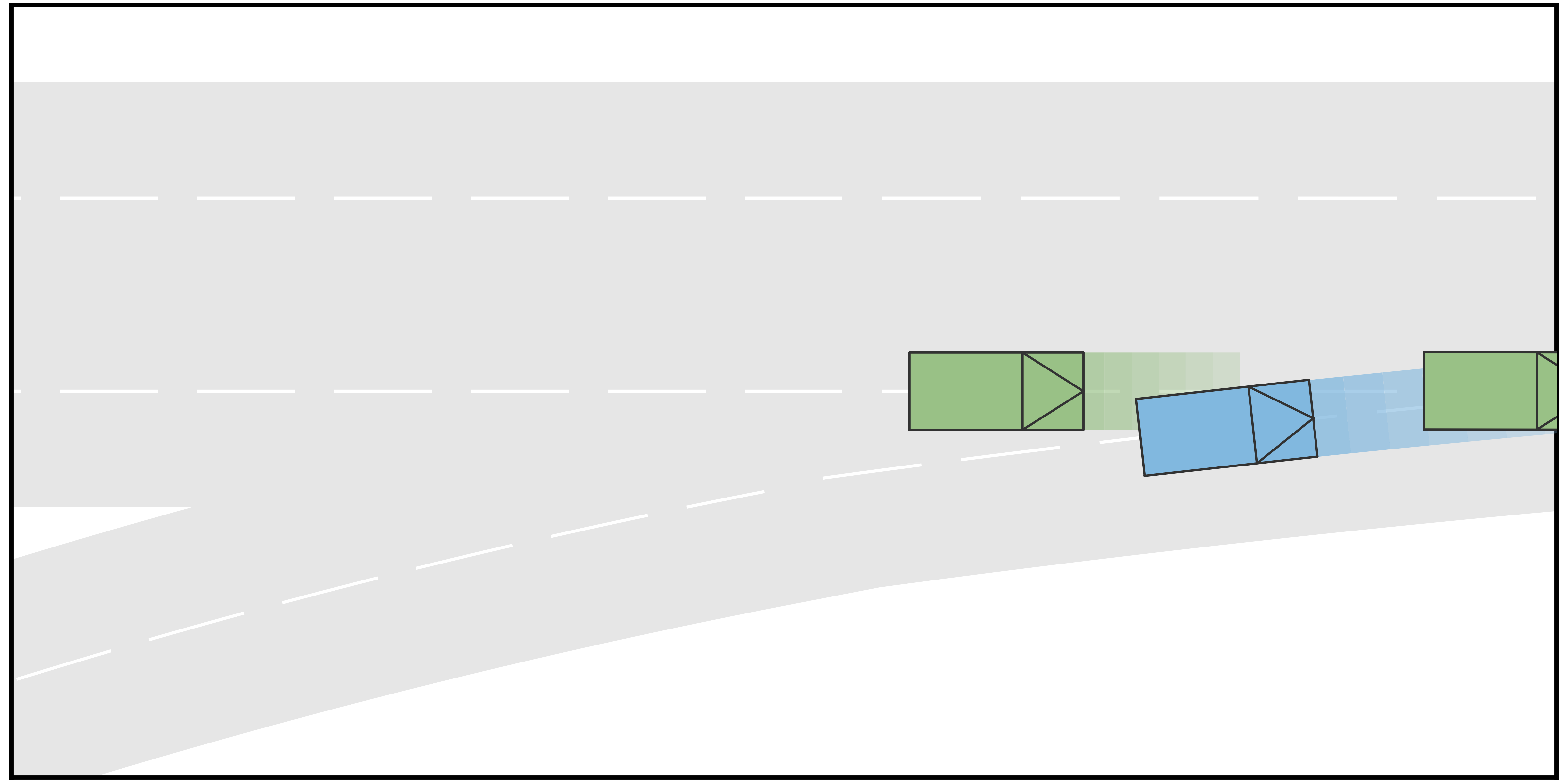}}

\caption{A failure mode of the baseline method corresponding to Scenario B, Case I. Due to ignorance of the driving modes of HVs, the AV fails to yield to the aggressive rear HV and therefore collision occurs.}

\label{fig:failure_case}
\end{figure}

To further illustrate the advantage of the proposed method, we compared it with a planning scheme provided by a standard game of complete information. Specifically, the AV is trying to play a game of complete information, in which the action space of each agent is exactly the union of the action spaces of that agent under different types, and other settings are the same as in the game of incomplete information. Meanwhile, all other HVs are still playing the game of incomplete information with determined type. Simple qualitative results are shown in Fig. \ref{fig:failure_case}. In Scenario B of the ramp-merging case, since the longitudinal position of HV1 is $2\,\textup{m}$ behind AV and the initial velocities are the same for both vehicles, a typical Nash equilibrium is that AV will merge into the main road in front of the HV1. However, this equilibrium contradicts the actual driving intention of HV1, which is an aggressive agent and intends to accelerate. Due to this discrepancy, a catastrophic result occurs such that AV collides with HV1. This example illustrates the limitations of a traditional game of complete information and demonstrates the necessity of identifying the intentions of other vehicles.

We also present a quantitative analysis to compare the performance of the proposed method and the baseline of the traditional game. Since randomness exists in Monte-Carlo sampling, we perform repeated experiments for both cases with scenario settings described in Section V-A and Section V-B. For passenger comfort, we compare the following statistics of AV: 1) the root mean square of longitudinal acceleration, 2) the average maximum longitudinal acceleration, 3) the root mean square of lateral acceleration, and 4) the average maximum lateral acceleration. For driving safety, we compare 1) the average minimum distances between AV and All of the HVs and 2) the collision rates between AV and HV. Results are shown in Table III. It can be seen that in Case I, the collision rate of the baseline is unacceptable. Although the performance in terms of passenger comfort seems to be better, this is mainly because the baseline method chooses the theoretically optimal strategy disregarding the actual driving intentions of the HVs. In Case II, the proposed method is clearly better than the baseline method in terms of both passenger comfort and driving safety. Meanwhile, we also compare the computation time of both methods. It can be seen from Table IV that the computation efficiency of the proposed method is higher than the baseline, mainly due to the smaller action space.

% Please add the following required packages to your document preamble:
% \usepackage{multirow}
\begin{table*}[t]
\centering
\caption{Comparison of performance between the proposed method and the baseline}
\begin{tabular}{cccccccc}
\hline
Case               & Method   & Avg. Max. Long. Acc.             & RMS. Long. Acc.                  & Avg. Max. Lat. Acc.              & RMS. Lat. Acc.                   & Min. Dis.           & Collision Rate \\ \hline

    \multirow{2}{*}{I} & Proposed & $5.204\,\textup{m}/\textup{s}^2$ & $2.172\,\textup{m}/\textup{s}^2$ & / & / & $\textbf{0.716}\,\textup{m}$ & $\textbf{0}/8$  \\
                   & Baseline & $\textbf{3.235}\,\textup{m}/\textup{s}^2$ & $\textbf{1.430}\,\textup{m}/\textup{s}^2$ & / & / & $0.333\,\textup{m}$ & $4/8$          \\ \hline
                   
\multirow{2}{*}{II} & Proposed & $\textbf{3.382}\,\textup{m}/\textup{s}^2$ & $\textbf{1.509}\,\textup{m}/\textup{s}^2$ & $\textbf{2.238}\,\textup{m}/\textup{s}^2$ & $\textbf{0.756}\,\textup{m}/\textup{s}^2$ & $\textbf{1.069}\,\textup{m}$ & $\textbf{0}/40$         \\
                   & Baseline & $4.953\,\textup{m}/\textup{s}^2$ & $1.915\,\textup{m}/\textup{s}^2$ & $2.887\,\textup{m}/\textup{s}^2$ & $0.914\,\textup{m}/\textup{s}^2$ & $0.547\,\textup{m}$ & $15/40$        \\ \hline
\end{tabular}
\end{table*}

% Please add the following required packages to your document preamble:
% \usepackage{multirow}
\begin{table}[]
\centering
\caption{Comparison of computation efficiency between the proposed method and the baseline}
\begin{tabular}{ccccc}
\hline
Case               & Method   & 10000 Iter. & 20000 Iter. & 50000 Iter. \\ \hline
\multirow{2}{*}{I} & Proposed & $\textbf{0.09}\,\textup{s}$ & $\textbf{0.14}\,\textup{s}$ & $\textbf{0.28}\,\textup{s}$ \\
                   & Baseline & $0.10\,\textup{s}$ & $0.15\,\textup{s}$ & $0.31\,\textup{s}$ \\ \hline
\multirow{2}{*}{II

} & Proposed & $\textbf{0.09}\,\textup{s}$ & $\textbf{0.18}\,\textup{s}$ & $\textbf{0.47}\,\textup{s}$ \\
                   & Baseline & $0.20\,\textup{s}$ & $0.35\,\textup{s}$ & $0.85\,\textup{s}$ \\ \hline
\end{tabular}
\end{table}

\section{Conclusion}

In this paper, we present an integrated decision-making and trajectory planning framework for autonomous vehicles. Based on the game of incomplete information, multimodal behaviors of human drivers are properly handled, the optimal decision is reached, and the planning is performed in a fully interactive manner. Simulation demonstrates the effectiveness of the proposed approach across multiple traffic scenarios and the improvements in terms of passengers' comfort and security over traditional game method with smaller accelerations, larger safety distances, and lower collision rates. Possible future works include modeling the correlation between the driving intentions of vehicles and integrating data-driven methods to learn the cost functions online for accurate imitation of human driving behaviors.

\bibliographystyle{IEEEtran}
\bibliography{refs}

\end{document}